%% file: main.tex
\documentclass[journal]{IEEEtran}
\usepackage{amsmath,amsfonts,amsthm,mathtools}
\usepackage[linesnumbered, ruled, vlined]{algorithm2e}
\makeatletter
\newcommand{\removelatexerror}{\let\@latex@error\@gobble}
\makeatother
\usepackage[table,dvipsnames,xcdraw]{xcolor}

\SetCommentSty{commentstyle}
\usepackage{array}
\usepackage{graphicx}
\usepackage{booktabs}
\usepackage{tabularx}
\usepackage{multirow}
\usepackage[hidelinks]{hyperref}
\hypersetup{colorlinks=true,linkcolor=blue,urlcolor=black,citecolor=blue}
\usepackage[caption=false,font=footnotesize]{subfig}
\usepackage{rotating}
\usepackage[export]{adjustbox}
\usepackage{bm}
\usepackage{cite}
\usepackage{orcidlink}
\DeclareMathAlphabet{\mathbbold}{U}{bbold}{m}{n}
\renewcommand{\IEEEQED}{\IEEEQEDopen}
\newtheorem{theorem}{Theorem}
\newtheorem{lemma}{Lemma}
\newtheorem{remark}{Remark}
\newtheorem{definition}{Definition}

\begin{document}
\title{KinematicRL: A Sim-to-Real Reinforcement Learning Framework For Social Navigation With Kinodynamic Feasibility}
\author{Zhiming~Xu$^{\orcidlink{0009-0005-3382-8474}}$,
Haodong~Yang$^{\orcidlink{0009-0007-9104-9430}}$,
Chengju~Liu$^{\orcidlink{0000-0001-7543-0855}}$,~\IEEEmembership{Member,~IEEE},
Qijun~Chen$^{\orcidlink{0000-0001-5644-1188}}$,~\IEEEmembership{Senior~Member,~IEEE},
and~Chenpeng~Yao$^{\orcidlink{0000-0002-6804-3539}}$\vspace{-5ex}%
\thanks{This work was supported by the National Natural Science Foundation of China 
under Grant 62403358, Grant 62333017 and Grant 62233013. 
\textit{(Corresponding author: Chenpeng Yao.)}}%
\thanks{Zhiming Xu is with the School of Computer Science and Technology, 
Tongji University, Shanghai 201804, China (e-mail: \href{mailto:2251804@tongji.edu.cn}{2251804@tongji.edu.cn}).}
\thanks{Haodong Yang, Qijun Chen and Chenpeng Yao are with the Department of Electronics and Information Engineering, Tongji University, Shanghai 201804, China (e-mail: \href{mailto:2230700@tongji.edu.cn}{2230700@tongji.edu.cn}; \href{mailto:qjchen@tongji.edu.cn}{qjchen@tongji.edu.cn}; \href{mailto:yaochenpeng@tongji.edu.cn}{yaochenpeng@tongji.edu.cn}).}
\thanks{Chengju Liu is with the Department of Electronics and Information 
Engineering, Tongji University, Shanghai 201804, China, and also with 
Shanghai Institute of Intelligent Science and Technology, Tongji University, 
Shanghai 201210, China (e-mail: \href{mailto:liuchengju@tongji.edu.cn}{liuchengju@tongji.edu.cn}).}
}

\markboth{IEEE TRANSACTIONS ON AUTOMATION SCIENCE AND ENGINEERING}%
{Xu \MakeLowercase{\textit{et al.}}: KinematicRL: A Sim-to-Real Reinforcement Learning Framework For Social Navigation With Kinodynamic Feasibility}

\maketitle
\input{contents/abstract.tex}
\input{contents/introduction.tex}
\input{contents/related_works.tex}
\input{contents/approach_pf.tex}
\input{contents/approach_ilqr.tex}
\input{contents/approach_clustering.tex}
\input{contents/approach_net.tex}
\input{contents/approach_drl.tex}
\input{contents/experiments.tex}
\input{contents/conclusion.tex}
\bibliographystyle{IEEEtran}
\bibliography{IEEEabrv.bib, ref.bib}

\begin{IEEEbiography}[{\includegraphics[width=1in,height=1.25in,clip,keepaspectratio]{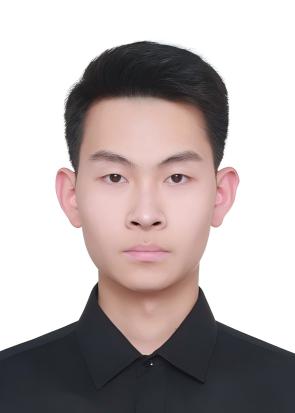}}]{Zhiming Xu} is expected to receive the B.Eng. degree in computer science from Tongji University, Shanghai, China, in 2026. He will pursue the M.S. degree in robotics with the Robotics Institute, Carnegie Mellon University, Pittsburgh, PA, USA. His research interests include deep reinforcement learning, robotics, and representation learning.
\end{IEEEbiography}

\vspace{11pt}

\begin{IEEEbiography}[{\includegraphics[width=1in,height=1.25in,clip,keepaspectratio]{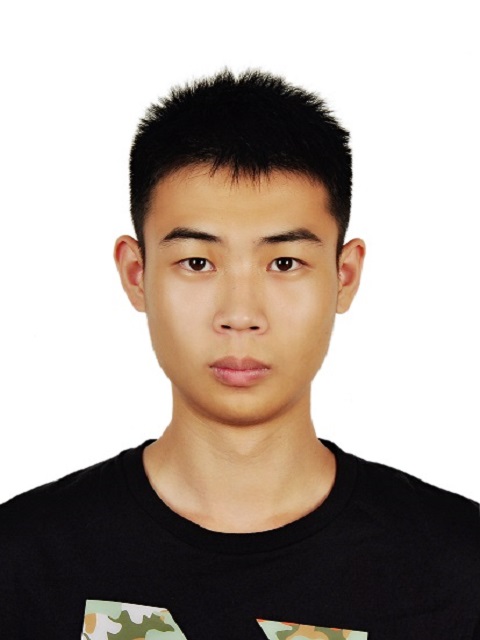}}]{Haodong Yang} received the B.S. degree in control science and engineering from Tongji University, Shanghai, China, in 2022, where he also received the M.S. degree in control science and engineering. His research interests include light detection and ranging (LiDAR) simultaneous localization, navigation of mobile robots, and deep reinforcement learning.
\end{IEEEbiography}

\vspace{11pt}

\begin{IEEEbiography}[{\includegraphics[width=1in,height=1.25in,clip,keepaspectratio]{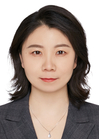}}]{Chengju Liu} (Member, IEEE) received the Ph.D. degree in control theory and control engineering from Tongji University, Shanghai, China, in 2011. From October 2011 to July 2012, she was with the BEACON Center, Michigan State University, East Lansing, MI, USA, as a Research Associate. From March 2011 to June 2013, she was a Postdoctoral Researcher with Tongji University, where she is currently a Professor with the Department of Control Science and Engineering, College of Electronics and Information Engineering, and a Chair Professor of Tongji Artificial Intelligence (Suzhou) Research Institute. She is also a Team Leader with the TJArk Robot Team, Tongji University. Her research interests include intelligent control, motion control of legged robots, and evolutionary computation.
\end{IEEEbiography}

\vspace{11pt}

\begin{IEEEbiography}[{\includegraphics[width=1in,height=1.25in,clip,keepaspectratio]{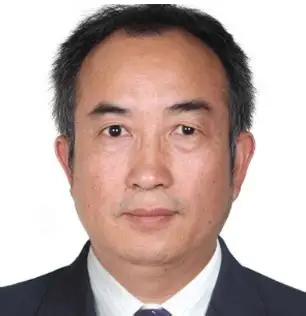}}]{Qijun Chen} (Senior Member, IEEE) received the B.S. degree in Automatic Control from Huazhong University of Science and Technology, Wuhan, China, in 1987, the M.S. degree in Control Engineering from Xi'an Jiaotong University, Xi'an China, in 1990 and the Ph.D. degree in Control Theory and Control Engineering from Tongji University, Shanghai, China,in 1999, respectively. He was a Visiting Professor with the University of California, Berkeley, CA, USA, in 2008. He is currently a Professor in the College of Electronic and Information Engineering, Tongji University, Shanghai, China. His current research interests include network-based control systems and robotics.
\end{IEEEbiography}

\vspace{11pt}

\begin{IEEEbiography}[{\includegraphics[width=1in,height=1.25in,clip,keepaspectratio]{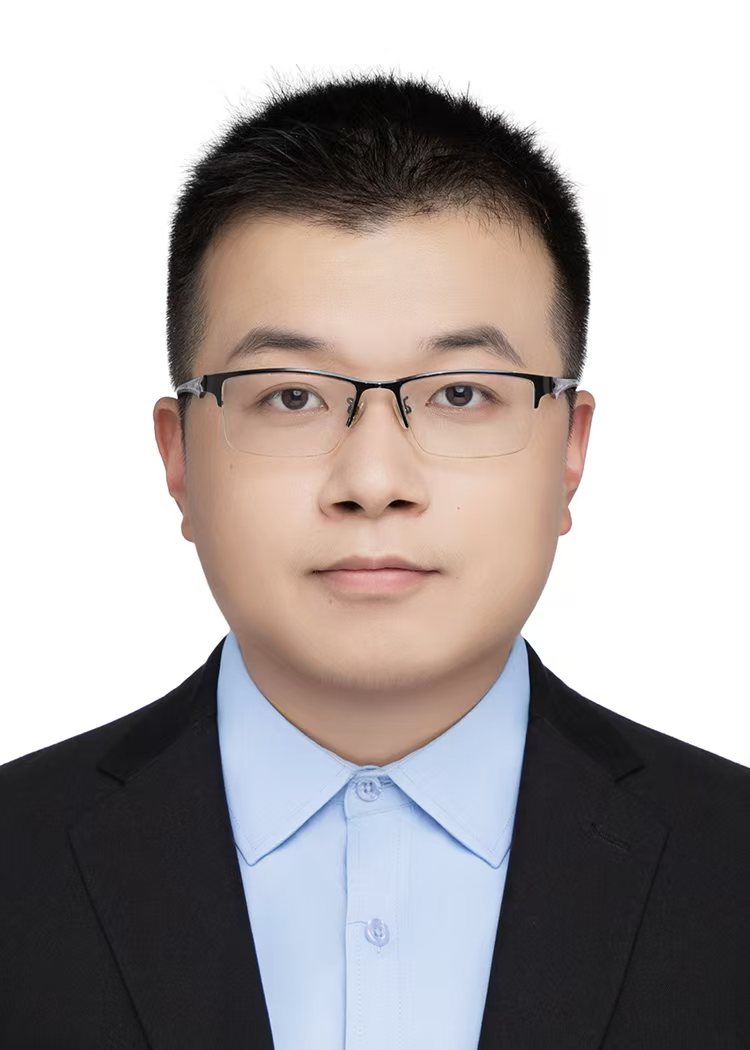}}]{Chenpeng Yao} received the Ph.D. degree in Control Theory and Control Engineering from Tongji University, Shanghai, China, in 2022. From April 2022 to September 2025, he was a Postdoctoral Researcher with Tongji University, where he is currently an Assistant Professor with the Department of Control Science and Engineering, College of Electronics and Information Engineering. His field of study is the locomotion of biped robot, navigation of mobile robot and reinforcement learning.
\end{IEEEbiography}

\vfill

\clearpage
\appendices
\input{contents/appendix/appendix_tracking_error.tex}
\input{contents/appendix/appendix_maximum_entropy.tex}
\end{document}

%% file: contents/abstract.tex
\begin{abstract}
Deep Reinforcement Learning (DRL) has shown promise for social navigation, yet its real-world deployment remains hindered by a persistent sim-to-real gap arising from simplified first-order dynamics and context-specific human state estimation pipelines. This work presents a unified framework that addresses these limitations to produce dynamically feasible navigation policies suitable for real-world deployment. First, theoretical analysis reveals that tracking error between simulated and actual robot position decays exponentially with increased control order, motivating the use of higher-order control inputs as DRL action space. A second-order control formulation tailored to differential drive robots is developed, complemented by a stochastic iterative Linear Quadratic Regulator (iLQR) that pretrains the policy via a divergence minimization objective. Second, to avoid the added system complexity of camera-LiDAR fusion, a cluster-based human tracking pipeline using only 2D LiDAR is introduced. Human detections are associated according to both spatial proximity and velocity similarity, enabling reliable differentiation of nearby pedestrians and yielding stable velocity estimates through temporal aggregation. Third, we introduce an unbiased residual gating block to balance reaction- and memory-based behaviors while handling time-varying crowd sizes, both critical for social navigation. The resulting policy, \textit{KinematicRL}, consistently improves kinematic performance and adapts to varying number of detected humans. Experiments in real-world environments demonstrate that, when combined with the proposed tracking pipeline, KinematicRL can be deployed on a real differential drive robot with minimal modifications.
\end{abstract}

\def\abstractname{Note to Practitioners}
\begin{abstract}
This work is motivated by the potential of deep reinforcement learning in social navigation and the sim-to-real gap widely witnessed during industrial applications. A comprehensive framework is proposed, ensuring dynamic feasibility in simulation, while effectively providing robust position and velocity estimates from 2D LiDAR data. By incorporating second-order control and warm-starting the policy with a stochastic iterative Linear Quadratic Regulator, our approach ensures dynamic feasibility and enhances kinematic performance for differential drive robots. The proposed cluster-based tracking pipeline relies solely on 2D LiDAR, utilizing online clustering with velocity similarity for robust matching, and historical averaging for smooth velocity estimation. Additionally, a transformer architecture is proposed to handle varying number of detected humans resulting from sensor limitations. Real-world experiments further suggest that a relatively high control rate effectively mitigates jerk-related approximation errors in second-order models, enabling the policy to be directly transferred to a real physical robot with minimal modifications. The proposed methods are computationally efficient and require only standard 2D LiDAR sensors, making them accessible for real-world deployment in industrial and service robotics settings.
\end{abstract}

\begin{IEEEkeywords}
Social navigation, sim-to-real, dynamic feasibility, deep reinforcement learning, human tracking
\end{IEEEkeywords}

%% file: contents/introduction.tex
\section{Introduction}
\IEEEPARstart{S}{ocial} navigation involves the movement of mobile robots in pedestrian-rich environments (e.g., airports, hospitals), with key challenges arising from high density dynamic obstacles and uncertainties in human behavior. Deep Reinforcement Learning (DRL) yields the 
potential to model Human-Robot Interactions (HRI) through trial and error, enabling robots to navigate socially and cooperatively in crowded scenarios~\cite{cadrl},~\cite{sa-cadrl},~\cite{lstm-rl},~\cite{rl-with-pedestrian-prediction}. Most DRL-based crowd navigation methods are trained entirely in simulators, where they rely on simplified first-order dynamics and idealized agent-level information, and aim to transfer trained models to real robots with minimal modifications~\cite{rl-with-pedestrian-prediction},~\cite{sarl},~\cite{RMRL}. However, these assumptions rarely hold in practice, thus creating a significant sim-to-real gap: real robots cannot execute the instantaneous velocity changes implied by first-order models, and human states must be estimated from sensor measurements.
\par Despite extensive efforts to narrow the sim-to-real gap in social navigation, several key limitations persist. (1) Existing methods that enforce dynamically feasible velocity commands either (i) generate subgoals for a separate model-based controller to track~\cite{spline-LQR-waypoint},~\cite{rl-subgoal-for-mpc},~\cite{tracking-goal-with-low-level-controller} or (ii) constrain the action space to feasible actions~\cite{wheel-acceleration-for-action-space},~\cite{dwa-rl-dynamic-feasibility},~\cite{st-graph-and-kinematics},~\cite{cbf-to-project-rl-actions}. These approaches either require the design of an extra controller, or restrict the action space to handcrafted, platform-specific feasible commands. (2) Safety-critical social navigation requires accurate distance measurements from LiDAR, yet most human-state estimation pipelines rely on LiDAR-camera fusion~\cite{cadrl},~\cite{lstm-rl}, which introduces calibration drift and higher system complexity. (3) The number of detected humans varies over time due to sensor range limits and occlusions. While recent works employ attention mechanisms to process variable length inputs~\cite{rl-with-pedestrian-prediction},~\cite{st-graph-and-kinematics}, experiments reveal that the standard transformer architecture exhibits optimization instability in social navigation settings, particularly when trained with padded inputs.
\par This article addresses the sim-to-real gap in social navigation by targeting the three limitations outlined above. For problem (1), we provide theoretical analysis showing that tracking error between simulated and real-world positions decays exponentially as control order increases, motivating a unified strategy for enforcing dynamic feasibility in DRL---using higher-order control inputs as the action space. In practice, second-order control is shown to be a natural fit for differential drive robots, and a stochastic iterative Linear Quadratic Regulator (iLQR) tailored for social navigation contexts is introduced to pretrain the second-order navigation policy via a divergence minimization objective. To address shortcoming (2), we propose a cluster-based tracking pipeline using only 2D LiDAR. Human positions are detected using Li2Former~\cite{li2former}, after which a data association procedure groups points using a distance metric incorporating both spatial and velocity similarity, enabling reliable differentiation of nearby humans exhibiting distinct temporal behaviors. Each human is represented as its cluster centroid, which provides stable position and velocity estimates by aggregating historical cluster observations. For problem (3), we utilize Gated Recurrent Units (GRUs) to stabilize transformer in social navigation by balancing the residual and attention stream. Through modifications on both architectural design and training methodology, a gated spatio-temporal transformer is introduced, yielding robustness to variable crowd sizes and sensing noise.
\par Putting these together, second-order control inputs form the action space, with the proposed gated spatio-temporal transformer serving as the feature extractor. Stochastic iLQR is employed to pretrain the corresponding second-order policy, which is then further trained with reinforcement learning. The resulting policy, \textit{KinematicRL}, outperforms baseline models across all kinematic metrics. When integrated with the proposed LiDAR-based tracking pipeline, KinematicRL can be directly transferred to a real differential drive robot.
\par In summary, our main contributions are four-fold: 
\begin{enumerate}
    \item \textit{Higher-Order Control For Feasibility}: Provide theoretical justification for using higher-order control to reduce tracking error, and introduce a practical second-order input formulation that enforces dynamic feasibility for differential drive robots. We further design a stochastic iLQR in social navigation settings to effectively pretrain the second-order navigation policy.
    \item \textit{Cluster-Based Tracking Pipeline}: Propose a cluster-based human state estimation pipeline utilizing raw 2D point clouds to generate accurate positions and smooth velocity predictions suitable for real-world deployment.
    \item \textit{Gated Transformer For Crowd Adaptation}: Design a gated transformer architecture utilizing unbiased residual gating and extensive domain randomization to achieve robust generalization under varying crowd densities.
    \item \textit{Sim-to-Real Validation}: Extensive validation across simulation and real-world environments. User-friendly ROS nodes for tracking and planning are made publicly available to facilitate future research.
\end{enumerate}
\par The rest of the paper is organized as follows. Section~\ref{sec:related-works} reviews related work. Section~\ref{sec:approach} details the problem formulation, dynamic feasibility analysis, stochastic iLQR design, tracking pipeline, and the proposed network architecture. Section~\ref{sec:experiments} presents experiments in both simulation and real-world settings. Finally, Section~\ref{sec:conclusion} concludes the paper and outlines directions for future work.

%% file: contents/related_works.tex
\section{Related Work}\label{sec:related-works}
\begin{figure*}[!ht]
    \centering
    \includegraphics[width=1.0\textwidth]{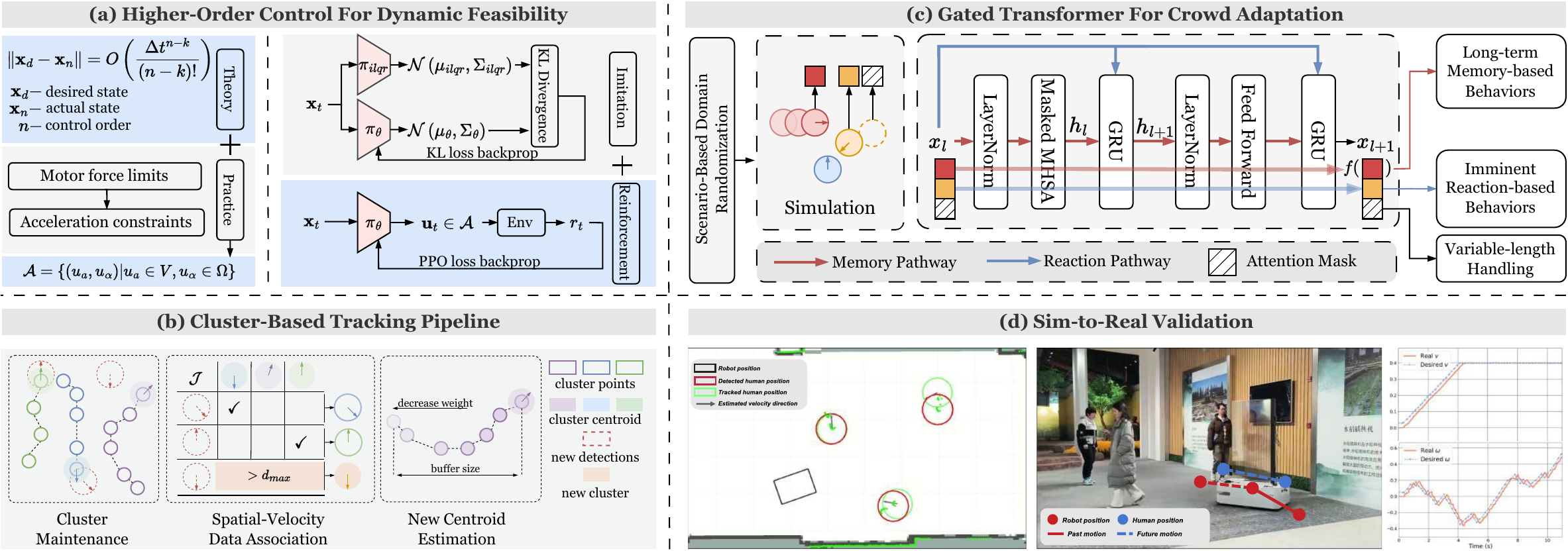}
    \caption{Overview of the proposed framework. (a) Higher-order control for dynamically feasible social navigation. A second-order control formulation is adopted as the action space, motivated by both theoretical analysis and practical considerations. A stochastic iLQR is designed to generate expert demonstrations and pretrain the second-order policy via a divergence minimization objective, after which the policy is further optimized using reinforcement learning. (b) Cluster-based tracking pipeline using only 2D LiDAR. Humans are viewed as centroids for their corresponding clusters (cluster maintenance). New detections are assigned to existing clusters (or new clusters are created) based on both spatial proximity and velocity similarity to the centroid (spatial-velocity data association). Cluster centroid velocities are estimated via historical averaging of detections stored in the buffer (new centroid estimation). (c) Gated spatio-temporal transformer architecture. The attention stream captures memory-based behaviors, while the residual stream supports reactive responses. Modified residual connections are introduced to improve stability under variable-length inputs arising from time-varying crowd sizes. (d) When integrated with the proposed tracking pipeline, the \textit{KinematicRL} policy demonstrates proactive collision avoidance and low-latency control in pedestrian-rich environments.}
    \label{fig:whole-process}
\end{figure*}
\subsection{Dynamic Feasibility in Social Navigation}
\par Geometric approaches for dynamic obstacle avoidance, such as Reciprocal Velocity Obstacle (RVO)~\cite{RVO}, Optimal Reciprocal Collision Aviodance (ORCA)~\cite{orca} and NonHolonomic-ORCA (NH-ORCA)~\cite{nh-orca}, reason over relative velocities of the robot and surrounding agents to avoid collisions; however, they are not well-suited for incorporating social distance constraints. The optimization-based formulation of Control Barrier Functions (CBFs) facilitates such integration, but recent methods tend to trigger avoidance maneuvers even when obstacle are distant~\cite{cbf-with-vo},~\cite{cbf-vo-nonholonomic}. In contrast, DRL-based methods hold stronger potential to model human-robot interactions through trial-and-error~\cite{cadrl},~\cite{sa-cadrl}. Nevertheless, most existing approaches neglect dynamic feasibility constraints and provide no guarantees that the generated velocities can be accurately tracked by real robots. Reliable tracking of commanded actions is critical for both safety and task success~\cite{rl-with-pedestrian-prediction},~\cite{sarl}, making it as important as action generation itself.
\par Recent efforts to encourage dynamic feasibility can be broadly categorized into three directions. (1) \emph{Incorporating model-based control:} These approaches employ model-based controllers to generate dynamically feasible actions, while training an RL policy to produce intermediate subgoals~\cite{spline-LQR-waypoint},~\cite{rl-subgoal-for-mpc},~\cite{tracking-goal-with-low-level-controller}. For example,~\cite{rl-subgoal-for-mpc} integrates Model Predictive Control (MPC) to track subgoals generated by a DRL policy. However, this approach incurs substantial computational overhead due to repeatedly solving non-convex optimization problems during training. (2) \emph{Action reformulation:} Another line of work enforces kinematic feasibility through action space design or post-hoc projection. This includes directly constraining the action space to feasible commands~\cite{wheel-acceleration-for-action-space},~\cite{dwa-rl-dynamic-feasibility},~\cite{st-graph-and-kinematics}, or projecting RL-generated actions into feasible sets via optimization~\cite{cbf-to-project-rl-actions, rl-weight-dwa}. For instance, Patel \textit{et al.}~\cite{dwa-rl-dynamic-feasibility} restrict the action space to a discrete set of dynamically reachable velocities. Similarly,~\cite{cbf-to-project-rl-actions} employs CBFs to project RL actions into a safe and feasible set. However, both discretization and post-hoc projection constrain the expressivity of the action space. (3) \emph{Reward shaping for feasibility:} A third direction encourages feasible behavior through reward design, such as incorporating regularization terms that penalize infeasible actions~\cite{RMRL},~\cite{soadrl},~\cite{reward-feasible-mobile-manipulation}. However, such methods typically require careful tuning and lack formal guarantees.
\par In summary, existing approaches either rely on auxiliary model-based controllers or impose handcrafted, platform-specific constraints on the action space. In contrast, this work proposes a unified framework that enforces dynamic feasibility directly within the learning formulation.

\subsection{Human Tracking Pipelines}
\par Safety-critical social navigation requires accurate distance measurements, for which 2D LiDAR has emerged as a practical choice for indoor scenarios. Most advancements in social 
navigation disentangle decision-making and perception~\cite{cadrl},~\cite{sa-cadrl},~\cite{rl-with-pedestrian-prediction},~\cite{sarl},~\cite{RMRL}, assuming complete awareness of human positions and velocities in simulation, thus necessitating a pedestrian tracking pipeline for real-world deployment. Existing tracking pipelines predominantly rely on context-specific LiDAR-camera fusion for human state estimation~\cite{cadrl},~\cite{lstm-rl}, which introduces calibration drift and additional system complexity. While broadly applicable, LiDAR-only pipelines often suffer from instability when humans are spatially close~\cite{rl-with-pedestrian-prediction},~\cite{RMRL}. Some approaches further depend on motion capture systems during deployment~\cite{wheel-acceleration-for-action-space}, limiting scalability and applicability.
\par For LiDAR-based position detection, DR-SPAAM~\cite{dr-spaam-detector} integrates historical time frames to achieve high recall rate, but often misidentifies background objects as humans. Yang \textit{et al.} identified this limitation and introduced Li2Former which achieves state-of-the-art precision when detecting human positions from 2D LiDAR point clouds~\cite{li2former}. For subsequent data association and tracking, SORT~\cite{SORT-iou-data-association} employs Intersection-over-Union (IoU) as the matching criterion, without assessing similarity across time dimension, leading to frequent identity switches. DeepSORT~\cite{DeepSORT-mahalanobis-data-association} extends this framework by incorporating Mahalanobis distance between state estimates, but its reliance on matrix inversion can introduce numerical instability in practice. Recognizing these limitations in existing tracking pipelines, we build upon recent advances in LiDAR-based human position detection to develop a tracking pipeline using only 2D LiDAR.

\subsection{Transformers in Social Navigation}
\par Social navigation requires capturing complex, non-linear interactions among agents whose number and behaviors vary over time, resulting in a history-dependent optimal policy~\cite{understanding-domain-randomization}. While Recurrent Neural Networks (RNNs) are commonly employed to provide memory to agents~\cite{lstm-rl},~\cite{rl-subgoal-for-mpc}, they struggle with long-term dependencies and forgetting, moreover their sequential processing mechanisms limit parallelization.
\par Transformers~\cite{transformer} utilize the attention mechanism to model pairwise social interactions and determine the relative importance of neighboring agents~\cite{rl-with-pedestrian-prediction},~\cite{st2}. Despite their expressivity, transformers face training challenges in reinforcement learning, where data non-stationarity amplify optimization instability~\cite{gru-transformers},~\cite{attention-for-meta-rl},~\cite{tfixup-initialization}. Parisotto \textit{et al.}~\cite{gru-transformers} address this issue with the Gated Transformer-XL (GTrXL), which employs GRUs to modulate the residual and attention streams, demonstrating superior performance over RNNs on long-horizon memory tasks. Chen \textit{et al.}~\cite{sarl} replace self-attention with global attention to reduce training complexity, but this loss of fidelity can degrade performance in dense crowds. Another stream of work applies attention mechanisms over explicitly constructed interaction graphs~\cite{rl-with-pedestrian-prediction},~\cite{wheel-acceleration-for-action-space},~\cite{st-graph-and-kinematics}. While graph-based formulations encode structured inductive biases, they are sensitive to noisy state estimates that change graph topology. To further facilitate the use of transformers in social navigation, we design a gated spatio-temporal transformer architecture that explicitly stabilizes training under noisy state estimates and time-varying crowd sizes.

%% file: contents/approach_pf.tex
\section{Approach}\label{sec:approach}
\subsection{Problem Formulation}\label{sec:problem-formulation}
\par We formulate the social navigation task as a Partially Observable Markov Decision Process (POMDP), defined by tuple $\mathcal{M}=\left\langle\mathcal{S},\mathcal{A},\mathcal{O},\mathcal{T},\mathcal{R},\Omega,\gamma\right\rangle$. Here, $\mathcal{S}$ represents the state space and $\mathcal{A}$ denotes the action space. The transition function $\mathcal{T}:\mathcal{S}\times\mathcal{A}\times\mathcal{S}\mapsto [0,1]$ defines the environment dynamics, including both the robot's kinematics and the crowd's evolution. $\mathcal{R}:\mathcal{S}\times\mathcal{A}\mapsto\mathbb{R}$ is the reward function.
\par Crucially, the robot does not have access to the full states $s_t$ of other dynamic agents. Instead, it receives an observation $o_t\in\mathcal{O}$ governed by the observation probability $\Omega (o_t|s_t)$. The objective of deep reinforcement learning in this context is to learn a policy $\pi:\mathcal{O}\mapsto\mathcal{A}$ that maximizes the discounted cumulative reward while satisfying collision avoidance and task completion constraints:
\begin{equation}
    \begin{aligned}
        \pi^* &= \arg\max_{\pi} \sum_{t=0}^{T} 
        \mathbb{E}_{\substack{
            o_t \sim \Omega(\cdot \mid s_t),a_t \sim \pi(\cdot \mid o_t) \\
            s_{t+1} \sim \mathcal{T}(\cdot \mid s_t, a_t)
        }} \left[ \gamma^t \, r(s_t, a_t) \right] \\
        \operatorname{s.t.} 
        & \ \mathcal{C}(\tau_{\pi^*})\cap\mathcal{C}(\tau _i)=\emptyset \ (i=1,2\ldots,H) \\
        & \ s_0=s_{start}, \ s_{T+1}=s_{goal}
    \end{aligned}
    \label{eq:objective-problem-formulation}
\end{equation}
here $\tau_{i}$ and $\tau_{\pi^*}$ denotes the trajectory of human $i$ and the trajectory induced by the optimal policy, respectively. $\mathcal{C}(\tau)$ denotes the space occupied by trajectory $\tau$.
\par When deploying the optimized policy $\pi^*$ in the real world, three key sim-to-real challenges arise. First, $\pi^*$ is often trained under idealized simulated dynamics that assume instantaneous tracking of control commands, whereas in practice, actuator latency causes the executed trajectory to deviate from the simulated one. Second, the number of detected humans $H$ is time-varying due to occlusion and sensor limitation, requiring a policy that adapts to crowds of variable densities. Third, human states (positions and velocities) are not directly observable and must be inferred from raw sensor measurements.
\par To address these issues, as summarized in Fig.~\ref{fig:whole-process}, we begin by establishing a theoretical result showing that tracking error decays exponentially as control order $n$ increases, and further propose using second-order actions as input space for differential drive robots to ensure dynamic feasibility. We then introduce an unbiased residual gating block that balance reaction- and memory-based behaviors, and further enable robust handling of variable numbers of humans through a modified residual connection. Finally, we develop a perception-tracking pipeline that provides deployable estimates of human states. The following sections elaborate on each component.


%% file: contents/approach_ilqr.tex
\subsection{Higher-Order Control For Dynamic Feasibility}\label{subsec:second-order-control}
\par For a general robotic system, dynamic feasibility means that a trajectory must satisfy dynamic constraints and actuator limits. Formally, $\forall t\in[1,T]$, $\exists \mathbf{u}\in\mathcal{U}$ such that $\dot{\mathbf{x}}=f(\mathbf{x},\mathbf{u})$. To better quantify this ideology, we introduce the following concept of order-$n$ control.
\begin{definition}\label{def:order-n-control}
Order-$n$ Control refers to the setting where state vector $\mathbf{x}$ contains derivatives up to order $n-1$, control input $\mathbf{u}$ is the $n$-th derivative, and the $n+1$-th derivative is modeled as a Dirac delta impulse:
\begin{equation}
    \begin{aligned}
        \mathbf{x}(t)&=\left[x^{(0)}(t),\ldots,x^{(n-1)}(t)\right]^{\top} \\ 
        \mathbf{u}(t)&=x^{(n)}(t) \\
        x^{(n+1)}(t)&=\delta (t)
    \end{aligned}
\end{equation}
\end{definition}
This formulation is commonly adopted in discrete-time simulators, where the assumption that $x^{(n+1)}(t)$ is a Dirac delta implies that control inputs can be tracked instantaneously. In practice, this makes $\mathbf{u}(t)$ piecewise constant, an unrealistic approximation for real robots whose actuators exhibit non-negligible latency and cannot track abrupt command changes. The following theorem states that this inconsistency can be effectively bridged as control order $n$ increases.
\begin{theorem}\label{theorem:order-n-tracking-error}
Let control input $\mathbf{u}(t)=x^{(n)}(t)$ be piecewise constant with bounded step changes $\left\lVert \Delta \mathbf{u}\right\rVert <\infty$ at sampling period. Then tracking error for $x^{(k)}(t)$ over time step $\Delta t$ is $O\left(\frac{\Delta t^{n-k}}{(n-k)!}\right)$.
\end{theorem}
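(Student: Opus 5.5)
We compare two trajectories over a single sampling interval $[t_0,t_0+\Delta t]$ that share the same initial state $\mathbf{x}(t_0)=[x^{(0)}(t_0),\ldots,x^{(n-1)}(t_0)]^\top$. The first is the simulated trajectory, in which $x^{(n)}$ jumps instantaneously to the commanded value $\mathbf{u}_{k}$, as implied by the Dirac-delta assumption of Definition~\ref{def:order-n-control}. The second is the real trajectory, in which the actuator drives $x^{(n)}$ from $\mathbf{u}_{k-1}$ toward $\mathbf{u}_k$ with some finite, latency-limited profile. The only place the two trajectories differ is therefore the $n$-th derivative. Define the discrepancy
\begin{equation*}
e^{(n)}(t)=x^{(n)}_{\mathrm{real}}(t)-x^{(n)}_{\mathrm{sim}}(t).
\end{equation*}
Our plan is to bound this discrepancy first and then push the bound down through repeated integration to each lower-order derivative.

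\textbf{Bounding the top derivative.} By the hypothesis of bounded step changes, and assuming the real actuator response stays between the old and new commands (or, more generally, within a bounded envelope), we have
\begin{equation*}
\sup_{t\in[t_0,t_0+\Delta t]}\lVert e^{(n)}(t)\rVert\le M,
\end{equation*}
where $M$ is a constant proportional to $\lVert\Delta\mathbf{u}\rVert$ and independent of $\Delta t$.

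\textbf{Integrating down.} Both trajectories start from the same state, so $e^{(j)}(t_0)=0$ for every $j\le n-1$. For any $k<n$, Taylor's theorem with integral remainder, equivalently the Cauchy formula for repeated integration, gives
\begin{equation*}
e^{(k)}(t_0+\tau)=\int_{t_0}^{t_0+\tau}\frac{(t_0+\tau-s)^{n-k-1}}{(n-k-1)!}\,e^{(n)}(s)\,ds .
\end{equation*}
Taking norms and using the bound on $e^{(n)}$ yields
\begin{equation*}
\lVert e^{(k)}(t_0+\tau)\rVert\le M\frac{\tau^{n-k}}{(n-k)!}.
\end{equation*}
Setting $\tau=\Delta t$ gives the claimed $O\!\left(\frac{\Delta t^{n-k}}{(n-k)!}\right)$. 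The factorial comes from exactly integrating the polynomial kernel, and the exponent $n-k$ is the number of integrations between the control level and the level of interest. In the boundary case $k=n$ the bound reduces to $O(1)$, which reflects that the control signal itself is tracked only up to a bounded error.

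\textbf{Main obstacle.} The delicate step is justifying that $M$ is uniform, i.e.\ independent of $\Delta t$ and of $n$. This requires a modelling assumption about the real actuator: its $n$-th derivative must remain within a bounded neighbourhood of the commanded piecewise-constant signal. We would state this explicitly as the interpretation of ``bounded step changes.'' With it, the integral estimate is routine.

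We would also remark on what the bound does and does not say. It is a per-step estimate. It decays super-exponentially in $n$ for fixed $\Delta t<1$, since $\Delta t^{m}/m!$ decreases faster than any geometric sequence. Errors accumulated across steps are not handled by this bound; controlling them would need an additional stability or feedback assumption, which lies outside the theorem.
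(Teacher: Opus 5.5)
Your proof is correct and follows essentially the same route as the paper. The paper writes the error as a chain of integrators $\dot{\mathbf{e}}=\mathbf{A}_n\mathbf{e}+\mathbf{B}_n(\delta_u+d)$ and uses the nilpotency of $\mathbf{A}_n$ to evaluate $e^{\mathbf{A}_n\tau}\mathbf{B}_n$, which yields exactly the Cauchy repeated-integration kernel you invoke directly; the only difference is that the paper also carries a bounded disturbance $d(t)$ with $\lVert d\rVert\le D_{max}$, so its constant is $\sup\lVert\Delta\mathbf{u}\rVert+D_{max}$ rather than your $M$.
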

The proof is presented in Appendix A. Intuitively, since any error in $x^{(n)}$ is integrated $n-k$ times before affecting $x^{(k)}$, its impact shrinks to $O(\Delta t^{n-k}/(n-k)!)$ according to Taylor expansion. Tracking error decreases at exponential speed according to the theorem, while dimension of state space increases linearly, demanding exponentially more data as stated by the \emph{Curse of Dimensionality}. Below we leverage these analysis to derive a practical method to reduce tracking error while satisfying dynamic feasibility.
\par While in principle the method below applies to any robotic systems, we focus on differential drive robot due to its simplicity and widespread use in navigation tasks. For a first-order differential drive robot, state $\mathbf{x}=[x,y,\theta]^{\top}$, control input $\mathbf{u}=[v,\omega]^{\top}$. The kinematics equations are:
\begin{equation}
    \begin{aligned}
    \dot{x}&=v\cos\theta & \dot{y}&=v\sin\theta & \dot{\theta}=\omega
    \end{aligned}
\end{equation}
with the underlying dynamics equations being:
\begin{equation}
    \begin{aligned}
    \dot{v}&=\frac{1}{m}(F_r+F_l) & \dot{\omega}&=\frac{r}{I}(F_r-F_l)
    \end{aligned}
\end{equation}
where $F_r$, $F_l$ are wheel forces, $m$ is robot mass, $I$ is the moment of inertia and $r$ the half-width of wheelbase. The key insight is that actuator limits $F_{min}\leq F_r,F_l\leq F_{max}$ map directly to acceleration constraints:
\begin{equation}
    \begin{aligned}
    a_{min}&\leq \dot{v}\leq a_{max} & \alpha_{min}\leq \dot{\omega}&\leq \alpha_{max}
    \end{aligned}
\end{equation}
In discrete time with sampling period $\Delta t$, dynamically feasible actions must satisfy:
\begin{equation}
    \begin{aligned}
    |v_{k+1} - v_k| &\leq a_{\max} \Delta t & |\omega_{k+1} - \omega_k| &\leq \alpha_{\max} \Delta t
    \end{aligned}
    \label{eq:acceleration-constraints}
\end{equation}
Using second-order actions as input space therefore provide a practical sweet spot: it enforces acceleration constraints (Eq.~\eqref{eq:acceleration-constraints}) naturally, achieves substantially lower tracking error compared to its first-order counterpart according to Theorem~\ref{theorem:order-n-tracking-error}, while remaining computationally tractable for real-time learning and planning. This choice facilitates the derivation of second-order unicycle models~\cite{differential-drive-dynamics}:
\begin{equation}
    \begin{aligned}
        \dot{x}&=v\cos\theta & \dot{y}&=v\sin\theta & \dot{\theta}&=\omega \\
        \dot{v}&=u_{a} & \dot{\omega}&=u_{\alpha}
    \end{aligned}
\end{equation}
where $u_a(\tau)$ and $u_{\alpha}(\tau)$ separately denote linear and angular accelerations at time $\tau$. When accelerations are bounded for all timesteps and further have a limited set of discontinuities, they are Riemann integrable. Integrating over accelerations delivers continuous linear and angular velocities:
\begin{equation}
    v(t) =v_0+ \int_{0}^{t}u_{a}(\tau)  \,d\tau \quad
    \omega (t) =\omega_0+\int_{0}^{t} u_{\alpha}(\tau)  \,d\tau 
    \label{eq:integrating-over-accelerations}
\end{equation}
which provides the foundation for smooth trajectories.
\par In summary, we propose to use linear and angular accelerations $(u_a,u_{\alpha})$, bounded within the feasible region, as the action space:
\begin{equation}
\mathcal{A}=\{(u_a,u_{\alpha})|u_a\in V,u_{\alpha}\in \Omega\}
\label{eq:action-space}
\end{equation}
where $V=[a_{min},a_{max}],\Omega=[\alpha_{min},\alpha_{max}]$. 

\subsection{Imitation Learning With Stochastic iLQR}\label{subsec:imitation-with-ilqr}
\par Higher-order policies are inherently more difficult to train with reinforcement learning due to their increased state dimensionality and longer temporal credit assignment. Rather than relying on intricate training curricula, we initialize the policy via imitation learning. This would require an expert producing the same order of control signals. Trajectory optimization emerges as a suitable candidate, accounting for different model dynamics, while outputting desired order of actions. We adopt a general formulation known as \emph{Maximum Entropy Control}~\cite{maximum-entropy-control}, which augments the standard trajectory optimization objective with an entropy term that encourages exploration and smooth policies:
\begin{equation}
    \begin{aligned}
    \min _{\mathbf{u}_1,\cdots,\mathbf{u}_{T}} \ & \sum_{t=1}^{T} \mathbb{E}_{\pi}
    \left[\ell \left(\mathbf{x}_t, \mathbf{u}_t\right)\right]-\mathcal{H}
    \left(\pi\left(\mathbf{u}_t|\mathbf{x}_t\right)\right) \\
    \operatorname{s.t.} \ & \mathbf{x}_{t}=f\left(\mathbf{x}_{t-1}, \mathbf{u}_{t-1}\right)
    \end{aligned}
    \label{eq:maximum-entropy-control}
\end{equation}
here $\ell$, $f$ denotes cost and dynamics function, $\mathbf{x}_t$, $\mathbf{u}_t$ denotes state and action at time $t$, respectively. We use this notation throughout this subsection, and use subscripts to denote derivatives, such that $\ell_{\mathbf{x}t}$ denotes the gradient of $\ell$ with respect to $\mathbf{x}_t$ and $\ell_{\mathbf{xu}t}$ denotes the gradient of $\ell$ with respect to $[\mathbf{x}_t;\mathbf{u}_t]$.
\par Solver for Eq.~\eqref{eq:maximum-entropy-control} is chosen to be stochastic iLQR, a variant of Differential Dynamic Programming (DDP), which optimizes trajectory under linear-quadratic assumptions:
\begin{equation}
    \begin{aligned}
    \ell \left(\mathbf{x}_t, \mathbf{u}_t\right)&\approx
    \ell (\hat{\mathbf{x}}_t, \hat{\mathbf{u}}_t)+
    \begin{bmatrix}
    \ell_{\mathbf{x}t} \\ \ell_{\mathbf{u}t}
    \end{bmatrix}^{\top}
    \begin{bmatrix}
    \delta \mathbf{x}_t \\
    \delta \mathbf{u}_t
    \end{bmatrix} \\ &+
    \frac{1}{2}
    \begin{bmatrix}
    \delta \mathbf{x}_t \\
    \delta \mathbf{u}_t
    \end{bmatrix}^{\top}
    \begin{bmatrix}
    \ell_{\mathbf{x}\mathbf{x}t} & \ell_{\mathbf{x}\mathbf{u}t} \\
    \ell_{\mathbf{u}\mathbf{x}t} & \ell_{\mathbf{u}\mathbf{u}t}
    \end{bmatrix}
    \begin{bmatrix}
    \delta \mathbf{x}_t \\
    \delta \mathbf{u}_t
    \end{bmatrix} \\
    f(\mathbf{x}_t,\mathbf{u}_t)&\approx f(\hat{\mathbf{x}}_t,\hat{\mathbf{u}}_t) + 
    \begin{bmatrix}
        f_{\mathbf{x}t} \\ f_{\mathbf{u}t}
    \end{bmatrix}^{\top}
    \begin{bmatrix}
    \delta \mathbf{x}_t \\
    \delta \mathbf{u}_t
    \end{bmatrix}
    \end{aligned}
    \label{eq:linear-quadratic-assumptions}
\end{equation}
where $\delta \mathbf{x}_t=\mathbf{x}_t-\hat{\mathbf{x}}_t, \delta \mathbf{u}_t=\mathbf{u}_t-\hat{\mathbf{u}}_t$, and we will denote $f_{ct}\coloneqq f(\hat{\mathbf{x}}_t,\hat{\mathbf{u}}_t)$. Define $Q$ function as the cost-to-go and value function as the expected value of $Q$ function under policy $\pi$:
\begin{equation}
\begin{aligned}
Q(\mathbf{x}_t, \mathbf{u}_t)&=\ell(\mathbf{x}_t,\mathbf{u}_t)+\sum_{t' = t+1}^{T} \mathbb{E}_{\pi} \left[\ell(\mathbf{x}_{t'},\mathbf{u}_{t'})\right] \\
V(\mathbf{x}_t)&=
\mathbb{E}_{\pi}\left[Q(\mathbf{x}_t, \mathbf{u}_t)\right]
\end{aligned}
\end{equation}
Prior work~\cite{guided-policy-search} has established the following solution for maximum entropy objective. For clarity, we formalize the statement here and provide a complete proof in Appendix B.
\begin{theorem}\label{theorem:maximum-entropy-controller}
Under linear dynamics and quadratic cost, solution to problem~\eqref{eq:maximum-entropy-control} is a stochastic Gaussian policy:
\begin{equation}
    \mathbf{u}_t\sim\mathcal{N}(\hat{\mathbf{u}}_t
    +\mathbf{K}_{t}(\mathbf{x}_t-\hat{\mathbf{x}}_t)+\mathbf{k}_t,
    Q_{\mathbf{u},\mathbf{u}t}^{-1})
    \label{eq:stochastic-gaussian}
\end{equation}
where $\mathbf{K}_{t}=-Q_{\mathbf{u,u}t}^{-1}Q_{\mathbf{u,x}t},\mathbf{k}_{t}=-Q_{\mathbf{u,u}t}^{-1}Q_{\mathbf{u}t}$, with Q and value functions satisfying the following recurrence relation:
\begin{equation}
    \begin{aligned}
        \label{eq:standard-ilqr-backward-pass}
    Q_{\mathbf{xu}t}&=\ell_{\mathbf{xu}t}+
    f_{\mathbf{xu}t}^{\top}V_{\mathbf{x}t+1}+ 
    f_{\mathbf{xu}t}^{\top}V_{\mathbf{x},\mathbf{x}t+1}f_{ct} \\
    Q_{\mathbf{xu},\mathbf{xu}t}&=\ell_{\mathbf{xu},\mathbf{xu}t}+f_{\mathbf{xu}t}
    ^{\top}V_{\mathbf{x,x}t+1}f_{\mathbf{xu}t} \\
    V_{\mathbf{x}t}&=Q_{\mathbf{x}t}-Q_{\mathbf{u,x}t}^{\top}Q_{\mathbf{u,u}t}^{-1}
    Q_{\mathbf{u}t} \\
    V_{\mathbf{x,x}t}&=Q_{\mathbf{x,x}t}-Q_{\mathbf{u,x}t}^{\top}Q_{\mathbf{u,u}t}^{-1}
    Q_{\mathbf{u,x}t}
    \end{aligned}
\end{equation}
\end{theorem}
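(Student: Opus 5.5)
The proof goes by backward induction on $t$ (dynamic programming), keeping the value function quadratic in the state deviation. The inductive hypothesis is that at time $t+1$,
\[
V(\mathbf{x}_{t+1}) = \tfrac12\,\delta\mathbf{x}_{t+1}^{\top}V_{\mathbf{x,x}t+1}\,\delta\mathbf{x}_{t+1} + \delta\mathbf{x}_{t+1}^{\top}V_{\mathbf{x}t+1} + \text{const},
\]
where the constant absorbs entropy terms that do not depend on the state. At $t=T+1$ this holds trivially with zero coefficients, or with a terminal cost if one is present.

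For the inductive step, substitute the linear dynamics from Eq.~\eqref{eq:linear-quadratic-assumptions}. Writing $\delta\mathbf{x}_{t+1}=f_{ct}-\hat{\mathbf{x}}_{t+1}+f_{\mathbf{xu}t}^{\top}[\delta\mathbf{x}_t;\delta\mathbf{u}_t]$, with the offset handled through the $f_{ct}$ convention of the statement, gives
\[
Q(\mathbf{x}_t,\mathbf{u}_t)=\ell(\mathbf{x}_t,\mathbf{u}_t)+V(\mathbf{x}_{t+1}).
\]
This is quadratic in $[\delta\mathbf{x}_t;\delta\mathbf{u}_t]$. Expanding and matching linear and quadratic coefficients yields the first two lines of \eqref{eq:standard-ilqr-backward-pass}:
\[
Q_{\mathbf{xu}t}=\ell_{\mathbf{xu}t}+f_{\mathbf{xu}t}^{\top}V_{\mathbf{x}t+1}+f_{\mathbf{xu}t}^{\top}V_{\mathbf{x,x}t+1}f_{ct},\qquad
Q_{\mathbf{xu,xu}t}=\ell_{\mathbf{xu,xu}t}+f_{\mathbf{xu}t}^{\top}V_{\mathbf{x,x}t+1}f_{\mathbf{xu}t}.
\]
Because the dynamics are deterministic given $\mathbf{u}_t$, the expectation over future actions enters only through $V$.

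Next comes the per-step variational problem. Minimize $\mathbb{E}_{\pi(\cdot|\mathbf{x}_t)}[Q(\mathbf{x}_t,\mathbf{u}_t)]-\mathcal{H}(\pi(\cdot|\mathbf{x}_t))$ over densities $\pi$, subject to $\int\pi\,d\mathbf{u}=1$. Taking a Lagrangian and its functional derivative gives $Q+\log\pi+1+\lambda=0$, so $\pi\propto\exp(-Q)$. This is the Gibbs variational principle: equivalently, the objective equals $\mathrm{KL}(\pi\,\|\,e^{-Q}/Z)-\log Z$, which is minimized exactly at the Gibbs density. Since $Q$ is quadratic in $\delta\mathbf{u}_t$ with Hessian $Q_{\mathbf{u,u}t}$, completing the square identifies the density as Gaussian. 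Its mean is $\hat{\mathbf{u}}_t+\mathbf{K}_t\delta\mathbf{x}_t+\mathbf{k}_t$, with $\mathbf{K}_t=-Q_{\mathbf{u,u}t}^{-1}Q_{\mathbf{u,x}t}$ and $\mathbf{k}_t=-Q_{\mathbf{u,u}t}^{-1}Q_{\mathbf{u}t}$, and its covariance is $Q_{\mathbf{u,u}t}^{-1}$. This is exactly \eqref{eq:stochastic-gaussian}. This step needs $Q_{\mathbf{u,u}t}\succ0$, which I will assume; it is the standard positive-definiteness assumption of iLQR and holds for a convex quadratic cost.

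Finally, compute $V(\mathbf{x}_t)=\mathbb{E}_\pi[Q]-\mathcal{H}$ at the optimum, which equals $-\log Z(\mathbf{x}_t)$. The Gaussian integral of the completed square leaves the minimized quadratic, i.e. the Schur complement, plus $-\tfrac12\log\det(2\pi Q_{\mathbf{u,u}t}^{-1})$. The $\log\det$ term is independent of $\mathbf{x}_t$, so it only shifts the constant. Reading off the coefficients gives
\[
V_{\mathbf{x,x}t}=Q_{\mathbf{x,x}t}-Q_{\mathbf{u,x}t}^{\top}Q_{\mathbf{u,u}t}^{-1}Q_{\mathbf{u,x}t},\qquad
V_{\mathbf{x}t}=Q_{\mathbf{x}t}-Q_{\mathbf{u,x}t}^{\top}Q_{\mathbf{u,u}t}^{-1}Q_{\mathbf{u}t},
\]
which restores the quadratic form and closes the induction.

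\textbf{Main obstacle.} The main obstacle is bookkeeping rather than conceptual. I need the $Q$ defined in the paper (expected future cost, without entropy) to be consistent with the soft value used in the recursion. The fix is to carry the entropy inside the cost-to-go, and then observe that the entropy of a Gaussian with state-independent covariance adds only a constant. Consequently all $\mathbf{x}$-dependent coefficients coincide with those of the paper's $Q$ and $V$. A secondary point is the reference-trajectory offset term $f_{ct}$: the linearized residual must be treated consistently so that it produces the stated $f_{\mathbf{xu}t}^{\top}V_{\mathbf{x,x}t+1}f_{ct}$ term in $Q_{\mathbf{xu}t}$.
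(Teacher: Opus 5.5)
Your proposal is correct. Its skeleton matches the paper's: a backward recursion with a quadratic value function, coefficient matching for $Q_{\mathbf{xu}t}$ and $Q_{\mathbf{xu},\mathbf{xu}t}$, and a Schur complement for $V$. Where you genuinely differ is in how the per-step policy is obtained.

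The paper's Lemma derives $\pi\propto\exp(-\ell)$ from the \emph{stage} cost and uses it only to justify restricting to Gaussians $\mathcal{N}(\mu_t,\Sigma_t)$. It then solves $\nabla_{\mu}Q=0$ and $\nabla_{\Sigma}Q=0$ to get $\mu_t=\mathbf{K}_t\delta\mathbf{x}_t+\mathbf{k}_t$ and $\Sigma_t=Q_{\mathbf{u,u}t}^{-1}$, and obtains $V$ by substituting the mean back into $Q$.

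You instead apply the Gibbs variational principle directly to the soft cost-to-go. Gaussianity, the mean and the covariance then come from a single completion of the square, and $V=-\log Z$ gives the Schur complement plus a state-independent $\log\det$ constant.

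The paper's route is shorter and mirrors the usual iLQR derivation. Yours is more rigorous, for three reasons. First, the minimizer is exact over all densities, not only within the Gaussian family. Second, the Gaussian form is justified by the right object: $Q$, rather than $\ell$, which ignores the future cost. Third, you explicitly reconcile the paper's entropy-free definitions of $Q$ and $V$ with the soft recursion, using the state-independence of $\Sigma_t$. The paper glosses over this point: its $h(\Sigma_t)$ contains $\ell_{\mathbf{u,u}t}$ but is differentiated as if it contained $Q_{\mathbf{u,u}t}$.

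Two minor caveats remain.
\begin{enumerate}
\item As you note, $f_{ct}$ in the first recurrence must be read as the residual $f(\hat{\mathbf{x}}_t,\hat{\mathbf{u}}_t)-\hat{\mathbf{x}}_{t+1}$. This is exactly how the paper's appendix treats $\mathbf{f}_t$.
\item $Q_{\mathbf{u,u}t}\succ 0$ requires strict convexity in $\mathbf{u}$ (e.g.\ $\ell_{\mathbf{u,u}t}\succ 0$, with $V_{\mathbf{x,x}t+1}\succeq 0$ propagated through the Schur complement), not mere convexity.
\end{enumerate}
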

Building on these theoretical results, we now introduce two contributions that enable effective use of stochastic iLQR for imitation learning in social navigation.
\par Firstly, we propose to leverage a divergence minimization objective for imitation learning. While prior works have considered divergence minimization primarily from an interpretive perspective~\cite{imitation-as-f-divergence},~\cite{right-form-of-divergence}, our stochastic iLQR framework allows us to directly implement this principle, initializing policy to be the M-Projection of iLQR output distribution $\mathcal{N}_{ilqr}(\mu,\Sigma)$:
\begin{equation}
\pi=\arg\min _{\pi} D_{KL}\left(\mathcal{N}_{ilqr}(\mu,\Sigma) \Vert \pi\right)
\label{eq:divergence-min-for-imitation}
\end{equation}
where $D_{KL}(\cdot\Vert\cdot)$ is the Kullback-Leibler (KL) divergence. When the target distribution appears in the first argument, as in Eq.~\eqref{eq:divergence-min-for-imitation}, the objective is referred to as forward-KL; when policy $\pi$ appears in the first argument, it is known as reverse-KL. Forward-KL reduces to Mean-Squared-Error (MSE) when covariance $\Sigma$ is the identity matrix. In our setting, covariance is the curvature of $Q$ function, thereby encoding meaningful structure about action uncertainty, which is passed to policy $\pi$ through objective~\eqref{eq:divergence-min-for-imitation}. Minimizing forward-KL yields the M-projection, which is mode-covering, whereas minimizing reverse-KL produces the I-projection, which is mode-seeking. These two projections will be further compared in experiment Section~\ref{subsec:experiment-imitation-reinforcement}.
\par Secondly, we design a cost function tailored for social navigation contexts, which consists of three parts: goal reaching, smooth trajectory and collision avoidance. Denote the weighted square norm of $\mathbf{x}$ as $\left\lVert \mathbf{x}\right\rVert _Q=\mathbf{x}^{\top}Q\mathbf{x}$, overall stage cost is given as:
\begin{equation}
    \begin{aligned}
    \ell(\mathbf{x}_i,\mathbf{u}_i)= &\sum_{i=1}^{N}\biggl[ \left\lVert \mathbf{x}_i-\mathbf{g}_{i} \right\rVert _{Q_g} + \left\lVert \mathbf{u}_i\right\rVert _{Q_u} \\ 
    & + \sum_h \max \left( 0, \left\lVert d_{s} - \left\lVert \mathbf{p}_i-\mathbf{p}_i^h\right\rVert _2 \right\rVert _{Q_h} \right)\biggr]
    \end{aligned}
    \label{eq:ilqr-cost-function}
\end{equation}
where $i$ denotes time step within control horizon $N$, $\mathbf{p}_i$, $\mathbf{p}_i^h$ denotes robot position (which is the first two dimensions of state $\mathbf{x}_i$) and $h$-th human position at step $i$, respectively. $d_{s}$ is a safety threshold and $Q_g$, $Q_u$, $Q_h$ are weight matrices. Goal position $\mathbf{g}_{i}$ is selected as:
\begin{equation}
    \label{eq:moving-goal-position}
    \mathbf{g}_i=[g_x, g_y,
    \arctan \left( \frac{y_i-g_y}{x_i-g_x} \right),
    0,0]^{\top}
\end{equation}
with $(g_x,g_y)$ being the global goal coordinates. Empirically, setting a predictive goal for orientation (instead of using a constant one) mitigated hysteresis problem when controlling second-order differential drive robot. Within the control horizon, we assume constant velocities for humans to predict their future motions.
\par The collision avoidance term in cost function~\eqref{eq:ilqr-cost-function} is non-convex, combined with non-linear differential drive dynamics, this forms a highly non-convex global optimization problem. ILQR manages this complexity by iteratively constructing locally convex quadratic cost and linear dynamics approximations (Eq.~\eqref{eq:linear-quadratic-assumptions}). Standard iLQR optimizes the control sequence via iterative forward and backward passes. Given a candidate solution $\mathbf{u}_{1:N}^{j}$, iLQR rolls out the true dynamics $f$ to obtain a trajectory $\{(\mathbf{x}_i,\mathbf{u}_i)\}_{i=1}^N$. During the backward pass (Eq.~\eqref{eq:standard-ilqr-backward-pass}), gradients and Hessians of the cost $\ell$ and dynamics $f$ are evaluated to recursively compute the $Q$ function derivatives and the next control candidate $\mathbf{u}_{1:N}^{j+1}$. To avoid non-convex collision avoidance cost producing indefinite Hessians, we apply \emph{Hessian regularization} to enforce strictly local convexity and guarantee a valid descent direction. Furthermore, since the linear-quadratic expansion is only accurate within a local vicinity of the reference trajectory, a forward-pass \emph{line search} method from~\cite{iLQR-line-search-regularization} is adopted to evaluate the accuracy of dynamics approximation and prevent overly aggressive updates. These two techniques dynamically establish a trust region: successful line searches decrease regularization for faster, Newton-like steps, whereas repeated failures signal a distorted local model, prompting increased regularization for conservative, gradient-like updates.
\par A common problem in imitation learning is distribution shift, where the student found itself in states unlabeled by the expert. To mitigate this issue, DAgger~\cite{dagger} is employed to further initialize the policy. The iLQR-based imitation learning process is described in Algorithm~\ref{alg:ilqr-imitation-learning}.
\begin{figure}[!t]
    \removelatexerror
    \begin{algorithm}[H]
        \caption{Imitation learning with stochastic iLQR}
        \label{alg:ilqr-imitation-learning}
    \DontPrintSemicolon
    \SetAlgoLined
    \SetKw{init}{Initialize}
    \SetKwFunction{backward}{iLQRBackwardPass}
    \SetKwFunction{forward}{iLQRForwardPass}
    \SetKwFunction{cost}{iLQRCost}
    \SetKwFunction{ilqr}{iLQR}
    \SetKwFunction{step}{envStep}
    \SetKwFunction{clip}{clip}
    \SetKwFunction{search}{lineSearch}
    
    \KwIn{planning horizon $N$, positions $\{\mathbf{p}^{start},\mathbf{p}^{goal}\}$, data gather episodes $n_{ilqr}$, supervised training episodes $\{n_{bc}, n_{dagger}\}$}
    \KwOut{M-Projection of iLQR distribution}
    \init{$\hat{\mathbf{u}}_{1:N}\gets \mathbf{0}, 
    \mathcal{D}_{iLQR}\gets\emptyset,\hat{\mathbf{x}}\gets \mathbf{p}^{start}$}\;
    \For{$j\gets 0$ \KwTo $n_{ilqr}$}{
        $f_{\mathbf{xu}},\ell_{\mathbf{xu}},\ell_{\mathbf{xu,xu}}\gets$
        \forward{$\hat{\mathbf{u}}_{1:N},f,\mathbf{p}^{goal}$}\;
        $\mathbf{k},\mathbf{K}\gets$\backward{$f_{\mathbf{xu}},\ell_{\mathbf{xu}},\ell_{\mathbf{xu,xu}}$}\;
        $\mathbf{u}_{1:N}^*\gets$\search{$\hat{\mathbf{u}}_{1:N}+\mathbf{K}(\mathbf{x}-\hat{\mathbf{x}})+\alpha\mathbf{k}$}\;
        $(\mathbf{x}^*,r^*)\gets$\step{$\mathbf{u}^*_1$}\;
        $\mathcal{D}_{iLQR}\gets\mathcal{D}_{iLQR}\cup (\hat{\mathbf{x}},\mathbf{u}_0^*)$\;
        $\hat{\mathbf{u}}_{1:N}\gets\mathbf{u}_{1:N}^*,\hat{\mathbf{x}}\gets\mathbf{x}^*$
    }
    \For{$n_{episode}\gets 0$ \KwTo $n_{bc}$}{
        $\pi_{bc}\gets\arg\min _{\pi} D_{KL}\left(\mathcal{N}_{ilqr}(\mu,\Sigma) \Vert \pi\right)$\;
    }
    \For{$n_{episode}\gets 0$ \KwTo $n_{dagger}$}{
        $\mathbf{u}_t\sim \pi_{bc}(\cdot\vert\mathbf{x}_t)$\;
        $\mathbf{u}_t^*\gets$\ilqr{$\mathbf{x}_t$}\;
        $\mathbf{x}_{t+1}\gets$\step{$\mathbf{u}_t$}\;
        $\mathcal{D}_{iLQR}\gets\mathcal{D}_{iLQR}\cup (\mathbf{x}_t,\mathbf{u}_t^*)$\;
        $\pi_{bc}\gets\arg\min _{\pi} D_{KL}\left(\mathcal{N}_{ilqr}(\mu,\Sigma) \Vert \pi\right)$\;
    }
    \KwRet{$\pi_{bc}$}
    \end{algorithm}
\end{figure}

%% file: contents/approach_clustering.tex
\subsection{Tracking Pipeline}\label{subsec:clustering}
\begin{figure}[htbp]
    \centering
    \includegraphics[width=1.0\linewidth]{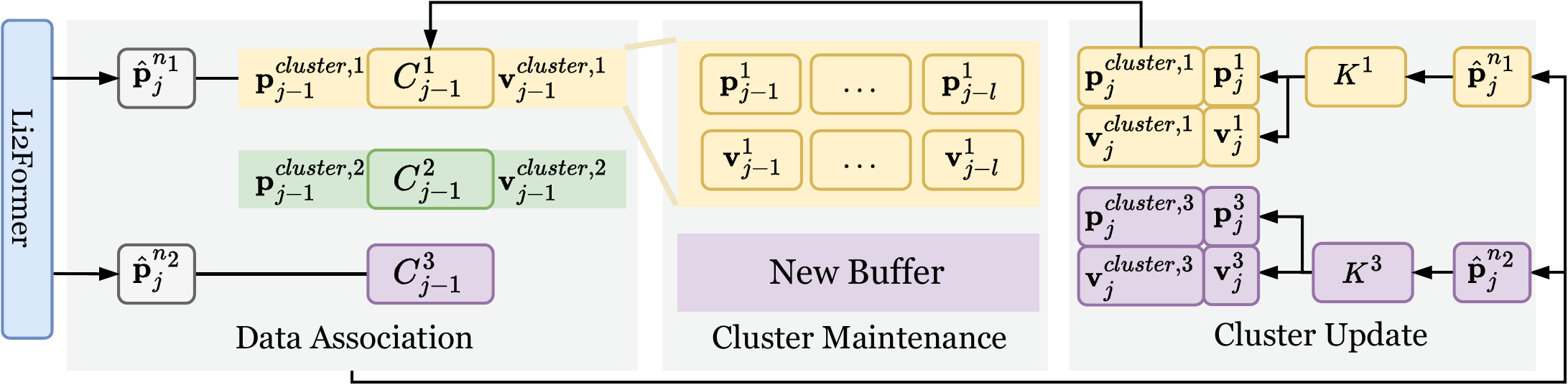}
    \caption{Pictorial illustration of the proposed clustering algorithm. Whole procedure 
    consists of three parts: (1) assign new detections to clusters (data association); (2) maintain a buffer of past positions and velocities for centroid estimation (cluster maintenance); and (3) add denoised position and velocity to cluster buffer (cluster update). In this example, detection $n_1$ is assigned to cluster 1, while $n_2$ initializes a new cluster. The matched detection is then processed by the cluster's Kalman filter, and the resulting denoised state is added to cluster buffer.}
    \label{fig:clustering-procedure}
\end{figure}
While accurate human states such as positions and velocities can be directly acquired in simulation, estimating these quantities in real-world deployment using only a 2D LiDAR is a nontrivial task. Throughout this subsection, we use superscripts for entities and subscripts to indicate time steps, such that $p_{xj}^i$ denotes the $x$-coordinate of $i$-th entity at time step $j$. Robot will be denoted as entity 0. We use vector notation $\mathbf{p}_{j}^i=(p^i_{xj},p^i_{yj})$ and $\mathbf{v}_{j}^i=(v^i_{xj},v^i_{yj})$ to represent the position and velocity of human $i$ at time $j$, respectively. The tracking pipeline is visualized in Fig.~\ref{fig:clustering-procedure}, with key components explained below.
\subsubsection{Cluster Maintenance}
The trajectory of human $i$ over $[t_1,t_{j-1}]$ with step $\Delta t$ consists of tuples $\{(\mathbf{p}_{k}^i,\mathbf{v}_{k}^i)\}_{k=1}^{j-1}$, which we denote as cluster $C^i_{j-1}$. We represent each human by the \textit{centroid} of its associated point cluster, enabling more stable position and velocity estimates via historical averaging. Each cluster maintains a buffer of positions and velocities from the past $l$ time steps. At time step $j-1$, centroid position is the newest buffer entry, while velocity is computed via weighted historical averaging:
\begin{equation}
    \begin{aligned}
        \mathbf{v}^{cluster,i}_{j-1}&=\frac{\sum_{k=j-l}^{j-1}w_j\mathbf{v}_j^i}{\sum_{k=j-l}^{j-1}w_j}\quad
        w_j=\begin{cases}
            \tanh \left(\frac{\lambda}{t_j-t_k}\right), \ & k<j \\
            1, \ & k=j
        \end{cases} \\
        \mathbf{p}^{cluster,i}_{j-1}&=\mathbf{p}^i_{j-1}
    \end{aligned}
\label{eq:weighted-historical-average}
\end{equation}

\subsubsection{Data Association}\label{subsubsec:data-association}
Given 2D LiDAR point clouds, Li2Former~\cite{li2former} is employed for estimating the coordinates of human centers, obtaining $\hat{\mathbf{p}}_{j}^{n_k}$. Traditional metric for assignment cost, such as Euclidean distance between positions, cannot distinguish between humans that are spatially close. We propose to incorporate a velocity-based term to differentiate between humans who are in close proximity but have distinct temporal behaviors. At time step $j$, distance between raw detection $\hat{\mathbf{p}}_j^{n}$ and cluster $C_j^i$ is defined as:
\begin{equation}
    \begin{aligned}
    \mathcal{J}(\hat{\mathbf{p}}_j^{n},C_{j-1}^i)=&(1- \alpha)
    \left\lVert \hat{\mathbf{p}}_j^{n}-\mathbf{p}_{j-1}^{cluster,i}\right\rVert \\ &+ \alpha
    \left\lVert \hat{\mathbf{v}}_j^{n}-\mathbf{v}_{j-1}^{cluster,i}\right\rVert 
    \label{eq:objective-matching}
    \end{aligned}
\end{equation}
where velocity heuristic $\hat{\mathbf{v}}_j^{n}= (\hat{\mathbf{p}}_j^{n}-\mathbf{p}_{j-1}^{cluster,i})/\Delta t$, and $\alpha$ is a weight term determining the relative importance of spatial and temporal similarities. \textit{Hungarian Algorithm} is then applied to the full cost matrix to optimally assign points to clusters. Detection $\mathbf{p}_j^{n_k}$ is initialized as a new cluster if $\mathcal{J}(\hat{\mathbf{p}}_j^{n_k},C_{j-1}^i)>d_{max}$ for all $i$ (e.g, $\hat{\mathbf{p}}_{j}^{n_2}$ in Fig.~\ref{fig:clustering-procedure}). Unassigned clusters (such as $C_{j-1}^2$ in Fig.~\ref{fig:clustering-procedure}) rely on their Kalman filter for a limited number of prediction steps and are removed if they remain unmatched during this period.

\subsubsection{Cluster Update}
After assigning points to clusters at time $j$, corresponding Kalman filter $K^i$ is applied to obtain denoised positions $\mathbf{p}_j^i$ and provide preliminary velocity estimates $\mathbf{v}_j^i$, which are added in the buffer for centroid state estimation in the next time step.

\subsubsection{State Space}
For robot, at time step $j$, joint state $\mathbf{s}^0_j$ encompasses two parts: its fully observable state $s^0_j$ and humans' partially observable states $o^i_j$, given by: 
\begin{equation}
    \begin{aligned}
    s^0_j&=[r^0,p_{xj}^0,p_{yj}^0,\theta^0_j,g_x^0,g_y^0,v_j^0,\omega_j^0] \\
    o^i_j&=[r^i,p_{xj}^i,p_{yj}^i,v_{xj}^i,v_{yj}^i], \ i=1,\ldots,H
    \end{aligned}
    \label{eq:state-parameterization}
\end{equation}
where $r^0$ is robot radius, $\theta_j^0$ denotes robot orientation, $(g_x^0,g_y^0)$ is the preset constant goal position and $(v_j^0,\omega_j^0)$ represents linear and angular velocities for robot at time $j$. Given $H$ humans, the observation for robot at time step $j$ constitutes of $\left[s^0_j,o^1_j,o^2_j,\ldots,o^{H}_j \right]$. 

%% file: contents/approach_net.tex
\subsection{Gated Spatio-Temporal Transformer}\label{subsec:network-arch}
\begin{figure*}[!ht]
    \centering
    \includegraphics[width=1.0\textwidth]{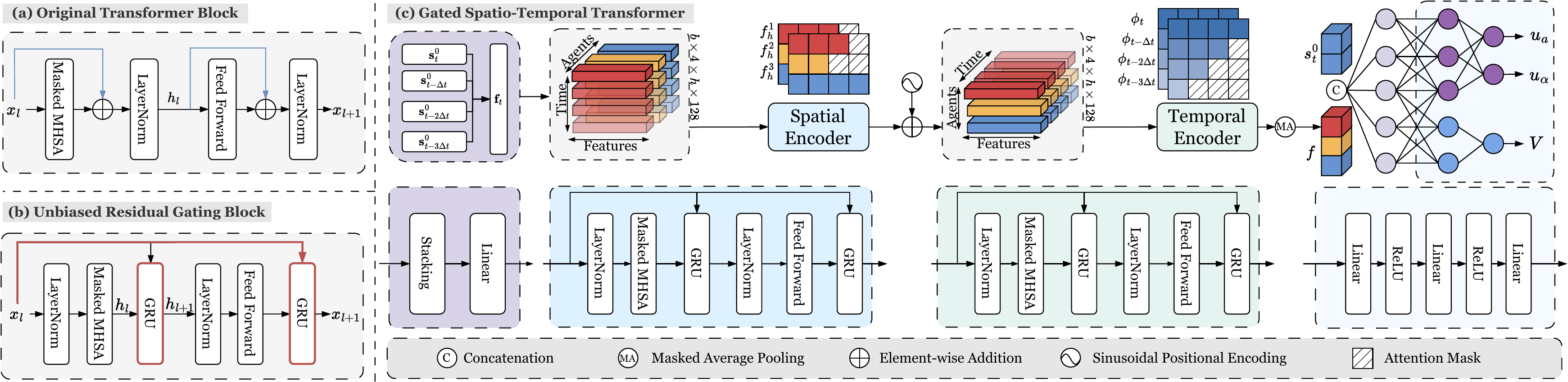}
    \caption{The proposed unbiased residual gating block and the gated spatio-temporal transformer architecture that builds on top of it. (a) Original transformer block, with reaction pathway marked in blue. Key limitations include non-Markovian initialization due to post-LayerNorm and imbalance between reaction and memory pathways due to simple additive residual. (b) Proposed gated residual block that replaces the additive residual with GRU to regulate  reaction and memory pathway. GRU bias terms are removed and skip connection is modified to handle variable-length inputs with padding. Modifications are highlighted in red. (c) Complete gated transformer architecture. Joint states are passed into spatial encoder, which encodes attention in the crowd. Spatial features are then reshaped and passed to temporal encoder to extract relationships across time dimension. Output features are passed into subsequent actor and value networks.}
    \label{fig:network-arch}
\end{figure*}
\par Social navigation requires the robot to balance two types of behaviors: \textit{reactive reflexes} (e.g., stopping immediately to avoid colliding a human) and \textit{contemplative planning} (e.g., remembering the movement of a human to pass elegantly). The transformer architecture~\cite{transformer} naturally support the emergence of these behaviors. The residual stream (blue arrows in Fig.~\ref{fig:network-arch}(a)) propagates the embedding of the current observation $o_t$ through the depth of the network. This can be interpreted as the \textit{reaction pathway}. Conversely, the attention stream (black arrows in Fig.~\ref{fig:network-arch}(a)) processes history observations $o_{t:t-k}$ via Multi-Head Attention (MHA) mechanism, injecting historical context into the state representation. This can be seen as the \textit{memory pathway}. In this subsection, we first analyze why the original transformer architecture is hard to train in RL setting, then state prior attempts to alleviate these issues, followed by our unique contributions in both architectural design and training methodology.
\par Standard transformer block (Fig.~\ref{fig:network-arch}(a)) suffer from poor initialization in RL because the Layer Normalization (LN) operation mixes the above two pathways immediately:
\begin{equation}
h_{l}=\operatorname{LN}(x_{l}+\operatorname{MHA}(x_{l}))
\label{eq:forward-original-transformer}
\end{equation}
Upon initialization, $\operatorname{MHA}(x_{l})\approx 0$, and Eq.~\eqref{eq:forward-original-transformer} evaluates to $\operatorname{LN}(x_{l})\neq x_{l}$, preventing the agent from starting with a stable, Markovian policy. Moreover, contextual perturbation directly adds to reaction pathway, yet the agent must weight these two sources of behavior appropriately rather than combine them in a fixed manner.
\par To address the problem of Markovian initialization, prior works~\cite{gru-transformers} proposed placing LN inside the residual branch:
\begin{equation}
h_{l}=x_{l}+\operatorname{MHA}(\operatorname{LN}(x_{l}))
\end{equation}
which ensures $h_{l}\approx x_{l}$ upon initialization, thus enabling the policy to begin as a Markovian mapping. To mitigate reaction-memory imbalance,~\cite{gru-transformers} propose replacing additive residual connections with various gating mechanisms, among which GRU~\cite{gated-recurrent-unit} achieves the best performance. Forward equations for GRU following the MHA block (Fig.~\ref{fig:network-arch}(b)) are given by:
\begin{equation}
    \begin{aligned}
    \Gamma _t^u &= \sigma\left( W_{u}h_{l}+U_{u}x_{l}+b_{u}\right) \\
    \Gamma _t^r &= \sigma\left( W_{r}h_{l}+U_{r}x_{l}\right) \\
    \tilde{h}_{l+1} &= \tanh \left(W_{h}h_{l}+U_{h}\left(\Gamma _t^r\odot x_{l}\right)\right) \\
    h_{l+1}&=\Gamma _t^u \odot \tilde{h}_{l+1} + (1-\Gamma _t^u)\odot x_{l}
    \end{aligned}
    \label{eq:gru-forward-equation}
\end{equation}
where $\odot$ denotes Hadamard product. The mechanism is that network learns to output $\Gamma _t^u\approx 0$ when the memory pathway is noisy (favor reaction pathway), and $\Gamma _t^u\approx 1$ when memory contains critical information. In order to preserve Markovian property, $b_u$ in Eq.~\eqref{eq:gru-forward-equation} is often initialized as a small negative value, thus saturating the sigmoid function to produce $\Gamma _t^u\approx 0$ upon initialization.
\par A core challenge in real-world social navigation is the stochasticity of crowd density due to both scenarios and sensing limitations. To handle this variability, we propose \textit{Unbiased Residual Gating}, an extension of the mechanisms discussed above, featuring two key modifications to the GRU bias terms and the residual pathway, illustrated in Fig.~\ref{fig:network-arch}(b).
\par Although transformer architectures can in principle handle variable-length inputs, training exclusively on fixed-cardinality creates a significant distribution shift between training and deployment, which we later verify in experiment Section~\ref{subsec:net-arch-evaluation}. To bridge this gap, we adopt a variable-cardinality training regime where the number of agents varies dynamically per episode. This necessitates the use of padding for batched tensor operations. In order to avoid adding into padded entries, we remove bias term $b_u$ from Eq.~\eqref{eq:gru-forward-equation}. While this sounds trivial, removing $b_u$ indicates that we cannot enforce $\Gamma_{t}^u\approx 0$ at initialization. Under standard small-weight initialization, unbiased GRU simplifies to:
\begin{equation}
    \begin{aligned}
        \Gamma _t^u &\approx 0.5 & \tilde{h}_{l+1} &\approx 0 & h_{l+1}& \approx 0.5x_{l}
    \end{aligned}
\label{eq:gru-value-init}
\end{equation}
so although the direction of $x_{l}$ is preserved, the signal is scaled by $0.5$. With the original residual structure, this results in:
\begin{equation}
x_{l+1}\approx 0.5h_{l}\approx 0.25x_{l}
\end{equation}
To mitigate this significant cumulative compression, we modify the residual connection so that the second GRU block receives its skip connection directly from input embedding $x_{l}$, rather than from the intermediate state $h_{l+1}$. This produces $x_{l+1}\approx 0.5x_{l}$ at initialization. Over $n$ layers, this reduces information attentuation upon initialization by a factor of $2^n$. Later experiments in Section~\ref{subsec:net-arch-evaluation} will demonstrate that this improvement in information preservation translates into substantial gain in navigation success rate.
\par The proposed gated spatio-temporal transformer architecture (Fig.~\ref{fig:network-arch}(c)) builds upon unbiased residual gating block, using spatial and temporal encoder to model human-robot and human-human interactions on both spatial and temporal dimensions. A critical observation in our experiments is that the proposed gated transformer policy is highly sensitive to the diversity of training scenarios. When trained solely on symmetric settings, such as environments where all humans move toward antipodal points on a circle, the policy constantly converges to a passive strategy: the robot simply waits for the crowd to clear before moving towards the goal. This stems from the expressivity of transformer architecture: in low-entropy environments, the attention mechanism lacks sufficient gradient signal to distinguish between humans, thus treated the crowd as a single entity.
\par To address this, we introduce \textit{Scenario-Based Domain Randomization}, which increases state-space entropy by enriching human motion patterns. Instead of using a symmetric template, we augment the training distribution by varying human start-goal configurations and randomizing human velocities. Under these more varied scenarios, the same passive strategy becomes risky, encouraging the policy to develop genuinely proactive behaviors. The central insight is that sufficient variability is not only a robustness tool, as in conventional domain randomization~\cite{domain-randomization}, but a necessary condition for attention-based policies to learn active and cooperative navigation behaviors. The exact set of effective variations and their influence on policy learning are detailed in experiment Section~\ref{subsec:net-arch-evaluation}.

%% file: contents/approach_drl.tex
\subsection{Deep Reinforcement Learning}
Reward function and deep reinforcement learning algorithm will be described in detail in this section, refer to Section~\ref{subsec:clustering} for state space formulation and Section~\ref{subsec:second-order-control} for the choice of action space.
\subsubsection{Reward Function}\label{sec:reward-function}
Apart from being collision-free, the navigation policy must also respect certain implicit social norms. This work focuses specifically on maintaining an appropriate interpersonal distance, which serves as a practical geometric surrogate for human comfort. We model the robot and each human as circular objects with radii $r$ and $r^h$, and denote the position of their centers at time $t$ by $\mathbf{p}_t$ and $\mathbf{p}^h_t$. Let $r^s$ be the minimum comfort distance required by humans. The resulting social-distance requirement is:
\begin{equation}
\left\lVert \mathbf{p}_{t}-\mathbf{p}_{t}^h \right\rVert\geq  r+r^h+r^s, \ \forall h, \ \forall t
\end{equation}
Denote $d_{min}\coloneqq r+r^h+r^s$, and let $d_t\coloneqq\min_h\left\lVert\mathbf{p}_t-\mathbf{p}_t^{h} \right\rVert$ be distance to the nearest human. This constraint is incorporated into the reward as a soft distance penalty, used alongside a goal-reaching bonus, a collision penalty, and a small time cost. Together, these components encourage the policy to complete the task efficiently while avoiding violations of personal space:
\begin{equation}
    r(s_t,a_t)=\begin{cases}
        1.0, &\text{if }d_t^g <r \\
        -0.25, &\text{else if }d_t < r+r^h \\
        0.5(d_t-d_{min})\Delta t, &\text{else if }d_t<d_{min} \\
        -0.01, &\text{otherwise}
    \end{cases}
\end{equation}
where $d_t^g=\left\lVert\mathbf{p}_t-\mathbf{g}\right\rVert$ is the distance to goal at time $t$ and $\Delta t$ is the simulation time step.
\subsubsection{Deep Reinforcement Learning Algorithm}
Proximal Policy Optimization (PPO)~\cite{ppo} with clipped surrogate objective is employed to solve for Eq.~\eqref{eq:objective-problem-formulation}. The surrogate objective in PPO addresses performance collapse with importance ratio clipping, and has the benefit of reusing off-policy data. The resulting optimization target is:
\begin{multline}
J^{CLIP}(\theta^{'})=\mathbb{E}_{t}\biggl[
    \min\biggl( \frac{\pi_{\theta^{'}}(a_t|s_t)}{\pi_{\theta}(a_t|s_t)}\hat{A_t}, \\
\operatorname{clip}\left(\frac{\pi_{\theta^{'}}(a_t|s_t)}{\pi_{\theta}(a_t|s_t)}, 1-\epsilon, 1+\epsilon \right)
\hat{A_t} \biggr) \biggr]
\label{eq:ppo-objective}
\end{multline}
here $\epsilon$ is a value defining the clipping neighborhood, $\hat{A}_t$ is the estimated advantage function at time $t$, obtained via truncated Generalized Advantage Estimation (GAE)~\cite{generalized-advantage-estimation}. $\theta^{'}$, $\theta$ are new and old policy parameters, respectively. Policy $\pi_{\theta}(a_{t}|s_{t})$ is chosen to be a normal distribution over actions.

%% file: contents/experiments.tex
\renewcommand\arraystretch{1.0}
\begin{table}[!ht]
    \centering
    \caption{Collection and description of baseline models}
    \newcolumntype{C}{>{\centering\arraybackslash}X}
    \newcolumntype{L}[1]{>{\centering\arraybackslash}p{#1}}
    \begin{tabularx}{\linewidth}{L{2.5cm}|C|C|C}\hline
    Methods & Order of Model & Differential Drive & DRL \\ \hline
    ORCA~\cite{orca}       & 1 &              &    \\ \hline
    NH-ORCA~\cite{nh-orca} & 1 & $\checkmark$ &    \\ \hline
    MPC~\cite{rl-subgoal-for-mpc} & 2 & $\checkmark$    &    \\ \hline
    SARL~\cite{sarl}    & 1 &                 & $\checkmark$   \\ \hline
    SARL (second-order) & 2 &                 & $\checkmark$   \\ \hline
    Feasible SARL       & 2 & $\checkmark$    & $\checkmark$   \\ \hline
    KinematicRL (Ours)  & 2 & $\checkmark$ & $\checkmark$ \\ \hline
    \end{tabularx}
    \label{tab:baseline-models}
\end{table}
\section{Experiments}\label{sec:experiments}
\renewcommand\arraystretch{1.0}
\begin{table*}[!htb]
    \centering
    \caption{Policy performance in highly dynamic simulation setting with 5 humans} \label{tab:model-performance-settings}
    \begin{tabular}{cccccc|cccc}\hline
    Methods   & $SR$ & $CR$ & $\mu T$ & $DF$ & $Dist.$ & $v_{of}$ & $w_{of}$ & $\Delta a/\Delta t$ & $\kappa$ \\ \hline
    ORCA~\cite{orca}     & 0.58 & 0.42 & \textbf{10.03}$\pm$1.31 & 0.25$\pm$0.18 & 0.003$\pm$0.034 & 0.17 & 0.22 & 0.56$\pm$1.16 & 0.26$\pm$1.22 \\
    iLQR (first-order)   & 0.47 & 0.50 & 10.58$\pm$5.61 & 0.06$\pm$0.09 & 0.073$\pm$0.056 & 0.07 & 0.01 & 0.48$\pm$1.44 & \textbf{0.03}$\pm$0.34 \\
    \cellcolor[HTML]{EFEFEF}iLQR (second-order)  
        & \cellcolor[HTML]{EFEFEF}\textbf{0.59} 
        & \cellcolor[HTML]{EFEFEF}\textbf{0.39} 
        & \cellcolor[HTML]{EFEFEF}11.83$\pm$4.77 
        & \cellcolor[HTML]{EFEFEF}\textbf{0.03}$\pm$0.06 
        & \cellcolor[HTML]{EFEFEF}\textbf{0.088}$\pm$0.055 
        & \cellcolor[HTML]{EFEFEF}\textbf{0.00} 
        & \cellcolor[HTML]{EFEFEF}\textbf{0.00} 
        & \cellcolor[HTML]{EFEFEF}\textbf{0.04}$\pm$0.08 
        & \cellcolor[HTML]{EFEFEF}0.04$\pm$0.73 \\ \hline
    SARL~\cite{sarl}     & \textbf{0.99} & \textbf{0.01} & \textbf{9.44}$\pm$0.88  & 0.10$\pm$0.10 & \textbf{0.130}$\pm$0.055 & 0.10 & 0.53 & 1.24$\pm$3.52 & 1.00$\pm$2.01 \\
    SARL (second-order)  & 0.52 & 0.48 & 11.25$\pm$1.36 & 0.07$\pm$0.08 & 0.082$\pm$0.057 & 0.06 & 0.44 & 0.60$\pm$2.40 & 0.75$\pm$3.13 \\
    Feasible SARL        & 0.77 & 0.22 & 10.52$\pm$3.59 & 0.05$\pm$0.08 & 0.089$\pm$0.056 & 0.00 & 0.00 & 0.21$\pm$0.27 & 0.45$\pm$1.05 \\
    \cellcolor[HTML]{EFEFEF}KinematicRL (Ours)                
        & \cellcolor[HTML]{EFEFEF}0.84 
        & \cellcolor[HTML]{EFEFEF}0.15 
        & \cellcolor[HTML]{EFEFEF}9.76$\pm$2.96 
        & \cellcolor[HTML]{EFEFEF}\textbf{0.03}$\pm$0.06 
        & \cellcolor[HTML]{EFEFEF}0.098$\pm$0.055 
        & \cellcolor[HTML]{EFEFEF}\textbf{0.00} 
        & \cellcolor[HTML]{EFEFEF}\textbf{0.00} 
        & \cellcolor[HTML]{EFEFEF}\textbf{0.16}$\pm$0.44 
        & \cellcolor[HTML]{EFEFEF}\textbf{0.43}$\pm$0.69 \\ \hline
    \end{tabular}
\end{table*}
\begin{figure}[!htb]
    \centering
    \includegraphics[width=1.0\linewidth]{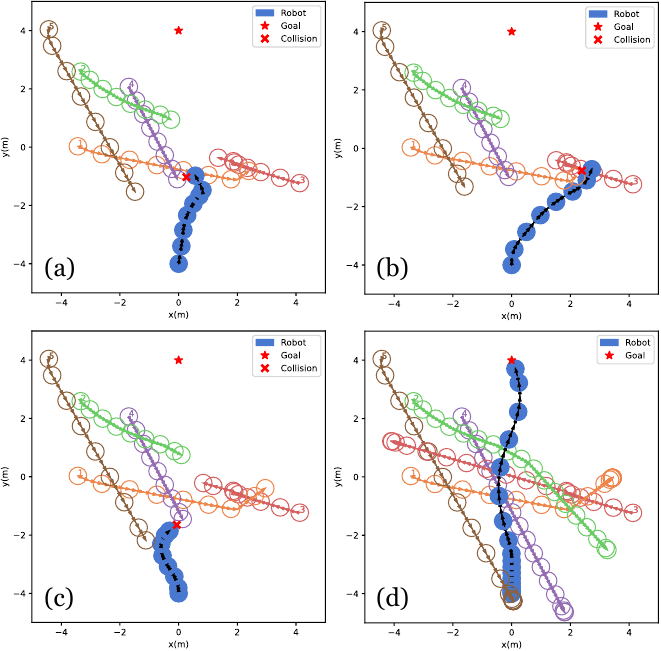} 
    \caption{Comparison of expert trajectories. Among the four expert policies, only the proposed iLQR successfully solves this scenario. Collision points for the other methods are indicated by red crosses. (a) ORCA. (b) NH-ORCA. (c) MPC. (d) iLQR (second-order).}
    \label{fig:expert-policies-qualitative}
\end{figure}
\subsection{Simulation Implementation}
All policies are implemented in Python using PyTorch. We adopted PPO implementation from stable-baselines3~\cite{stable-baselines3} and utilized vectorized environment along with vectorized iLQR to speed-up training, which took approximately 4 hours on a GeForce GTX 1070 Ti GPU and an 8-core Intel Core i7-7700@3.6 GHz CPU. To facilitate reproducibility and future research, we open-source our implementation at \href{https://github.com/Mr-Wonderfool/KinematicRL}{\textcolor{blue}{https://github.com/Mr-Wonderfool/KinematicRL}}.
\subsubsection{Parameter Choices}
The network is trained with learning rate 0.01 after gathering iLQR experiences for 3000 episodes. Adam~\cite{adam} optimizer is used throughout training, learning rate linearly decays from $3\times 10^{-4}$ to $1\times 10^{-6}$ in reinforcement learning phase. Robot's maximum linear and angular accelerations, linear and angular velocities are set to $0.3\, m/s^2$, $0.9\, rad/s^2$, $1\, m/s$ and $\pi\, rad/s$, respectively. Rollout buffer size, GAE lambda and clipping range for PPO are chosen to be 1536, 0.5 and 0.2, respectively. Other parameter choices can be consulted in the code.
\subsubsection{Simulation Settings}\label{sec:simulation-setting}
As elucidated in Section~\ref{subsec:network-arch}, the gated transformer architecture requires sufficiently diverse human motion patterns to learn proactive behaviors. To induce this diversity, we construct a simulation setting with high state-space entropy. Specifically, 30\% of pedestrians are initialized on a circle with radius 4m (adding random perturbations) and assigned antipodal goal locations, while the remaining pedestrians start from uniformly sampled positions within a $10\times 10$ square. Pedestrian velocities are randomized to further increase trajectory variability. The robot is initialized at $(0,-4)$ with goal $(0,4)$ and is invisible to pedestrians, making all humans noncooperative. Following~\cite{sarl}, pedestrians are controlled by ORCA~\cite{orca}, and their preferred velocities are sampled within $[0.5m/s, 1.5m/s]$. We refer to this configuration as the \textit{Highly Dynamic} scenario, characterized by its heterogeneous pedestrian motions and noncooperating agents.
\subsubsection{Baseline Models}
Common expert policies \textit{ORCA}~\cite{orca} and \textit{NH-ORCA}~\cite{nh-orca} are chosen to compare with iLQR. Additionally, \textit{MPC}~\cite{rl-subgoal-for-mpc} is selected as a second-order expert to offer a more direct comparison. The proposed expert policy will be denoted as iLQR (first-order) and iLQR (second-order) based on the order of model used in simulation. For learning-based methods, \textit{SARL}~\cite{sarl} serves as a baseline for attention modeling and kinematic performance. We apply trained SARL to control a robot with acceleration constraints only in test settings, and denote this model as \textit{SARL (second-order)}. To validate the effectiveness of our network architecture, \textit{Feasible SARL} is created by replacing the transformer feature extractor with SARL's value network. For a more holistic view of baseline models, refer to Table~\ref{tab:baseline-models}. Since not all baseline models are designed to handle varying number of humans within batches, to offer a fair comparison, the highly dynamic training environment contains fixed number of humans. Trained models are tested in the same setting for 500 episodes, and result will be denoted as $\mu\pm\sigma$ with $\mu$ and $\sigma$ being mean and standard deviation calculated from all test cases.
\subsubsection{Comparison Metrics}
\par The following metrics are used to evaluate both navigation and kinematic performance of models. 
$SR$: success rate; 
$CR$: collision rate; 
$\mu T$: average navigation time; 
$DF$: frequency of invading humans' comfort distances; 
$Dist.$: average minimum separating distance between robot and humans; 
$v_{of}$: oscillating frequency of linear velocity; 
$\omega_{of}$: oscillating frequency of angular velocity; 
$\Delta a/\Delta t$: acceleration change rate with respect to time (jerk); 
$\kappa$: average curvature of trajectory. 
More specifically, $v_{of}$ is calculated via: 
\begin{equation}
v_{of}=\frac{1}{N}\sum_{i=1}^{N}\sum_{t=0}^{T_{i}-1}
\frac{\mathbbold{1}\{|v^i_{t+1}-v^i_{t}|>a_{max}\Delta t\}}{T_i}
\end{equation}
where $\mathbbold{1}\{\cdot\}$ is the indicator function and $v_{t},v_{t+1}$ are model output velocities at time step $t$ and $t+1$, in evaluation episode $i$. $\omega_{of}$ is calculated with the same rule. Threshold for oscillation is chosen to be the maximum allowed $\Delta v$ and $\Delta \omega$, therefore a positive $v_{of}$ or $\omega_{of}$ value indicates violation of acceleration constraints. $Dist.$ is obtained by calculating the minimum distance to humans in each evaluation case, then averaging over all cases.

\subsection{Imitation-Reinforcement Learning Evaluation}\label{subsec:experiment-imitation-reinforcement}
The proposed workflow consists of imitation learning with iLQR followed by reinforcement 
learning with PPO. Evaluation results for the two phases are presented below.
\subsubsection{Expert Policies}
\par Quantitative results for different expert policies in highly dynamic scenarios are shown in Table~\ref{tab:model-performance-settings}. The proposed iLQR (second-order) outperforms ORCA on all metrics except average navigation time. This is largely because iLQR assumes no reciprocity and explicitly encodes social distance in its objective. Two insights emerge from the results: (1) our method employs second-order control, and respects dynamic feasibility by design. By contrast, ORCA outputs dynamically infeasible actions in nearly 20\% of the cases. We find that most violations occur during sharp avoidance maneuvers, which helps explain its frequent incursions into human comfort zones (25\% of the cases) and its correspondingly low average minimum separation distance (0.003 $Dist.$); (2) the commendable kinematic performance is not attributable solely to the higher-order control formulation but primarily to the model-based structure of trajectory optimization. Ablation between iLQR (first-order) and iLQR (second-order) shows that even with first-order control inputs, frequency of violating acceleration constraints remain low and kinematic performance still surpasses ORCA.
\par Trajectory comparison between four common expert policies are shown in Fig.~\ref{fig:expert-policies-qualitative}. ORCA and NH-ORCA both fall short when encountering agents that broke the reciporcal assumption. Despite formulating collision avoidance as hard constraints, the MPC baseline~\cite{rl-subgoal-for-mpc} still experiences collisions with pedestrians. This failure stems from two compounding factors: (1) the constant-velocity assumption used for predicting human motion within the planning horizon frequently deviates from actual pedestrian trajectories; and (2) the reliance on general-purpose non-linear programming solvers scales poorly with the prediction horizon, forcing the system to execute suboptimal plans calculated within a strict computational budget. In contrast, the proposed iLQR approach successfully navigates to the goal with smooth motions. This stems from a time-varying linear feedback gain alongside the feedforward trajectory, providing immediate stabilization against unmodeled dynamic disturbances. Nevertheless, the system remains subject to the underlying prediction limitations; due to deviations in actual pedestrian motion, the resulting avoidance maneuver still passes closely by human 4 (Fig.~\ref{fig:expert-policies-qualitative}(d), purple circle).

\begin{figure}[!htb]
\centering
    \includegraphics[width=0.8\linewidth]{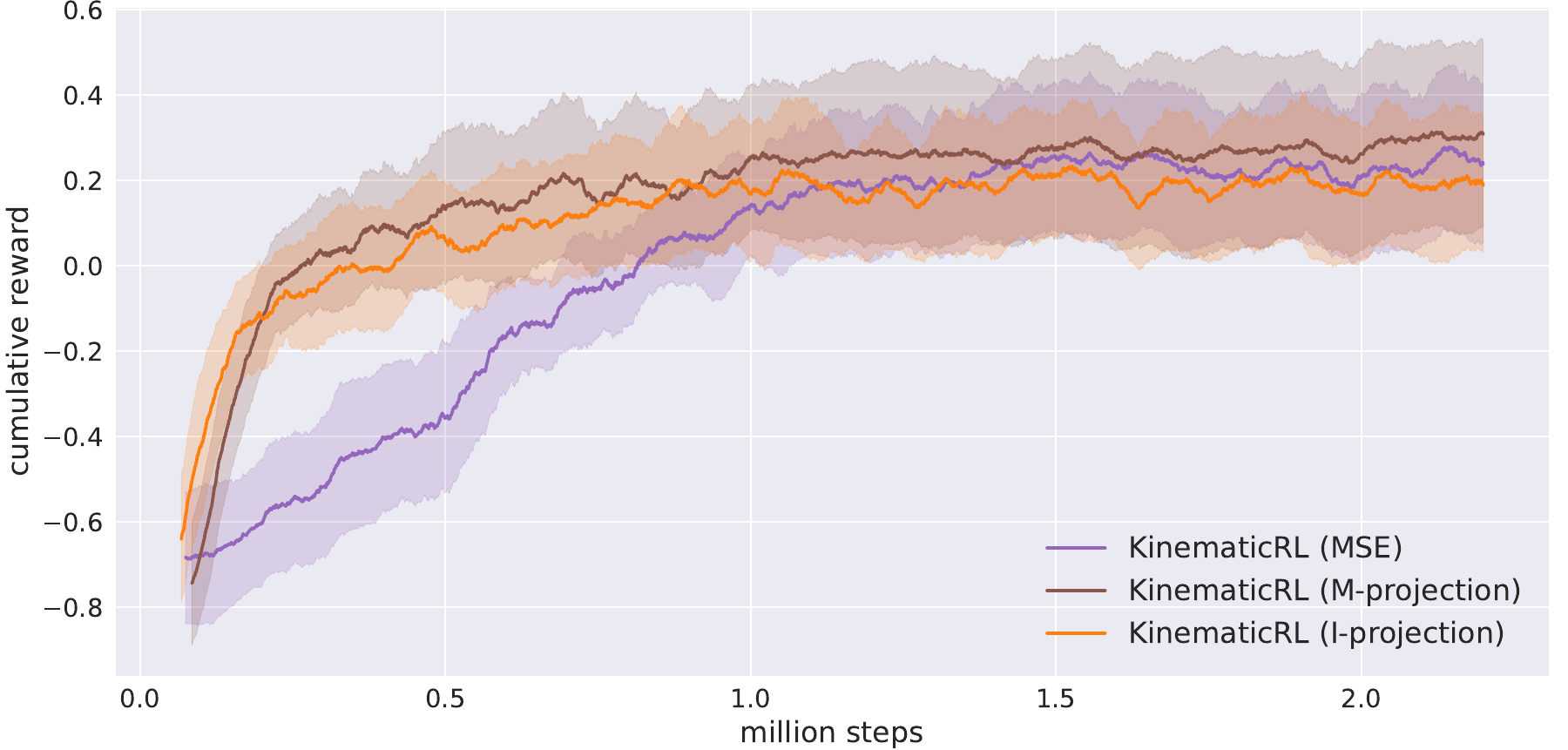}
    \caption{Reward curves during reinforcement learning phase (200-step moving average) for KinematicRL initialized with different imitation learning objectives. KinematicRL with divergence minimization objective converges approximately twice as fast as KinematicRL (MSE), and KinematicRL (M-projection) achieves higher reward compared to KinematicRL (MSE).}
    \label{fig:reward-curve}
\end{figure}
\subsubsection{Imitation Learning}
We propose to use a divergence minimization objective (Eq.~\eqref{eq:divergence-min-for-imitation}) for imitation learning and evaluate how different imitation objectives influence subsequent reinforcement learning efficiency and performance. Fig.~\ref{fig:reward-curve} reports the reward curves. Key observations are: (1) M-projection outperforms I-projection in our setting, which we attribute to the unimodal nature of iLQR demonstrations, making a mode-covering distribution better aligned with the expert's covariance structure; (2) regardless of whether M-projection or I-projection is used, the divergence minimization objective yields reward convergence that is approximately twice as fast as the MSE objective. These results highlight the importance of learning action uncertainty during imitation learning, as an uncertainty-aware policy can better explore high-reward regions during subsequent reinforcement learning.

\subsubsection{Reinforcement Learning}
\begin{figure}[!htb]
    \centering
    \includegraphics[width=1.0\linewidth]{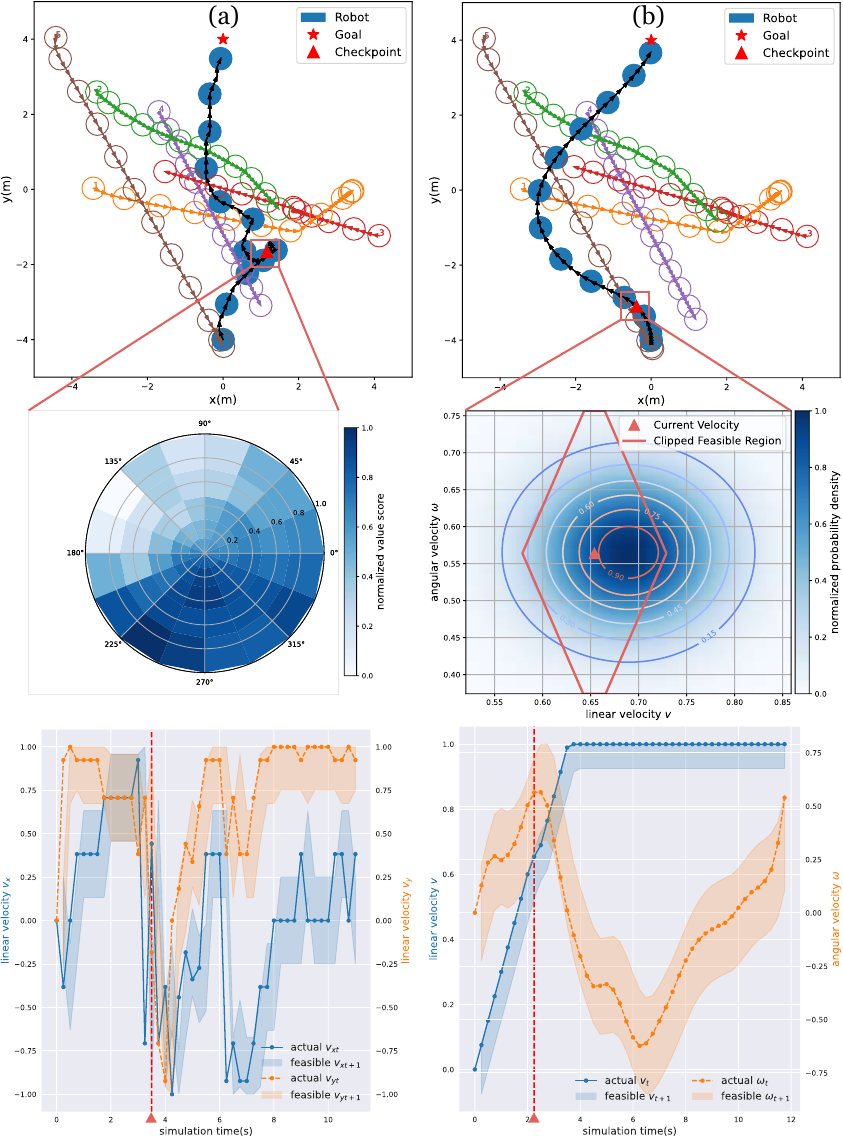}
    \caption{Qualitative comparison in highly dynamic environment. From top to bottom, each column shows the planned trajectory, the value estimates or action distribution, and the velocity curve throughout the episode. Action produced at the red triangle is visualized: for SARL we plot the value estimates over actions $(v,\theta)$ in polar coordinates, and for KinematicRL we plot the Gaussian distribution over velocities (darker color indicates more probable action). The corresponding velocity profiles are shown with the feasible region indicated by a transparent band, and the red vertical line marks the decision point (red triangle). Overall, KinematicRL exhibits safer collision avoidance maneuvers and smoother motions. (a) SARL. (b) KinematicRL.}
    \label{fig:qualitative-comparison-path-vw-dist}
\end{figure}
Results for navigation performance of KinematicRL with baseline models are presented 
in Table~\ref{tab:model-performance-settings}. Our model excels in all kinematic metrics, 
and further analysis of the results consists of three aspects. 
\paragraph{Sim-to-Real Gap}
Performance collapse of SARL (second-order) compared with SARL offer a straight-forward demonstration of sim-to-real gap, evidenced by a significant drop in success rate (47\% in highly dynamic environment) during test phase, indicating that SARL passes almost half of the test cases by executing dynamically infeasible actions. Empirically, these infeasible maneuvers often allow SARL to evade imminent collisions in simulation. Since the simulator models the robot as a first-order system while real robots do not behave as such, this modeling error leads to substantial degradation when dynamics are corrected. This indicates that SARL is not only brittle under realistic dynamics but also potentially unsafe when deployed in real-world settings.
\paragraph{Navigation Performance}
Our model excels at all kinematic metrics compared with SARL, with small jerk and average curvature indicating smooth and comfortable path. This results from imitating model-based expert to inherit the kinematic performance of iLQR, along with second-order control framework. Given that KinematicRL controls a differential drive robot that cannot move sideways and strictly obeys acceleration constraints, a drop in success rate and minimum separating distance is expected. Furthermore, low danger frequency (0.03 in highly dynamic settings) compared with SARL suggests that the proposed model chooses safer paths through better anticipation of human motions, as compared with SARL (0.10 in highly dynamic settings). Although it may seem counterintuitive that collision rate increases while danger frequency decreases, SARL's first-order holonomic model can perform aggressive, last-moment evasive maneuvers that avoid collisions but result in longer durations in close proximity to humans. In contrast, KinematicRL controls a second-order nonholonomic model that executes either a safe pass or no pass, in either scenarios danger frequency is lower compared to SARL.
\paragraph{Collision Avoidance Maneuvers}
Fig.~\ref{fig:qualitative-comparison-path-vw-dist} presents qualitative comparison to justify the above analysis in collision avoidance maneuvers and action smoothness between SARL and KinematicRL. We convert output distribution over accelerations to a distribution over velocities, and plotted the feasible region (Eq.~\eqref{eq:acceleration-constraints}) given the current velocity $(v,\omega)$ (red rhombus in Fig.~\ref{fig:qualitative-comparison-path-vw-dist}(b)). Agents' positions are plotted along with their orientations. SARL exhibits drastic changes in orientation and overlaying positions when avoiding humans (the red triangle on Fig.~\ref{fig:qualitative-comparison-path-vw-dist}(a)). Also, SARL passes in front of human 3, indicating insufficient capture of human intentions. In contrast, our model anticipates motion patterns of human 4 and 5 (purple and brown circles in Fig.~\ref{fig:qualitative-comparison-path-vw-dist}(b)) in the early stage, and opts to rotate towards safer regions. This improved modeling shows self-attention surpasses global attention mechanism when all humans are non-cooperative and require differentiated treatment. Distribution plot and value estimates at checkpoint shows that SARL put high value estimates on aggressive behavior. Comparatively, our model outputs a distribution that largely lies within the feasible region, gradually increases linear velocity for faster goal arrival while preserving a suitable angular velocity.
\paragraph{Feasibility and Smoothness of Actions}
We further plot velocities throughout this episode (Fig.~\ref{fig:qualitative-comparison-path-vw-dist}), and use transparent region to denote feasible action set, so velocities outside this region indicate dynamic infeasibility. As can be inferred from SARL trajectory, the output actions exhibit substantial oscillation. Importantly, SARL outputs a dynamically infeasible action at checkpoint (red vertical line) when avoiding human 3. In contrast, KinematicRL inherently generates feasible actions that are significantly smoother than those of SARL.

\subsection{Network Evaluation}\label{subsec:net-arch-evaluation}
\subsubsection{Generalization to Variable Crowd Densities}
\begin{figure}[!ht]
    \centering
    \includegraphics[width=0.8\linewidth]{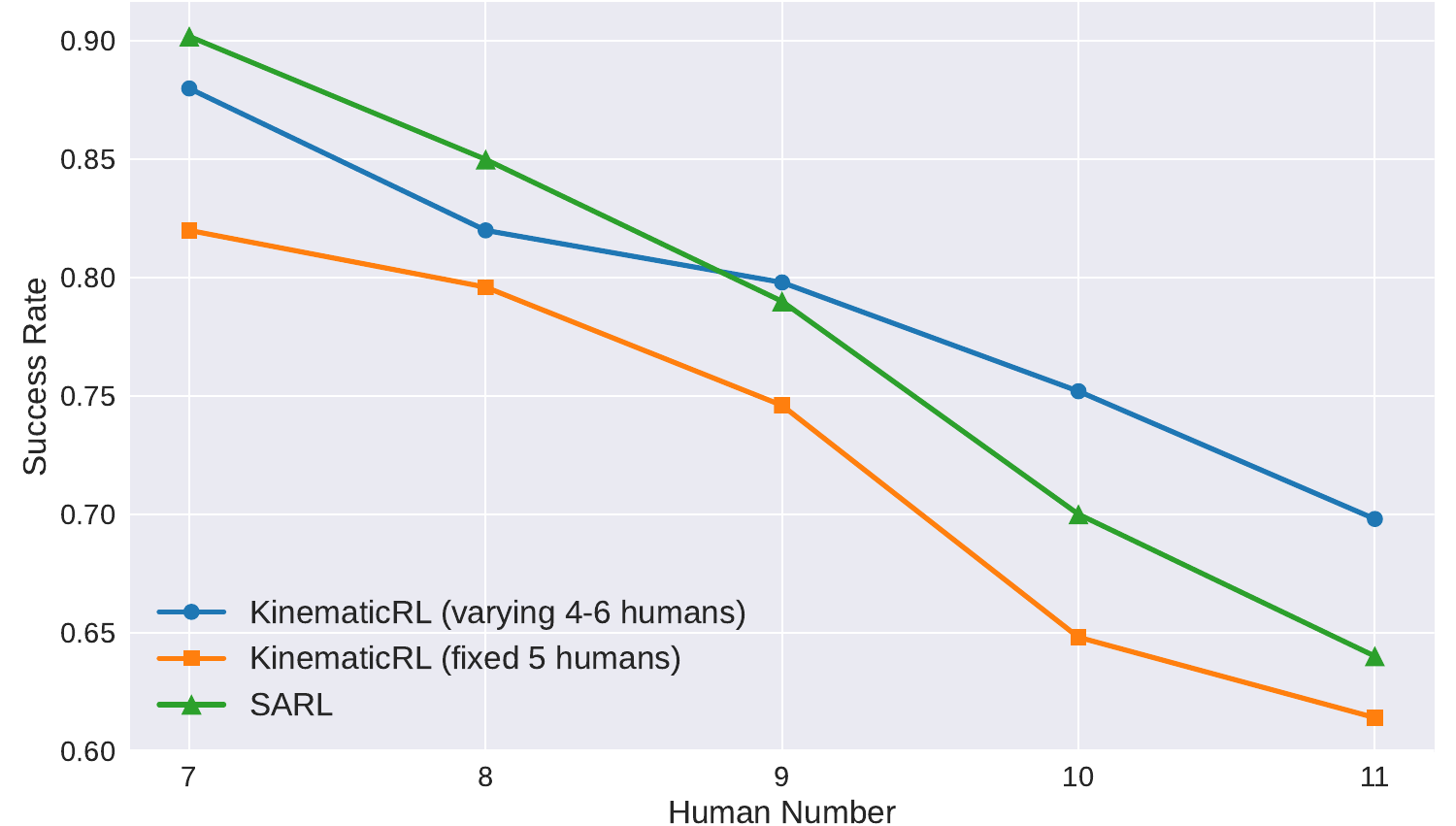}
    \caption{Performance of gated transformer across varying crowd densities. KinematicRL trained with varying 4-6 humans outperforms SARL when crowd size exceeds 9, whereas KinematicRL trained under fixed 5-human setting underperforms both methods. These results show self-attention scales more effectively than global attention in dense crowds and varying-cardinality training is crucial for generalization.}
    \label{fig:generalization-variable-density}
\end{figure}
Fig.~\ref{fig:generalization-variable-density} shows the tendency of success rate under varying crowd densities. Notably, KinematicRL trained with varying 4-6 humans outperforms SARL when human number exceeds 9, and decrease slower compared to SARL, indicating self-attention scales more effectively than global attention as crowd size increases. However, KinematicRL trained with a fixed 5-human setting underperforms both methods. As analyzed in Section~\ref{subsec:network-arch}, training with fixed number of humans creates a significant distribution shift that manifests as significant performance degradation when the model is queried on crowd densities outside the training distribution. Comparatively, training with varying number of humans enables the network to handle inputs of different lengths during training, thereby improving generalization.
\subsubsection{Robustness to Sensing Inaccuracies}
\begin{figure}[!ht]
    \centering
    \includegraphics[width=1.0\linewidth]{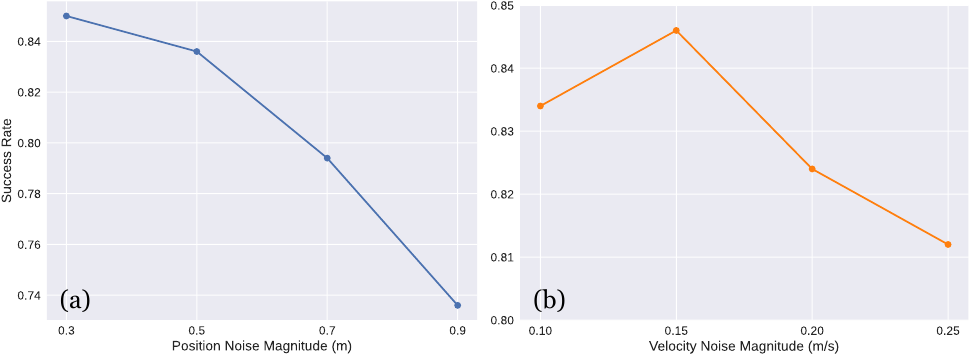}
    \caption{Robustness of gated transformer to position and velocity noise. The network remains robust for position noise up to 0.5m and across the full range of velocity noise tested. (a) Success rate under varying position noise magnitudes. (b) Success rate under varying velocity noise magnitudes.}
    \label{fig:net-noise-robustness}
\end{figure}
Within simulation, we assume access to perfect human positions and velocities, whereas in real world these measurements are subject to errors. To simulate this error, we train the policy under perfect sensing, and inject different noise magnitudes during test phase. Fig.~\ref{fig:net-noise-robustness} evaluates the robustness of our network under such setting. The network maintains high success rates for position errors up to 0.5m, with a significant drop at 0.7m, which is well beyond errors of current detectors such as Li2Former~\cite{li2former}. Performance remains robust across the full range of velocity noise tested. This is consistent with the position noise results: with a simulation timestep of 0.25s, a 0.3m position error corresponds to an unrealistic 1.2m/s velocity error for a tracking algorithm. Within the tested velocity noise range, the maximum induced position error is only 0.06m, which has negligible impact on network performance. In conclusion, the proposed architecture demonstrates robust performance across realistic ranges of position and velocity noise.

\subsubsection{Ablation Study}

We aim to answer how much variation is necessary for training gated transformer, along with the effectiveness of proposed network modules.
\paragraph{Scenario-Based Domain Randomization}
\begin{figure}[!ht]
    \centering
    \includegraphics[width=1.0\linewidth]{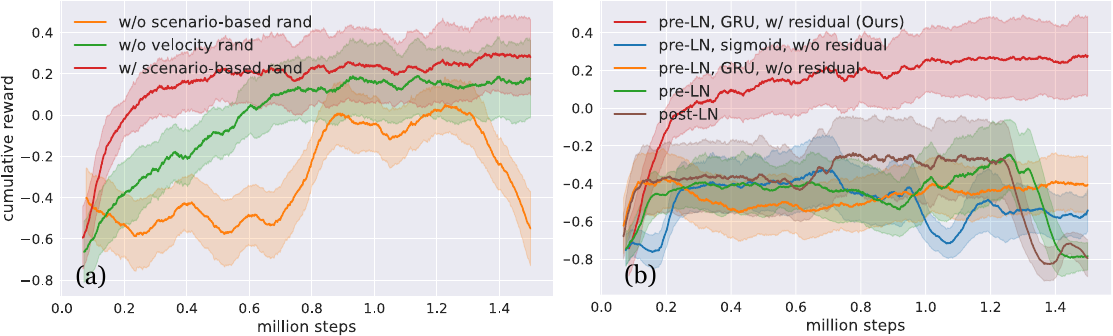}
    \caption{Ablation study. (a) Effectiveness of scenario-based domain randomization. The network fails to learn without sufficient randomization, and performance degrades when human velocities are not randomized. (b) Ablation of network design choices. Removing any of the three key design components---forward layer normalization, GRU-based gating, or modified residual connection---significantly hinders the learning process.}
    \label{fig:ablation-study}
\end{figure}
Earlier results showed that varying the number of humans is essential for generalization, and here we keep human number fixed to further ablate the effect of scenario-based domain randomization, which randomizes human positions, velocities and behavior patterns to increase state-space entropy. Ablation results (Fig.~\ref{fig:ablation-study}(a)) reveal that this increased stochasticity is indispensable for training transformers, as the limited variability in human motion provides insufficient cues for self-attention to differentiate among agents. The network learns fine without velocity randomization, but we find adding it leads to better final performance, likely because it produces more distinguishable motion patterns for self-attention to exploit.
\paragraph{Network Modules}
The key design components of our network include forward layer normalization (pre-LN), GRU-based gating, and modified residual connection (w/ residual). We ablate each of these components by comparing reward curves in Fig.~\ref{fig:ablation-study}(b). To offer a direct comparison with GRU, we design a simplified gating with sigmoid modulation on the output stream, which replaces Eq.~\eqref{eq:gru-forward-equation} with
\begin{equation}
h_{l+1}=x_{l}+ \sigma (W_{g}x_{l}+b_{g}) \odot h_{l}
\label{eq:simplified-output-modulation}
\end{equation}
where $\sigma$ is the sigmoid function. We refer to this baseline as ``pre-LN, sigmoid, w/o residual'', denoting a pre-LN Transformer with sigmoid gating and without modified residual connections. Bias term $b_{g}$ is initialized to $-5.0$ to enforce Markovian initialization. Since only our full model successfully learns to solve the task, we conclude that removing any of these three components hinders the learning process.
\begin{table}[!htb]
    \centering
    \caption{Ablation study on temporal encoder} \label{tab:net-ablation-temporal-encoder}
    \begin{tabular}{cccccc}
    \hline
    Structure & $SR$ & $CR$ & $\mu T$ & $DF$ \\ \hline
    Temporal First & 0.81 & 0.19  &  12.01$\scriptstyle \pm 4.10$ & \textbf{0.02}$\scriptstyle \pm 0.05$ \\
    Spatial Only & 0.78 & 0.22 & 9.90$\scriptstyle \pm 3.48$ & 0.05$\scriptstyle \pm 0.08$ \\ 
    \cellcolor[HTML]{EFEFEF}Spatial + Temporal (Ours)
    & \cellcolor[HTML]{EFEFEF}\textbf{0.84} 
    & \cellcolor[HTML]{EFEFEF}\textbf{0.15}
    & \cellcolor[HTML]{EFEFEF}\textbf{9.76}$\scriptstyle \pm 2.96$ 
    & \cellcolor[HTML]{EFEFEF}0.03$\scriptstyle \pm 0.06$  \\\hline
    \end{tabular}
\end{table}
\par We also report ablation results for the temporal encoder in Table~\ref{tab:net-ablation-temporal-encoder} (metrics are averaged over 500 test cases). The studied models include: (1) Temporal encoder before spatial encoder (``Temporal First''); (2) Replacing temporal encoder with another spatial encoder (``Spatial Only''). Performance improves when the spatial encoder precedes the temporal encoder, suggesting that stable spatial features provide a stronger foundation for extracting temporal relations. Using spatial encoders exclusively does achieve competitive performance, potentially because GRU suffices as a short-horizon memory mechanism. However, temporal encoder provides more scalable and explicit modeling of temporal information, which is reflected in higher success rate achieved by the full model.
\subsection{Tracking Pipeline Evaluation}
\begin{figure}[!htb]
    \centering
    \includegraphics[width=1.0\linewidth]{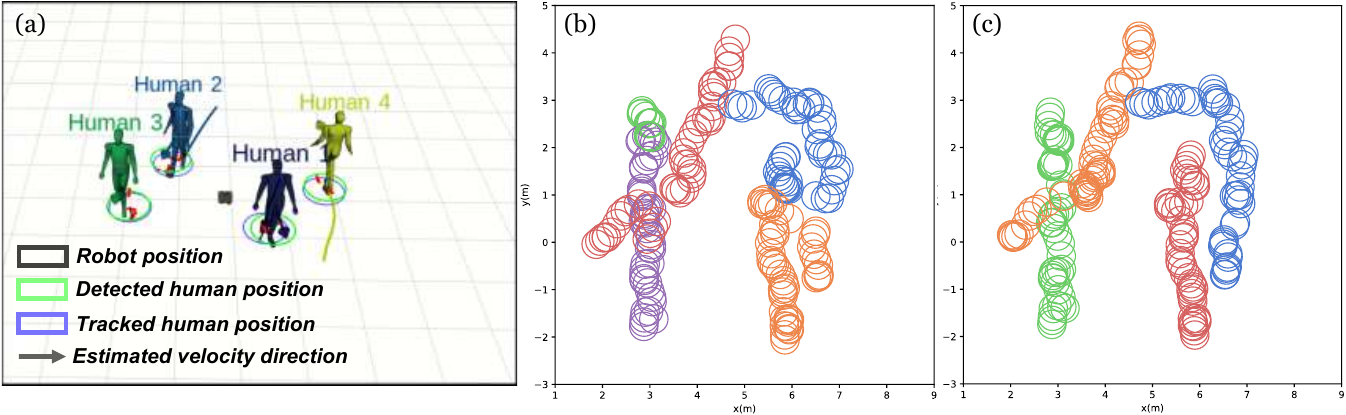}
    \caption{Tracking experiment setup and matching result under different velocity-similarity weights. Circles represent denoised positions of detected humans. Without the velocity term ($\alpha=0.0$), the tracker frequently swaps identities (e.g., blue and orange trajectories in (b)), whereas incorporating velocity similarity ($\alpha=0.4$) maintains consistent identities. (a) Scene with spatially proximate humans. (b) $\alpha=0.0$. (c) $\alpha=0.4$.}
    \label{fig:tracking-setup-result}
\end{figure}
Effectiveness of tracking pipeline is tested in simulation with 4 humans, with test scene 
containing occlusions and close spatial distance (example shown in Fig.~\ref{fig:tracking-setup-result}(a)). Random noise is added to human point cloud, and Li2Former~\cite{li2former} is utilized for human detection. Due to the presence of noise, each method 
are evaluated for 3 times. 
\begin{table}[!htb]
    \centering
    \caption{Mean squared error for velocity}\label{tab:mse-velocity}
    \begin{tabular}{c|c|c|c|c}
    \hline
    Configurations & $K(\cdot)$ & $Avg(K(\cdot))$ & $SG(K(\cdot))$ & $B(K(\cdot))$ \\ \hline
    $\alpha=0$ & 0.26$\pm$0.13 & \cellcolor[HTML]{EFEFEF} 0.18$\pm$0.07 & 2.37$\pm$1.47 & 0.99$\pm$0.42 \\ \hline
    $\alpha=0.4$ & 0.17$\pm$0.07 & \cellcolor[HTML]{EFEFEF} \textbf{0.11}$\pm$0.06 & 0.65$\pm$0.75 & 0.23$\pm$0.19 \\ \hline
    \end{tabular}
\end{table}
\subsubsection{Matching Robustness}
For matching, effect of considering temporal features is mainly investigated, with weight $\alpha=0$ representing matching purely with spatial similarity. 
Matching results are shown in Fig.~\ref{fig:tracking-setup-result}(b)(c). When omitting temporal features, algorithm struggles to distinguish humans that are spatially close (e.g., human 1 and human 4 in Fig.~\ref{fig:tracking-setup-result}(a)), which is evidenced by wrong swapping of blue and orange circle, overlapping positions of red and purple circles, and the creation of multiple clusters when tracking the same human. In contrast, when incorporating weights on velocities, algorithm effectively distinguishes 4 humans.
\subsubsection{Velocity Estimation Accuracy}
\begin{figure*}[!ht]
    \centering
    \begin{tabular}{@{}m{0.02\textwidth}@{}ccc@{}}
        & \textbf{Past Trajectory} & \textbf{RViz Visualization} & \textbf{Real World} \\[5pt]
        \hline \\[-1.5ex]
        \adjustbox{valign=c}{\begin{sideways}\makebox[0pt][c]{(a)}\end{sideways}} &
        \includegraphics[height=0.15\textheight,valign=c]{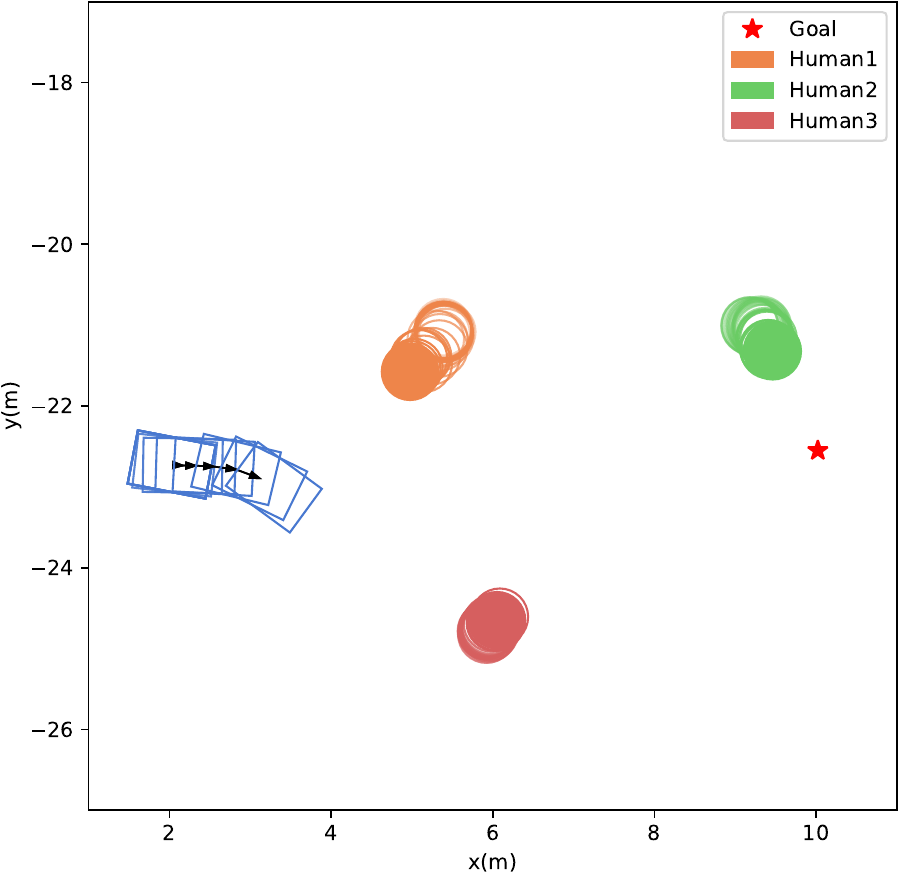} &
        \includegraphics[height=0.15\textheight,valign=c]{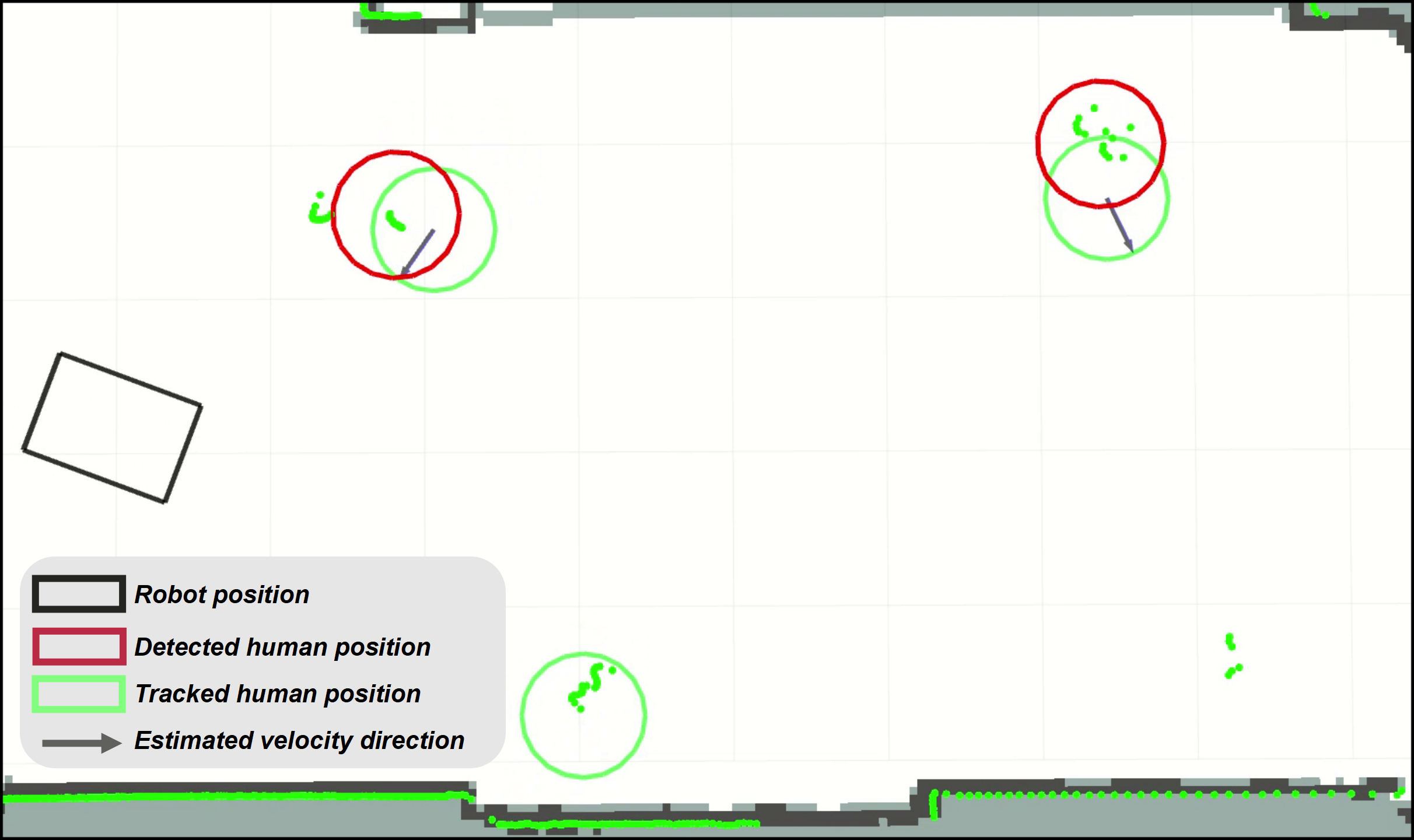} &
        \includegraphics[height=0.15\textheight,valign=c]{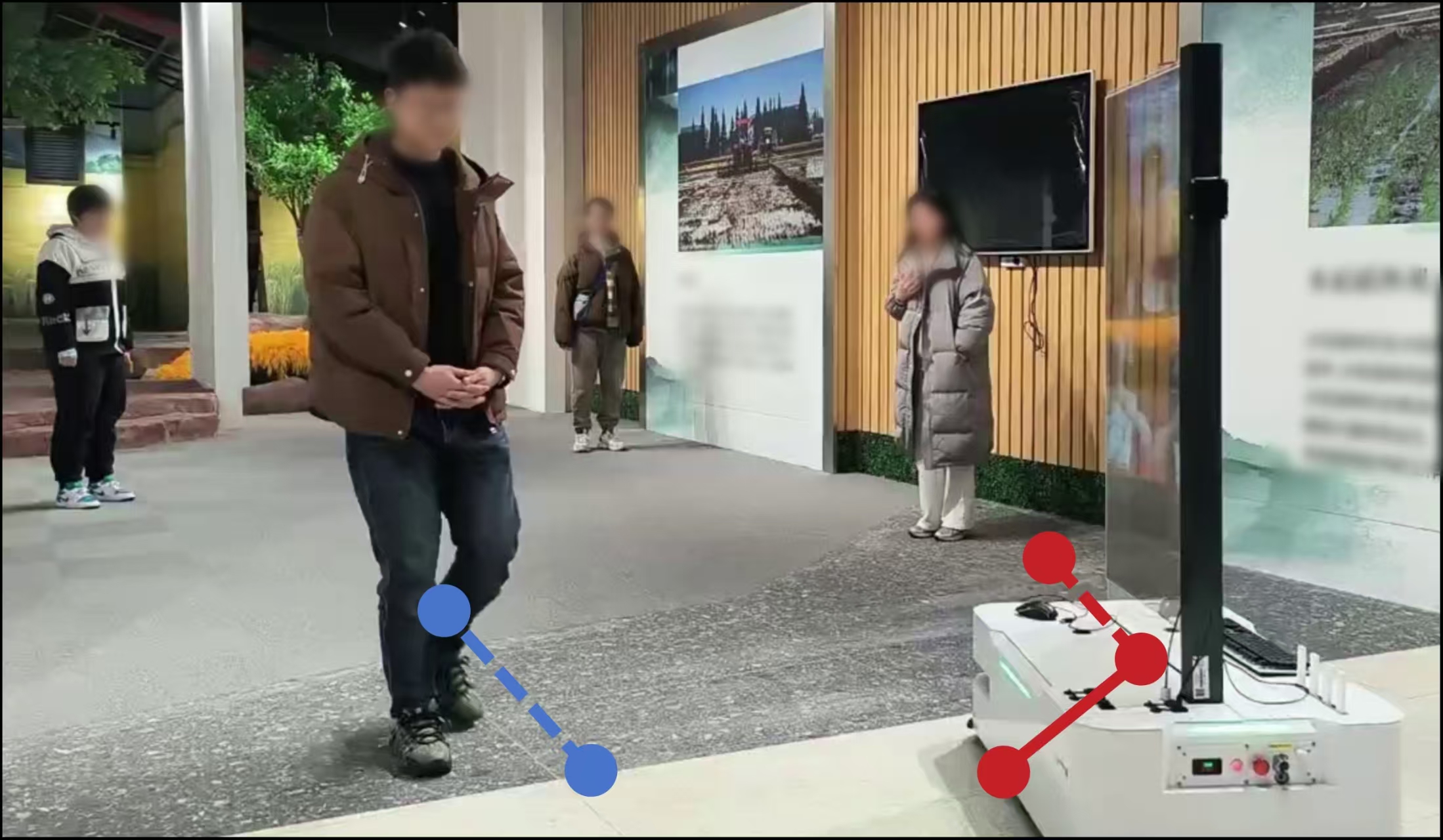} \\[20pt]
        \adjustbox{valign=c}{\begin{sideways}\makebox[0pt][c]{(b)}\end{sideways}} &
        \includegraphics[height=0.15\textheight,valign=c]{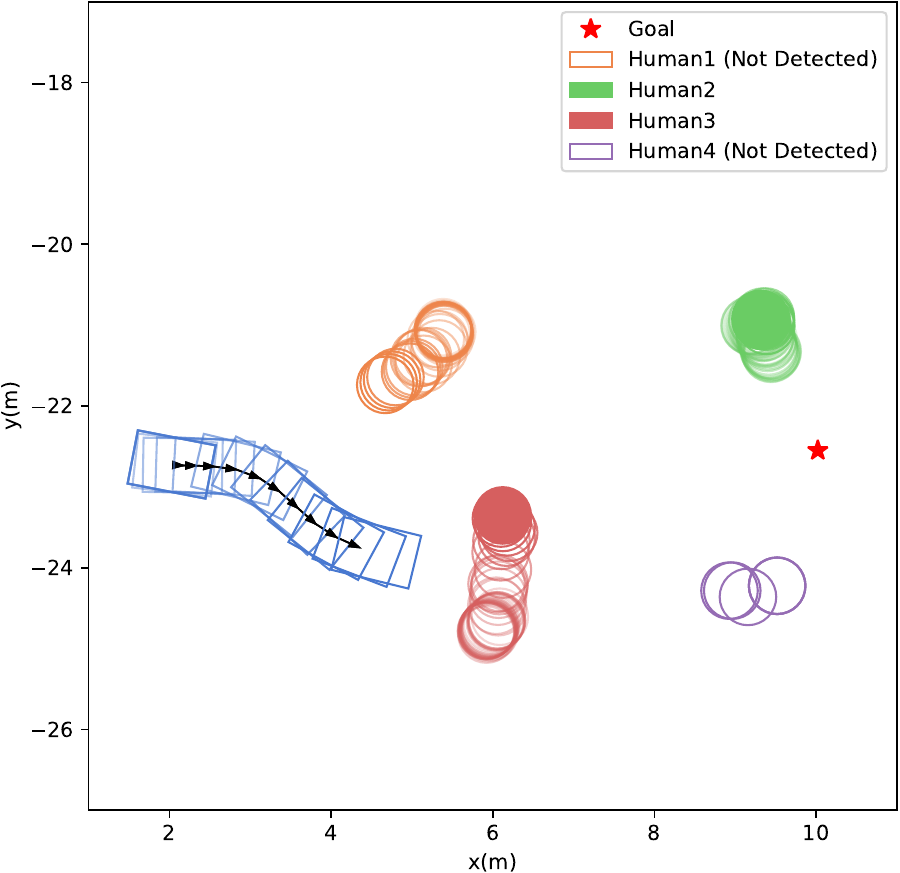} &
        \includegraphics[height=0.15\textheight,valign=c]{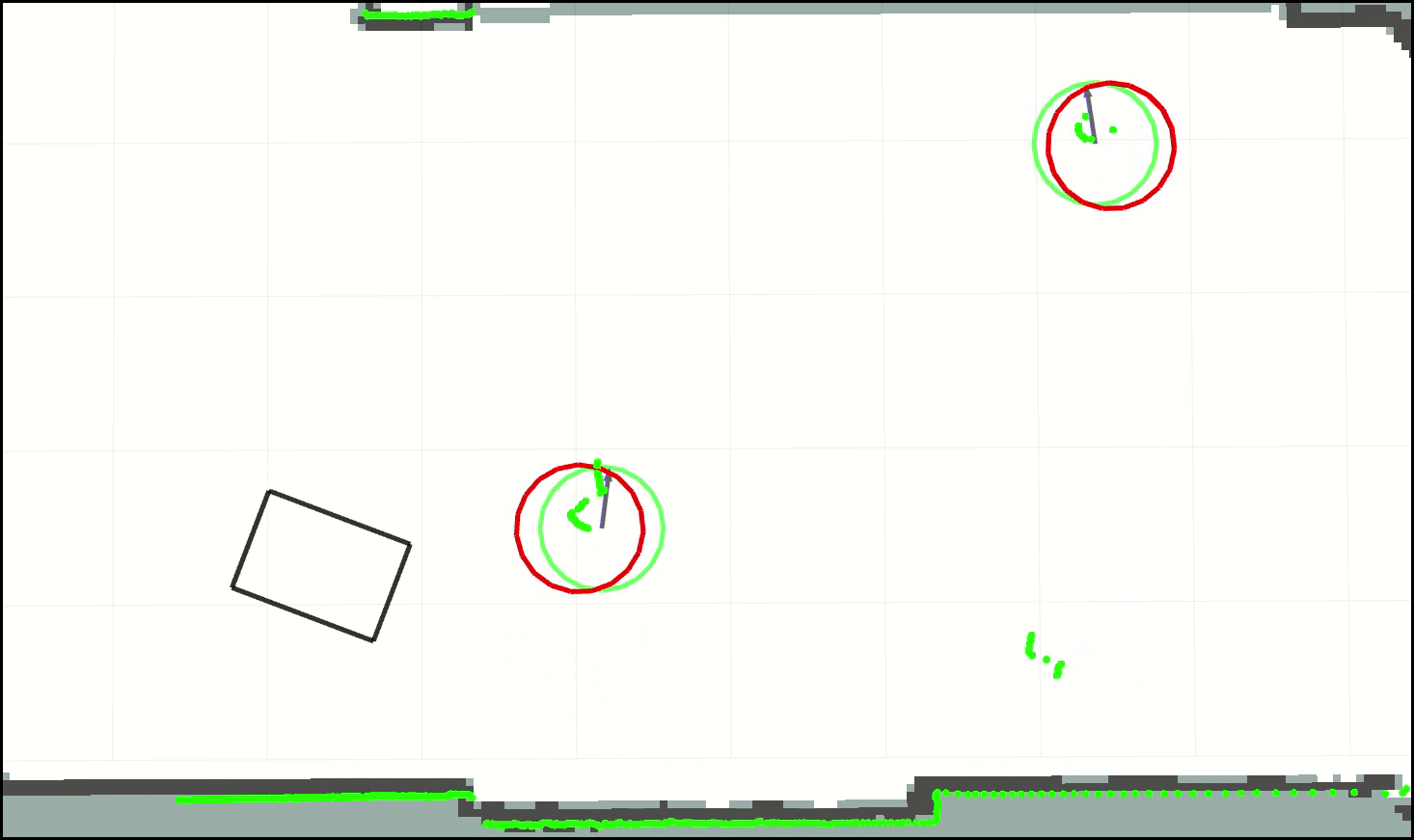} &
        \includegraphics[height=0.15\textheight,valign=c]{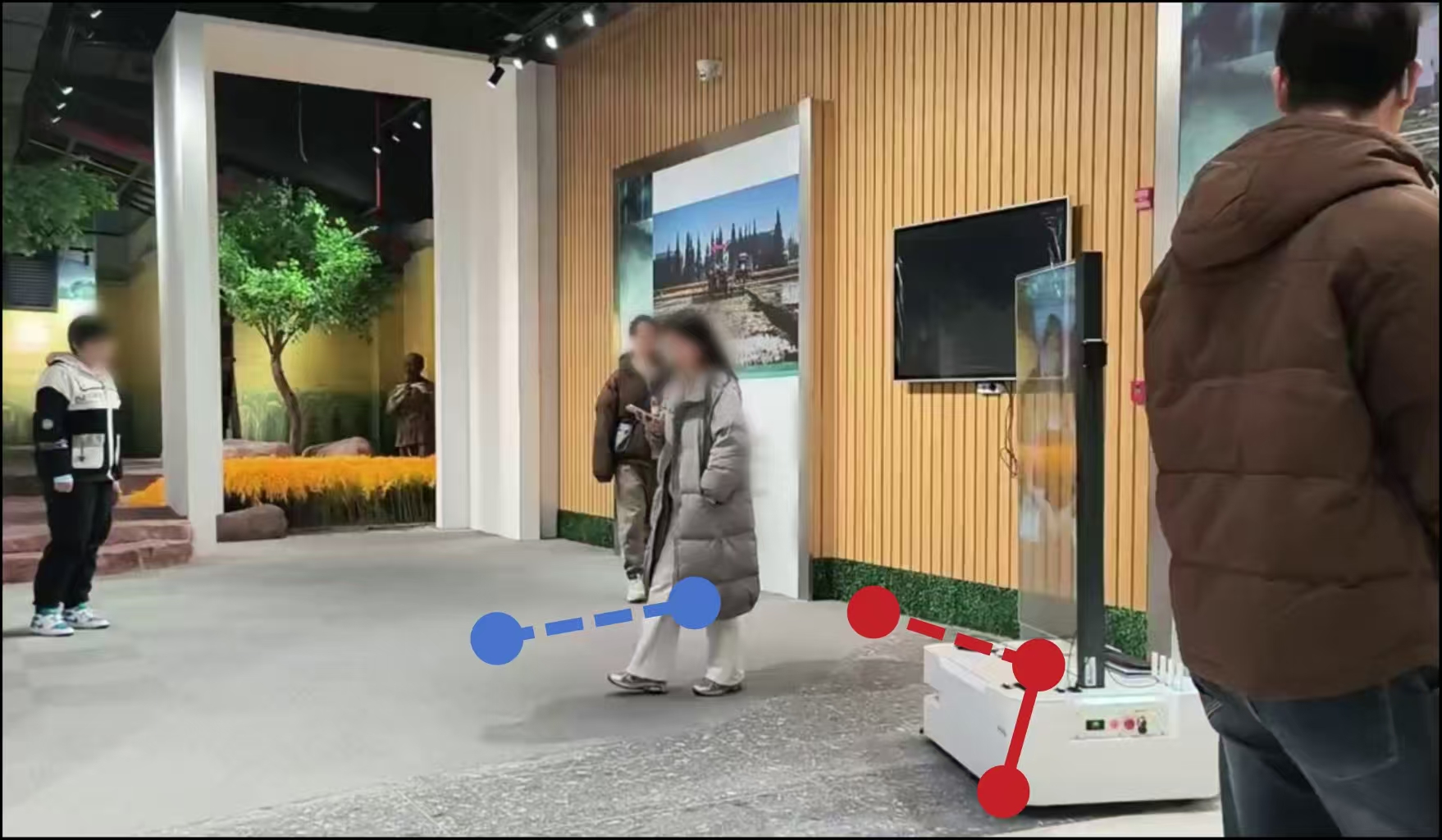} \\[20pt]
        \adjustbox{valign=c}{\begin{sideways}\makebox[0pt][c]{(c)}\end{sideways}} &
        \includegraphics[height=0.15\textheight,valign=c]{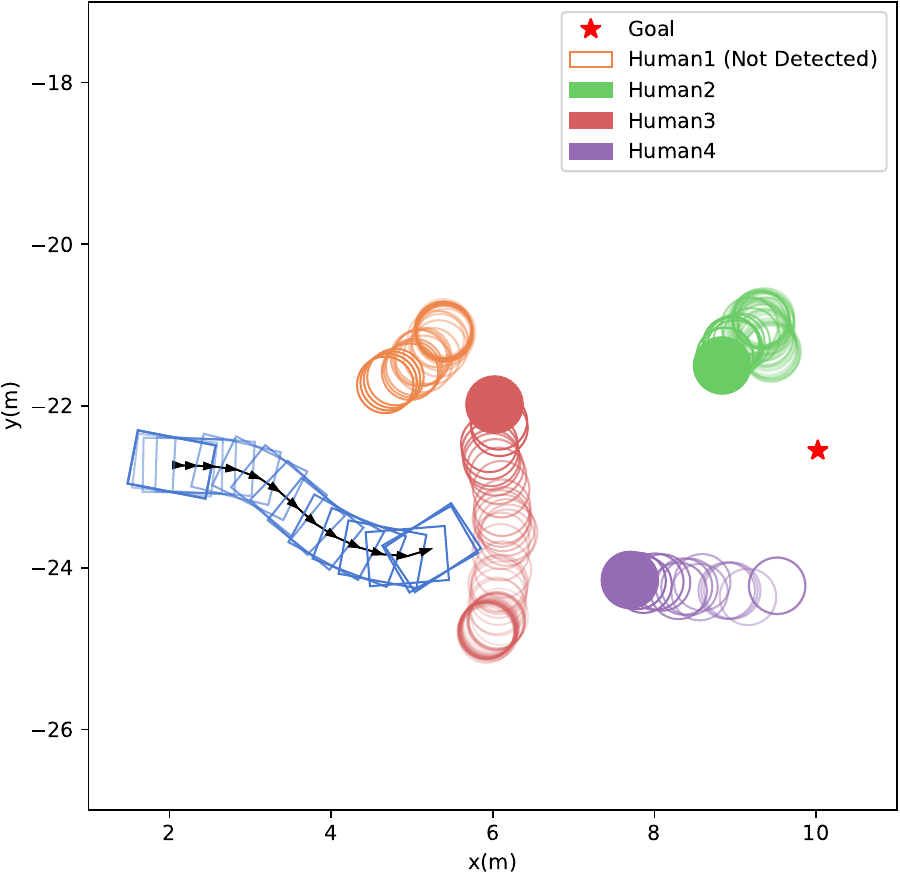} &
        \includegraphics[height=0.15\textheight,valign=c]{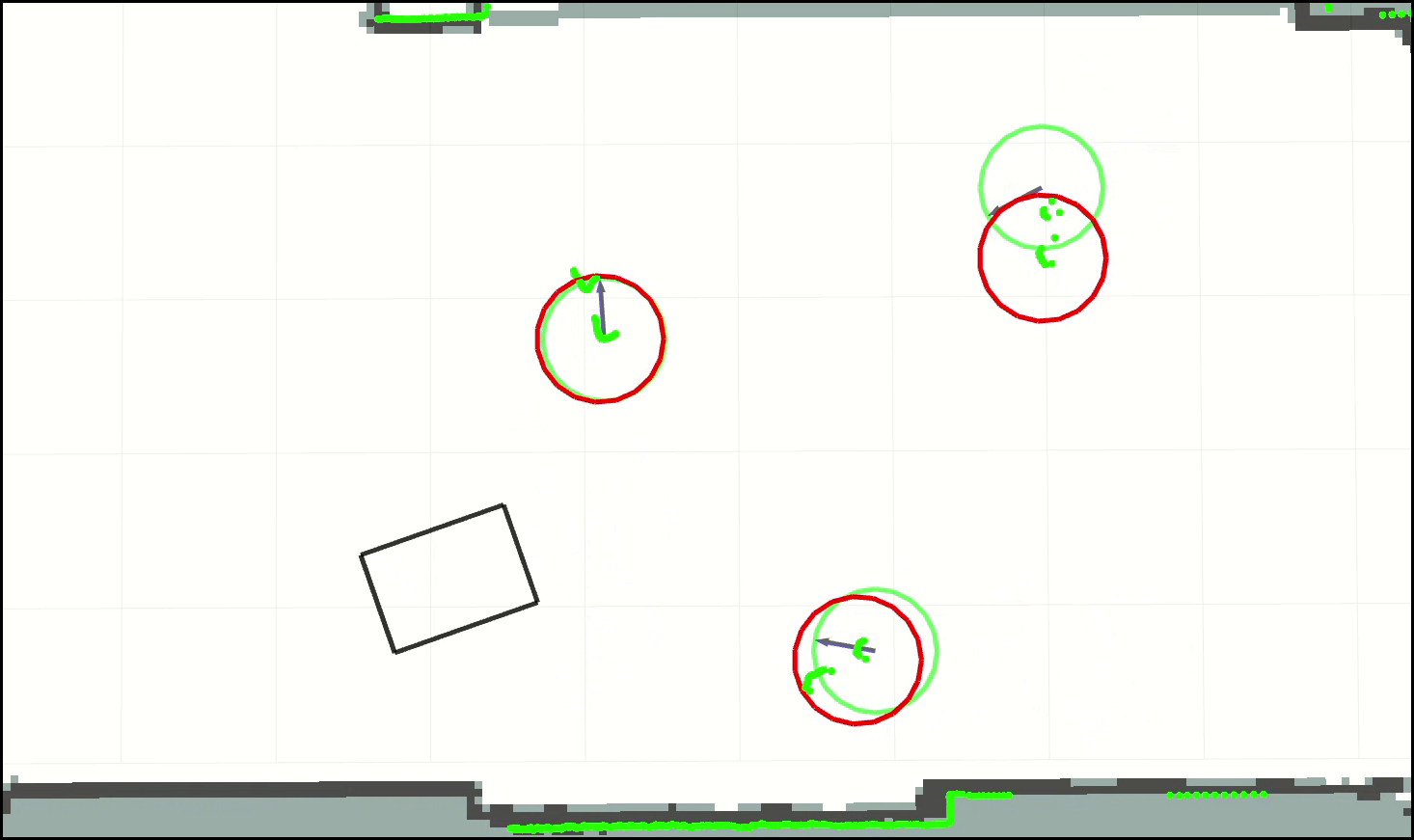} &
        \includegraphics[height=0.15\textheight,valign=c]{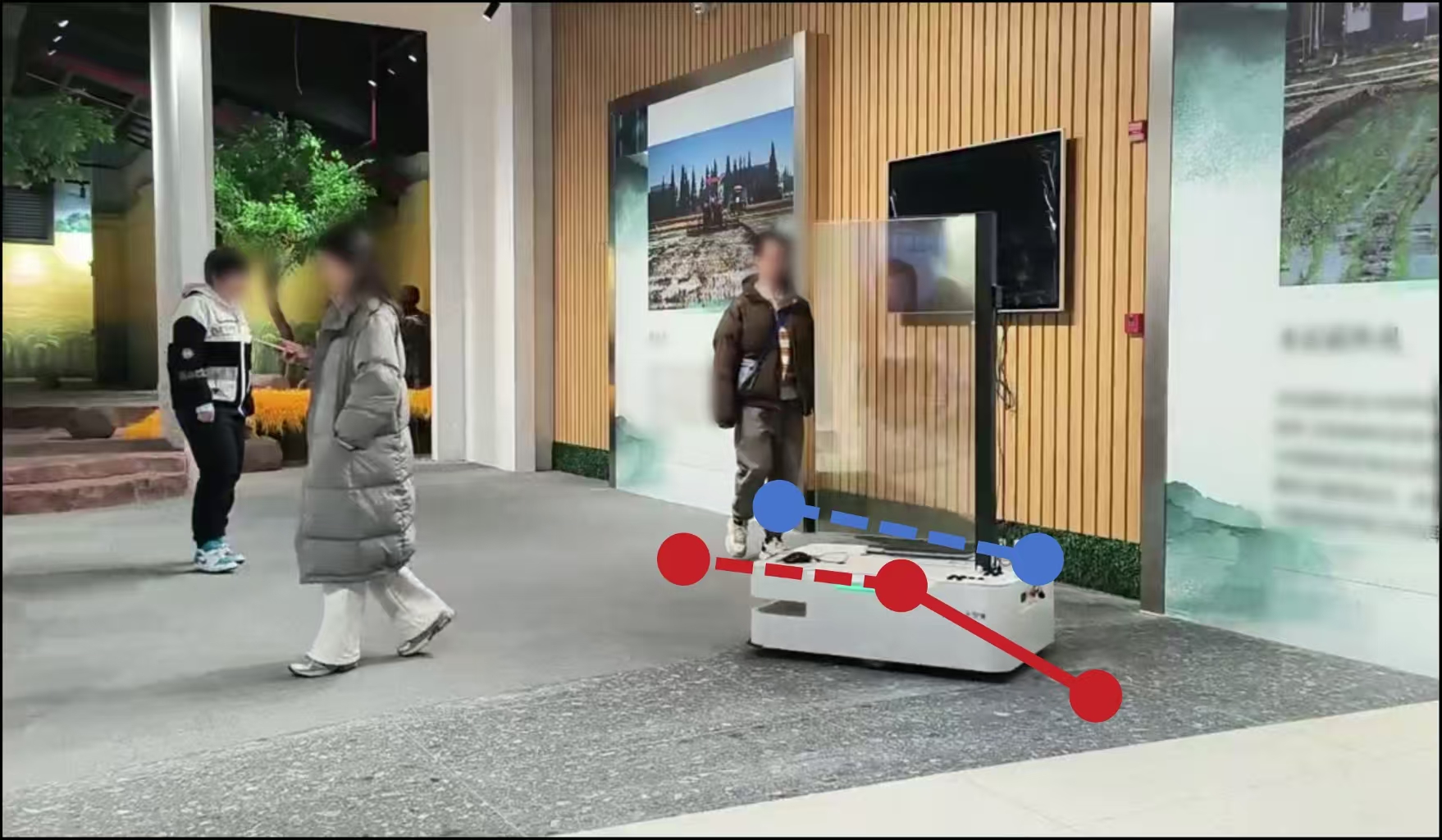} \\[20pt]
        \adjustbox{valign=c}{\begin{sideways}\makebox[0pt][c]{(d)}\end{sideways}} &
        \includegraphics[height=0.15\textheight,valign=c]{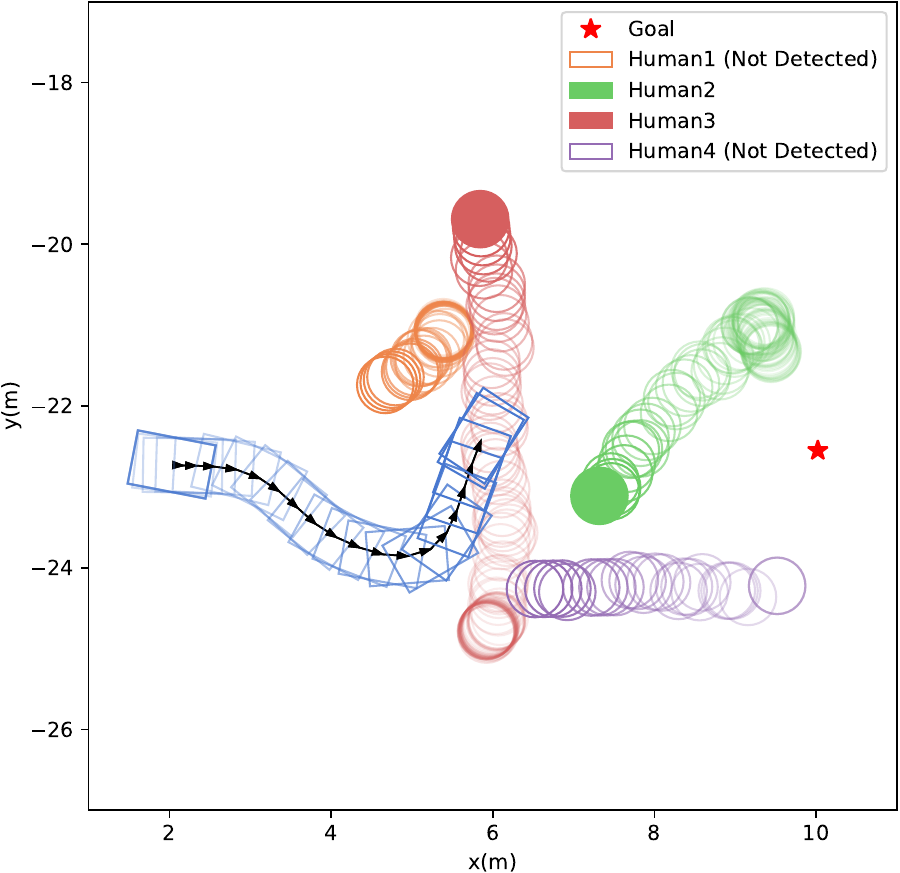} &
        \includegraphics[height=0.15\textheight,valign=c]{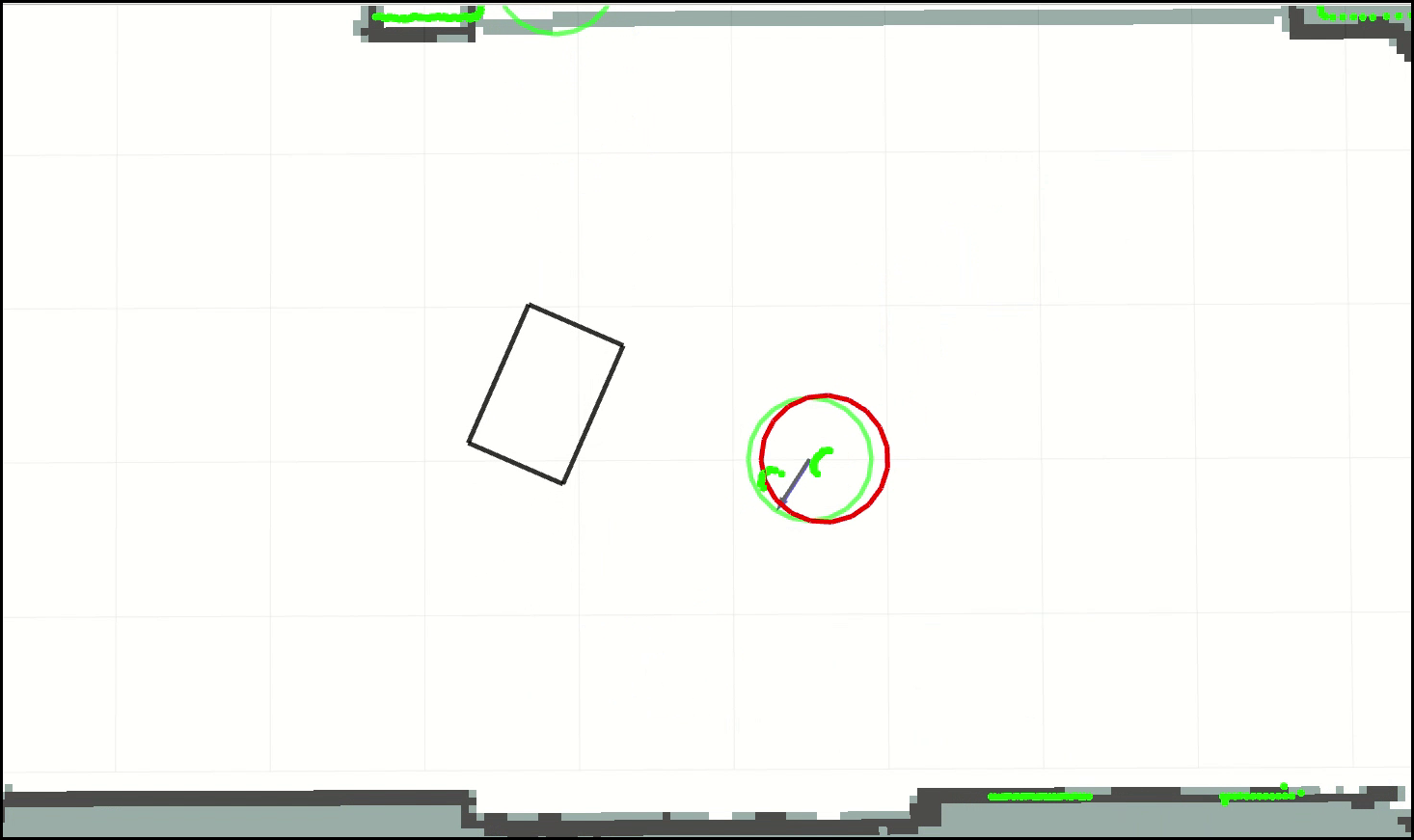} &
        \includegraphics[height=0.15\textheight,valign=c]{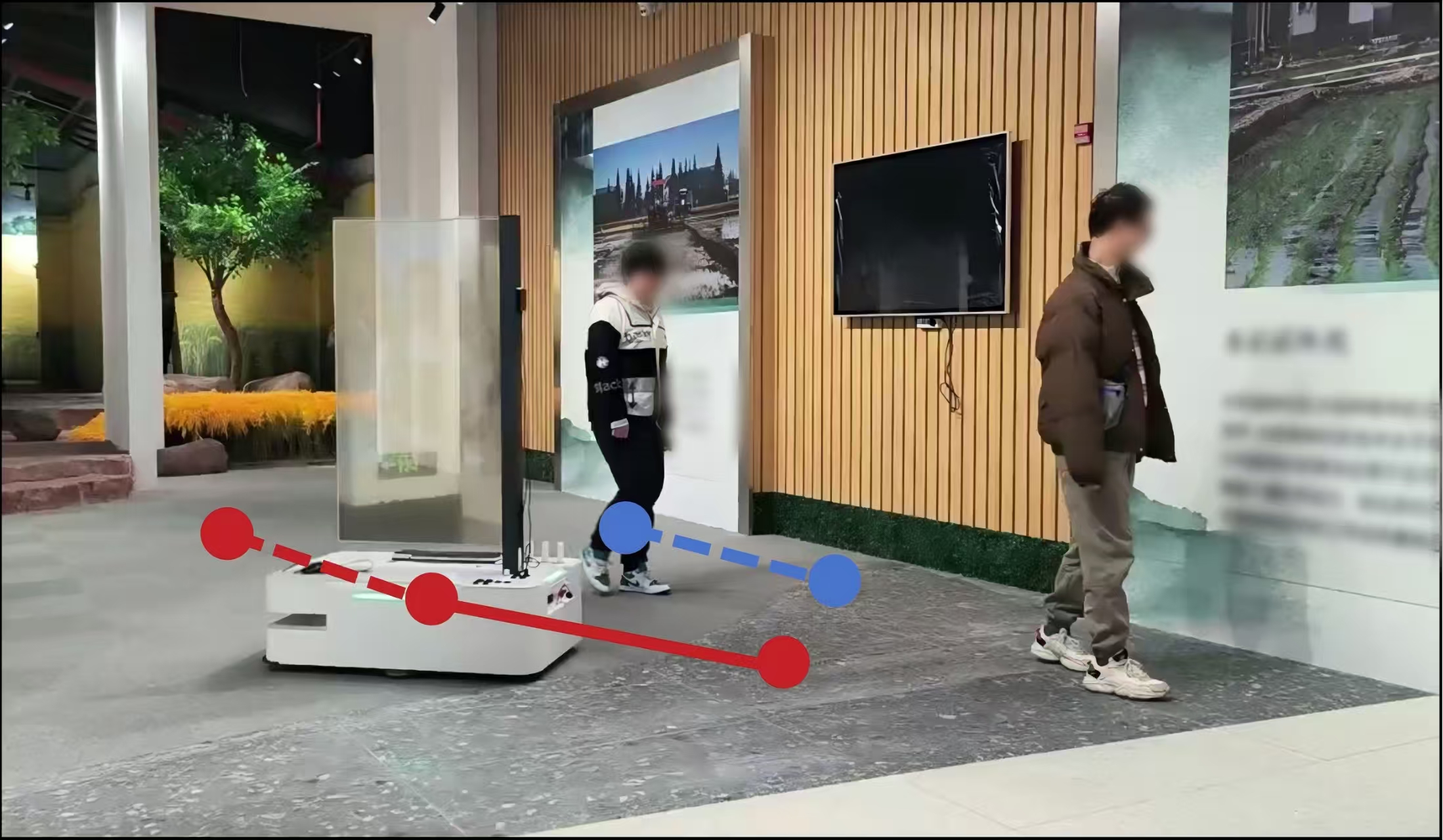} \\
    \end{tabular}
    \caption{Real world validation of the proposed model. Three columns represent past trajectory, RViz visualization and real world scenes, respectively. Each row illustrates interactions regarding each human. Real circles are Li2Former detections, and green circles are filtered results from tracking algorithm. Arrows indicate estimated orientations, and circles mark human and robot motion, with solid lines and dashed lines representing past and future motions, respectively. The robot demonstrates proactive and socially compliant collision-avoidance behaviors, successfully navigating past all humans.}
    \label{fig:real-world-experiments-sim-to-real}
\end{figure*}
Velocity estimation can be seen as a function mapping from observation to state, and function chosen are Kalman Filter (denote as $K(\cdot)$), Savitzky-Golay Filter (denote as $SG(\cdot)$), B\'{e}zier Regression (proposed by~\cite{bezier-predictor}, denote as $B(\cdot)$) and Historical Average Filter (our proposal in Eq.~\eqref{eq:weighted-historical-average}, 
denote as $Avg(\cdot)$). Mean squared error for velocity under different methods are shown in Table~\ref{tab:mse-velocity}. Our method achieves the lowest error of 0.11m/s, well within the acceptable range for network architecture (Fig.~\ref{fig:net-noise-robustness}). Further analysis of the results is provided below.
\paragraph{Polynomial Versus Historical Average}
Historical averaging produces significantly more accurate estimate compared with polynomial methods ($SG(\cdot)$ and $B(\cdot)$). The presence of noise in detected human positions often causes overlapping measurements, posing challenges for curve-fitting methods. Moreover, when curves are fitted to these noisy measurements, the velocity estimates derived from first-order derivatives become unreliable. For constrained optimization method like B\'{e}zier Regression, 
results tend to be better, but still underperforms the proposed method.
\paragraph{Improved Robustness with Velocity-Based Clustering}
For all estimation methods, using velocity-based matching reduces errors by improving data association. This highlights how maintaining accurate human position estimates is essential for reliable velocity estimation. Our proposal, which combines velocity-based matching with historical averaging, produces the best result, since it maintains historical human information and further reduces noise in velocity through averaging.

\subsection{Real-World Experiments}
\par Apart from simulation, we transfer the learned policy onto a real robot with differential constraints. The mobile robot is equipped with a WLR-716-Mini 2D LiDAR ($270^{\circ}$ scanning range), and an ARM Cortex-A57 (64-bit) processor operating at 1.43 GHz, which is responsible for executing the model as well as low-level action commands. Since this embedded platform does not include a GPU, inference of Li2Former is performed on a laptop with a GeForce RTX 2060 GPU at 17Hz (scan messages are published at 20Hz). We additionally deployed the entire system on a robot equipped with a Jetson Orin Nano 8GB and observed that, under a 30Hz scan rate, detection frequency can reach 26Hz. The tracking and planning modules introduce negligible additional latency, and operate at 10Hz on the robot. Empirically, this control frequency is adequate for mitigating jerk-related approximation errors, though our system allows higher control frequency. When no human is detected, robot navigates towards the goal with Proportional-Derivative (PD) control.
\par When training policy for real-world deployment, human numbers are randomized between 1 and 6, with preferred velocities uniformly sampled from $[0.3, 0.8]$ m/s to better approximate real-world scenario. Maximum linear and angular velocities, maximum linear and angular accelerations within simulation are configured to match the robot's hardware specifications. We choose conservative velocity and acceleration limits to address inertia-related challenges.
\par The test scene consists of four humans, with first and third human being cooperative, while others are non-cooperative. The robot exhibits collision avoidance maneuvers proactively, resulting in smooth trajectories. Due to limited sensing range and occlusions, number of detected humans varies across time frames, and the overall pipeline demonstrates robustness in this dynamic environment. Interactions regarding each human are shown in Fig.~\ref{fig:real-world-experiments-sim-to-real}. More real-world validations are available in our video presentation at \href{https://youtu.be/Z4af55A_xic}{\textcolor{blue}{https://youtu.be/Z4af55A\_xic}}.

%% file: contents/conclusion.tex
\section{Conclusion}\label{sec:conclusion}
\par In this work, a unified framework is presented to bridge the sim-to-real gap in social navigation through three core contributions: a higher-order control formulation with practical implementation, a LiDAR-only pedestrian tracking pipeline, and a gated spatio-temporal transformer architecture capable of adapting to time-varying crowd densities. The necessity of enforcing dynamic feasibility during training is highlighted, and both theoretical analysis and a practical training framework are provided to justify and enable second-order control for differential drive robots. A cluster-based tracking pipeline relying solely on 2D LiDAR is developed, employing a spatial-velocity distance metric and temporal aggregation to achieve robust data association and smooth velocity estimation. Furthermore, an unbiased residual gating block is introduced to balance reaction- and memory-based behaviors in social navigation, improving stability under variable crowd sizes. Extensive evaluations in both simulation and real-world environments demonstrate that the proposed policy, KinematicRL, achieves superior kinematic performance and can be deployed on physical robots with minimal adaptation.
\par Several limitations remain and motivate future work. First, although the proposed framework improves kinematic performance, collision rates increase in highly constrained scenarios, suggesting the need to incorporate explicit safety guarantees~\cite{second-order-cvar},~\cite{agile-but-safe}. Such approaches may also help address interactions with static obstacles and provide an additional layer of protection in highly occluded environments where Li2Former may produce uncertain detections. Second, the perception module introduces non-negligible computational overhead during deployment. Future work will investigate model compression techniques for Li2Former, such as quantization~\cite{qdetr}, to improve inference efficiency and facilitate deployment on resource-constrained edge platforms.

%% file: contents/appendix/appendix_tracking_error.tex
\renewcommand{\theequation}{A.\arabic{equation}}
\renewcommand\arraystretch{1.0}
\setcounter{equation}{0}
\setcounter{theorem}{0}
\section{Tracking error for order-n control}\label{appendix:tracking-error}
\begin{theorem}
Let control input $\mathbf{u}(t)=x^{(n)}(t)$ be piecewise constant 
with bounded step changes $\left\lVert \Delta \mathbf{u}\right\rVert <\infty$ at discrete times. 
Then tracking error for 
$x^{(k)}(t)$ over time step $\Delta t$ is $O\left(\frac{\Delta t^{n-k}}{(n-k)!}\right)$.
\end{theorem}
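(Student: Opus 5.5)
The plan is to fix a single sampling interval $[t_0,t_0+\Delta t]$ and compare two trajectories that start from the same state $\mathbf{x}(t_0)=[x^{(0)}(t_0),\ldots,x^{(n-1)}(t_0)]^{\top}$. The first is the simulated one, in which $x^{(n)}$ jumps instantaneously to the commanded value $\mathbf{u}$ (this is the Dirac assumption of Definition~\ref{def:order-n-control}). The second is the actual one, in which the actuator realizes some $\tilde{x}^{(n)}(t)$ that lags the command. The tracking error at the $n$-th level is $e^{(n)}(t)=x^{(n)}(t)-\tilde{x}^{(n)}(t)$. Because the command is piecewise constant with bounded jumps $\lVert\Delta\mathbf{u}\rVert<\infty$, and any physically realizable actuator response lies between the old and new commanded values, I will use the uniform bound $\lVert e^{(n)}(t)\rVert\le M:=\sup\lVert\Delta\mathbf{u}\rVert$ on the interval.

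Next, I propagate this error down the chain of integrators. Both trajectories share the same lower-order initial conditions at $t_0$, so for each $k<n$ Taylor's theorem with integral remainder (repeated integration, or Cauchy's formula for iterated integrals) gives
\begin{equation*}
e^{(k)}(t_0+\Delta t)=\int_{t_0}^{t_0+\Delta t}\frac{(t_0+\Delta t-s)^{n-k-1}}{(n-k-1)!}\,e^{(n)}(s)\,ds .
\end{equation*}
Bounding the integrand by $M$ and integrating the polynomial kernel yields
\begin{equation*}
\lVert e^{(k)}(t_0+\Delta t)\rVert\le M\,\frac{\Delta t^{\,n-k}}{(n-k)!},
\end{equation*}
which is exactly the claimed $O\!\left(\Delta t^{n-k}/(n-k)!\right)$. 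For $k=n$ the statement reduces to the trivial bound $M$. The order-$n$ state itself, which contains derivatives up to $n-1$, is therefore matched to within $O(\Delta t)$ in its top component and to progressively higher powers in the lower ones.

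The main obstacle is making the modelling assumption on the real actuator precise enough that the first step is legitimate. The integral identity and the resulting bound are routine. The difficulty is that the theorem speaks only of the commanded signal, while the bound needs $e^{(n)}$ to be bounded and integrable on the interval. I would state explicitly that the realized $n$-th derivative is a bounded, Riemann-integrable response whose deviation from the command is at most the step change $\lVert\Delta\mathbf{u}\rVert$; a monotone first-order lag satisfies this. I would also remark that the same proof bounds the error locally, per step, and that it does not control the accumulation of error across many steps. Interpreting ``over time step $\Delta t$'' in that local sense keeps the claim exactly as stated.
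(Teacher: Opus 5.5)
Your proposal is correct and is essentially the paper's argument. The paper writes the sim-versus-real discrepancy as the error system $\dot{\mathbf{e}}=\mathbf{A}_n\mathbf{e}+\mathbf{B}_n(\delta_u+d)$ with $\mathbf{e}(0)=\mathbf{0}$ and uses nilpotency of $\mathbf{A}_n$ to obtain exactly your kernel $(t-\sigma)^{n-k-1}/(n-k-1)!$, which you get equivalently from Cauchy's repeated-integration formula; the only differences are that the paper also carries a bounded disturbance $d(t)$, giving the constant $\sup\lVert\Delta\mathbf{u}\rVert+D_{max}$ in place of your $M$, while you state explicitly the actuator-boundedness and zero-initial-error assumptions that the paper leaves implicit.
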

\begin{proof}
Let $\mathbf{u}_d(t)$ and $\mathbf{u}_n(t)$ denote the desired control command and the 
actual control input applied to the system over control period $\Delta t$, respectively. Note desired command $\mathbf{u}_d(t)$ is piecewise constant, and over control period 
$[0,\Delta t]$ this constant will be denoted as $\mathbf{u}_d$. 
For ideal model $\mathcal{M}_{\mathcal{I}}$ within simulation, desired input is achieved instantly, so state transition is given by:
\begin{equation}
\mathcal{M}_{\mathcal{I}}: \ \dot{\mathbf{x}}_{\mathcal{I}}(t)=\mathbf{A}_n\mathbf{x}_d(t)
+\mathbf{B}_n\mathbf{u}_d
\end{equation}
with the state-space matrices defined as:
\begin{equation}
\mathbf{A}_n = \begin{bmatrix}
0 & 1 & 0 & \cdots & 0 \\
0 & 0 & 1 & \cdots & 0 \\
\vdots & \vdots & \vdots & \ddots & \vdots \\
0 & 0 & 0 & \cdots & 1 \\
0 & 0 & 0 & \cdots & 0
\end{bmatrix}, \quad 
\mathbf{B}_n = \begin{bmatrix} 0 \\ 0 \\ \vdots \\ 0 \\ 1 \end{bmatrix}
\end{equation}
For real-world model $\mathcal{M}_{\mathcal{R}}$, state transition is given by:
\begin{equation}
\mathcal{M}_{\mathcal{R}}: \ \dot{\mathbf{x}}_{\mathcal{R}}(t)=\mathbf{A}_n\mathbf{x}_n(t)
+\mathbf{B}_n(\mathbf{u}_n(t)+d(t))
\end{equation}
Here $d(t)$ accounts for external disturbances such as friction, and is 
bounded by $D_{max}$. Then error dynamics with disturbance can be modeled as a 
first-order differential equation:
\begin{equation}
    \begin{aligned}
    \dot{\mathbf{e}}(t)&=\dot{\mathbf{x}}_{\mathcal{I}}(t)-\dot{\mathbf{x}}_{\mathcal{R}}(t)\\
    &= \mathbf{A}_n\mathbf{e}(t)+\mathbf{B}_n (\delta_u(t)+d(t))
    \end{aligned}
    \label{eq:error-ode}
\end{equation}
with $\mathbf{e}(t)$ being the state error and $\delta_u(t)$ being the control error. 
Without loss of generality, assume $\mathbf{e}(0)=\mathbf{0}$ and $\mathbf{u}_{n}(0)=\mathbf{0}$. Solution to ordinary differential Eq.~\eqref{eq:error-ode} at time $t\in [0,\Delta t]$ is:
\begin{equation}
\mathbf{e}(t) = \int_0^t e^{\mathbf{A}_n(t-\sigma)} 
\mathbf{B}_n (\delta_u(\sigma) + d(\sigma)) d\sigma
\end{equation}
Denote $t-\sigma$ as $\tau$, 
since $\mathbf{A}_n$ is nilpotent such that $\mathbf{A}_{n}^n=\mathbf{0}$, matrix exponential series $e^{\mathbf{A}_n\tau}\mathbf{B}_n$ evaluates to:
\begin{equation}
(\mathbf{I})\mathbf{B}_n + (\tau)\mathbf{A}_n \mathbf{B}_n+ \ldots+
\left(\frac{\tau^{n-1}}{(n-1)!}\right)\mathbf{A}_n^{n-1} \mathbf{B}_n
=\begin{bmatrix}
\frac{\tau^{n-1}}{(n-1)!} \\
\vdots \\
\tau \\
1
\end{bmatrix}
\end{equation}
Thus, error in the $k$-th derivative, $\mathbf{e}_k(t)$, can be calculated as:
\begin{equation}
\mathbf{e}_k(t) = \int_0^t \frac{(t-\sigma)^{n-k-1}}{(n-k-1)!} 
(\delta_u(\sigma) + d(\sigma)) d\sigma
\end{equation}
Evaluate the error at the end of the control period $t=\Delta t$, and applying 
triangle inequality, yields:
\begin{equation}
\left\lVert \mathbf{e}_k(t)\right\rVert \leq 
\int_0^{\Delta t} \frac{(\Delta t-\sigma)^{n-k-1}}{(n-k-1)!} 
\left(\left\lVert \delta_u(\sigma)\right\rVert  + 
\left\lVert d(\sigma)\right\rVert \right) d\sigma
\end{equation}
Solve the integral, then substitute the bounds for control input 
change and external disturbances, we obtain the final result:
\begin{equation}
\left\lVert \mathbf{e}_k(t)\right\rVert\leq 
\left(\sup \left\lVert \Delta \mathbf{u}\right\rVert
+D_{max}\right)\frac{(\Delta t)^{n-k}}{(n-k)!}=
O\left(\frac{(\Delta t)^{n-k}}{(n-k)!}\right)
\end{equation}
\end{proof}

\begin{remark}
Although our proof does not rely on the precise evolution of control error $\delta_u(t)$ and just require it to be bounded, it can be instructive to consider a concrete example. Assume actuator lag can be represented as a first-order system (low-pass filter):
\begin{equation}
\dot{\mathbf{u}}_n(t)=\frac{1}{\tau_a}\left(\mathbf{u}_d-\mathbf{u}_n(t)\right)
\label{eq:model-actuator-first-order-system}
\end{equation}
where $\tau_a$ is the actuator time constant, with a smaller $\tau_a$ representing 
a faster actuator. Under the initial conditions, solution to this ordinary differential equation is:
\begin{equation}
\delta_u(t)=\mathbf{u}_de^{-\frac{t}{\tau_a}},\quad t\in [0,\Delta t]
\end{equation}
Intuitively, this shows that control error decays exponentially over the control interval, but we choose to neglect it for the ideal model in simulation, causing the sim-to-real gap. This assumption becomes less problematic as control order $n$ increases, based on the above theorem.
\end{remark}

%% file: contents/appendix/appendix_maximum_entropy.tex
\section{Stochastic iLQR as Maximum Entropy Controller}\label{appendix:maximum-entropy-control}
\begin{theorem}
Under linear dynamics and quadratic cost, solution to problem~\eqref{eq:maximum-entropy-control} is a stochastic Gaussian policy:
\begin{equation}
    \mathbf{u}_t\sim\mathcal{N}(\hat{\mathbf{u}}_t
    +\mathbf{K}_{t}(\mathbf{x}_t-\hat{\mathbf{x}}_t)+\mathbf{k}_t,
    Q_{\mathbf{u},\mathbf{u}t}^{-1})
\end{equation}
where $\mathbf{K}_{t}=-Q_{\mathbf{u,u}t}^{-1}Q_{\mathbf{u,x}t},\mathbf{k}_{t}=-Q_{\mathbf{u,u}t}^{-1}Q_{\mathbf{u}t}$, with Q and value functions satisfying the following recurrence relation:
\begin{equation}
    \begin{aligned}
    Q_{\mathbf{xu}t}&=\ell_{\mathbf{xu}t}+
    f_{\mathbf{xu}t}^{\top}V_{\mathbf{x}t+1}+ 
    f_{\mathbf{xu}t}^{\top}V_{\mathbf{x},\mathbf{x}t+1}f_{ct} \\
    Q_{\mathbf{xu},\mathbf{xu}t}&=\ell_{\mathbf{xu},\mathbf{xu}t}+f_{\mathbf{xu}t}
    ^{\top}V_{\mathbf{x,x}t+1}f_{\mathbf{xu}t} \\
    V_{\mathbf{x}t}&=Q_{\mathbf{x}t}-Q_{\mathbf{u,x}t}^{\top}Q_{\mathbf{u,u}t}^{-1}
    Q_{\mathbf{u}t} \\
    V_{\mathbf{x,x}t}&=Q_{\mathbf{x,x}t}-Q_{\mathbf{u,x}t}^{\top}Q_{\mathbf{u,u}t}^{-1}
    Q_{\mathbf{u,x}t}
    \end{aligned}
\end{equation}
\end{theorem}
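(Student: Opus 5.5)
The plan is to argue by backward induction on $t$ (dynamic programming), keeping the value function quadratic at every step. Under the linear–quadratic assumptions \eqref{eq:linear-quadratic-assumptions}, I will show that if $V(\mathbf{x}_{t+1})$ is quadratic in $\delta\mathbf{x}_{t+1}$, then $V(\mathbf{x}_t)$ is quadratic in $\delta\mathbf{x}_t$. The base case $t=T$ is immediate: the soft value is just the minimised entropy-regularised quadratic cost. The recursion for the entropy-augmented objective \eqref{eq:maximum-entropy-control} is
\begin{equation*}
V(\mathbf{x}_t)=\min_{\pi(\cdot|\mathbf{x}_t)}\ \mathbb{E}_{\pi}\left[Q(\mathbf{x}_t,\mathbf{u}_t)\right]-\mathcal{H}(\pi(\cdot|\mathbf{x}_t)),
\qquad
Q(\mathbf{x}_t,\mathbf{u}_t)=\ell(\mathbf{x}_t,\mathbf{u}_t)+V(f(\mathbf{x}_t,\mathbf{u}_t)).
\end{equation*}

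First, I substitute the linearised dynamics $\delta\mathbf{x}_{t+1}=f_{ct}-\hat{\mathbf{x}}_{t+1}+f_{\mathbf{xu}t}^{\top}[\delta\mathbf{x}_t;\delta\mathbf{u}_t]$ into the quadratic expansion of $V_{t+1}$. Adding the quadratic expansion of $\ell$ and matching first- and second-order terms in $[\delta\mathbf{x}_t;\delta\mathbf{u}_t]$ should reproduce the two $Q$ recurrences in \eqref{eq:standard-ilqr-backward-pass}. In particular, the cross term $f_{\mathbf{xu}t}^{\top}V_{\mathbf{x},\mathbf{x}t+1}f_{ct}$ comes from the constant offset of the linearised dynamics. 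This is a routine chain-rule computation, but I expect the bookkeeping of the offset (the paper's convention that $f_{ct}$ plays the role of the deviation) to be the fussiest part to state cleanly.

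Second, I solve the inner minimisation over distributions. This is the step that yields stochasticity. The functional $\mathbb{E}_\pi[Q]-\mathcal{H}(\pi)$ is minimised by the Boltzmann distribution $\pi(\mathbf{u}_t|\mathbf{x}_t)\propto\exp(-Q(\mathbf{x}_t,\mathbf{u}_t))$. I will show this by writing the objective as $D_{KL}(\pi\Vert e^{-Q}/Z)-\log Z$ and using nonnegativity of KL (Gibbs' inequality). Since $Q$ is quadratic in $\delta\mathbf{u}_t$ with Hessian $Q_{\mathbf{u,u}t}\succ 0$, completing the square gives a Gaussian. Its mean is the minimiser $\delta\mathbf{u}_t=-Q_{\mathbf{u,u}t}^{-1}(Q_{\mathbf{u}t}+Q_{\mathbf{u,x}t}\delta\mathbf{x}_t)$, which equals $\mathbf{K}_t\delta\mathbf{x}_t+\mathbf{k}_t$, and its covariance is $Q_{\mathbf{u,u}t}^{-1}$. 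This gives \eqref{eq:stochastic-gaussian}. The positive-definiteness of $Q_{\mathbf{u,u}t}$ is assumed, which is consistent with the Hessian regularisation discussed in the main text.

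Finally, I plug the optimal policy back in to get $V(\mathbf{x}_t)=-\log Z(\mathbf{x}_t)=\min_{\mathbf{u}}Q-\tfrac12\log\det(2\pi Q_{\mathbf{u,u}t}^{-1})$. The log-determinant term does not depend on $\mathbf{x}_t$, so $V$ stays quadratic in $\delta\mathbf{x}_t$, with the same gradient and Hessian as the deterministic minimum over $\mathbf{u}$. Computing that Schur complement gives the stated $V_{\mathbf{x}t}$ and $V_{\mathbf{x,x}t}$ recurrences, which closes the induction.

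The main conceptual obstacle is this last observation: the entropy term contributes only a state-independent constant, so the backward pass is identical to standard iLQR and only the policy becomes Gaussian. I would verify it carefully, noting that it relies on $Q_{\mathbf{u,u}t}$ being independent of $\mathbf{x}_t$ under the LQ assumption.
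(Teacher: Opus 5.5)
Your proposal is correct. Its skeleton matches the paper's: a backward recursion keeping $V$ quadratic, substituting the linearised transition into $V_{t+1}$ and matching coefficients for the $Q$ recurrences, then minimising at step $t$ and plugging back in for the $V$ recurrences. The real difference is how you solve the minimisation over distributions.

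The paper first proves a separate lemma by a Lagrangian/functional-derivative argument. That lemma concludes $\pi(\mathbf{u}_t|\mathbf{x}_t)\propto\exp(-\ell(\mathbf{x}_t,\mathbf{u}_t))$, with the stage cost in place of the cost-to-go. The paper then restricts to $\pi=\mathcal{N}(\mu_t,\Sigma_t)$ and evaluates the expected quadratic cost in closed form, collecting the covariance dependence in $h(\Sigma_t)=\tfrac12\operatorname{tr}(\Sigma_t\ell_{\mathbf{u},\mathbf{u}t})-\tfrac12\log|\Sigma_t|$. Finally it sets the gradients in $\mu_t$ and $\Sigma_t$ to zero.

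You instead apply the Gibbs identity $\mathbb{E}_\pi[Q]-\mathcal{H}(\pi)=D_{KL}(\pi\Vert e^{-Q}/Z)-\log Z$ directly to the soft $Q$, which already contains the future value. This has three advantages:
\begin{itemize}
\item It gives global optimality over all distributions, rather than a stationary point inside a Gaussian family.
\item It makes the separate lemma unnecessary.
\item It yields the covariance $Q_{\mathbf{u},\mathbf{u}t}^{-1}$ automatically. In the paper's parametric route, getting $Q_{\mathbf{u},\mathbf{u}t}$ rather than $\ell_{\mathbf{u},\mathbf{u}t}$ out of $\nabla_{\Sigma}$ requires the trace term to also carry the curvature propagated from $V_{t+1}$ through the dynamics; your formulation absorbs this for free.
\end{itemize}

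Your computation $V=-\log Z$ also makes explicit what the paper leaves implicit. The entropy contributes only an $\mathbf{x}$-independent log-determinant term in $Q_{\mathbf{u},\mathbf{u}t}$, so the $V$ recurrences are exactly those of deterministic iLQR. In exchange, the paper's route shows how the covariance enters the cost, in the familiar first-order-condition style of iLQR derivations.

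On the bookkeeping you flagged: the paper's appendix defines the offset as $f(\hat{\mathbf{x}}_t,\hat{\mathbf{u}}_t)-\hat{\mathbf{x}}_{t+1}$, which is exactly your reading of $f_{ct}$.
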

In order to prove Theorem~\ref{theorem:maximum-entropy-controller}, we first prove the following lemma on the form of the final solution.
\begin{lemma}\label{lemma:gaussian-distribution}
Solution $\pi$ for maximum entropy objective
\begin{equation}
    \min_{\pi} \ \sum_{t = 1}^{T} \mathbb{E}_{\pi}\left[ \ell(\mathbf{x}_t, \mathbf{u}_t) \right]
    - \mathcal{H}(\pi(\mathbf{u}_t | \mathbf{x}_t))
    \label{eq:solving-for-distribution}
\end{equation}
under linear-quadratic assumptions, is a Gaussian distribution.
\end{lemma}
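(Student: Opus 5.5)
The plan is a backward dynamic-programming induction combined with a pointwise variational argument at each stage. Under the linear-quadratic assumptions of Eq.~\eqref{eq:linear-quadratic-assumptions}, I would prove by backward induction on $t$ that the soft cost-to-go $V(\mathbf{x}_t)$ is a quadratic function of $\mathbf{x}_t$ (plus a constant). This automatically makes the soft $Q$-function $Q(\mathbf{x}_t,\mathbf{u}_t)=\ell(\mathbf{x}_t,\mathbf{u}_t)+V(f(\mathbf{x}_t,\mathbf{u}_t))$ quadratic jointly in $(\mathbf{x}_t,\mathbf{u}_t)$. The base case is $t=T$: there is no future term, so $Q_T=\ell$ is quadratic by assumption. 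For the inductive step, $V_{t+1}$ is quadratic and $f$ is affine, so the composition $V_{t+1}\circ f$ is quadratic and adding $\ell$ keeps it quadratic.

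At each stage, for fixed $\mathbf{x}_t$, the per-step subproblem is
\begin{equation*}
\min_{\pi(\cdot\mid\mathbf{x}_t)} \ \mathbb{E}_{\pi}\left[Q(\mathbf{x}_t,\mathbf{u}_t)\right]+\int \pi(\mathbf{u}\mid\mathbf{x}_t)\log\pi(\mathbf{u}\mid\mathbf{x}_t)\,d\mathbf{u}.
\end{equation*}
I would solve it with the Gibbs variational principle. Adding the normalization constraint with a Lagrange multiplier, and equivalently rewriting the objective as $D_{KL}(\pi\Vert \exp(-Q)/Z)-\log Z$, gives the unique minimizer
\begin{equation*}
\pi(\mathbf{u}_t\mid\mathbf{x}_t)\propto\exp\left(-Q(\mathbf{x}_t,\mathbf{u}_t)\right).
\end{equation*}
Since $Q$ is quadratic in $\mathbf{u}_t$, this is the exponential of a quadratic in $\mathbf{u}_t$, which is Gaussian provided the $\mathbf{u}$-Hessian $Q_{\mathbf{u},\mathbf{u}t}$ is positive definite. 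Completing the square identifies the mean as $\hat{\mathbf{u}}_t-Q_{\mathbf{u,u}t}^{-1}(Q_{\mathbf{u}t}+Q_{\mathbf{u,x}t}\delta\mathbf{x}_t)$ and the covariance as $Q_{\mathbf{u,u}t}^{-1}$, which matches Eq.~\eqref{eq:stochastic-gaussian}. Only the Gaussian form is needed for this lemma; the mean and covariance are recorded for later use.

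To close the induction, I substitute the minimizer back in, so that $V(\mathbf{x}_t)=-\log\int\exp(-Q(\mathbf{x}_t,\mathbf{u}))\,d\mathbf{u}$. For a quadratic $Q$, this Gaussian integral equals the Schur-complement quadratic in $\mathbf{x}_t$ plus the constant $-\tfrac12\log\det(2\pi Q_{\mathbf{u,u}t}^{-1})$. The constant does not depend on $\mathbf{x}_t$, so $V_t$ is again quadratic, which completes the step.

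The main obstacle is justifying positive definiteness of $Q_{\mathbf{u,u}t}$, since without it the integral diverges and no Gaussian exists. I would handle this by assuming $\ell_{\mathbf{u}\mathbf{u}t}\succ0$ and $\ell_{\mathbf{x}\mathbf{x}t}\succeq0$, consistent with the Hessian regularization described in the main text. Under these assumptions the Schur complement preserves $V_{\mathbf{x,x}t}\succeq0$, so $Q_{\mathbf{u,u}t}=\ell_{\mathbf{uu}t}+f_{\mathbf{u}t}^{\top}V_{\mathbf{x,x}t+1}f_{\mathbf{u}t}\succ0$ propagates backward through the induction. A secondary subtlety is that the entropy term couples stages only through the expectation over future states. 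I would treat it by noting that the future soft cost depends on the current choice only through the distribution of $\mathbf{x}_{t+1}$, which justifies the Bellman decomposition used above.
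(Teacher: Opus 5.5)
Your proof is correct, and it takes a more careful route than the paper's. The paper writes a single Lagrangian that contains only the stage cost $\ell(\mathbf{x}_t,\mathbf{u}_t)$ plus the normalization constraint. It sets the functional derivative to zero and concludes $\pi(\mathbf{u}_t\mid\mathbf{x}_t)\propto\exp(-\ell(\mathbf{x}_t,\mathbf{u}_t))$, which is Gaussian because $\ell$ is quadratic. That is effectively a one-stage computation: it ignores that $\mathbf{u}_t$ also affects later costs through $\mathbf{x}_{t+1}=f(\mathbf{x}_t,\mathbf{u}_t)$. Your backward induction replaces $\ell$ by the soft $Q$-function, which is what the per-stage subproblem actually contains. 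It shows that $Q$ stays quadratic because $V_{t+1}$ is quadratic and $f$ is affine. The identity $D_{KL}(\pi\Vert e^{-Q}/Z)-\log Z$ then gives global optimality and uniqueness directly, rather than through a stationarity condition. A concrete payoff is consistency with the theorem this lemma supports. The density $\exp(-Q)$ has covariance $Q_{\mathbf{u,u}t}^{-1}$, as that theorem asserts, whereas the paper's $\exp(-\ell)$ would have covariance $\ell_{\mathbf{u,u}t}^{-1}$. The paper's argument is shorter, but its Gaussian conclusion holds for the full problem only because of the quadratic-closure fact your induction proves. You also make explicit the normalizability requirement $Q_{\mathbf{u,u}t}\succ0$, which the paper leaves implicit.

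One hypothesis needs tightening. The conditions $\ell_{\mathbf{uu}t}\succ0$ and $\ell_{\mathbf{xx}t}\succeq0$ do not by themselves propagate $V_{\mathbf{x,x}t}\succeq0$ when there is an $\mathbf{x}$--$\mathbf{u}$ cross term. For scalar $\ell=\tfrac12(x^2+4xu+u^2)$, the last stage gives $V_{\mathbf{x,x}T}=1-4=-3$. With $x_{t+1}=x_t+u_t$, the preceding stage then has $Q_{\mathbf{u,u}}=1-3=-2$, so no Gaussian exists there. Assume instead that the joint Hessian $\ell_{\mathbf{xu},\mathbf{xu}t}$ is positive semidefinite and $\ell_{\mathbf{uu}t}\succ0$. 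Then $Q_{\mathbf{xu},\mathbf{xu}t}\succeq0$, $Q_{\mathbf{u,u}t}\succeq\ell_{\mathbf{uu}t}\succ0$, and the Schur complement $V_{\mathbf{x,x}t}$ is positive semidefinite. The paper's social-navigation cost has no cross term, so your stated conditions do suffice in that setting.
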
\label{lem:gaussian-distribution}
\begin{proof}
Since $\pi$ is a distribution, it satisfies:
\begin{equation}
\int \pi(\mathbf{u}_t | \mathbf{x}_t) d\mathbf{u}_t = 1
\end{equation}
Thus we construct the Lagrangian:
\begin{equation}
    \begin{aligned}
    \mathcal{L}(\pi, \lambda)&= \int \pi(\mathbf{u}_t | \mathbf{x}_t) 
    \ell(\mathbf{x}_t, \mathbf{u}_t) d\mathbf{u}_t \\ &+ 
    \int \pi(\mathbf{u}_t | \mathbf{x}_t) \log \pi(\mathbf{u}_t | \mathbf{x}_t) d\mathbf{u}_t 
    \\ &+ \lambda \left( \int \pi(\mathbf{u}_t | \mathbf{x}_t) d\mathbf{u}_t - 1 \right)
    \end{aligned}
    \label{eq:functional-Lagrangian}
\end{equation}
Take the functional derivative w.r.t distribution $\pi$ and set to zero:
\begin{equation}
\frac{\delta \mathcal{L}}{\delta \pi} = \ell(\mathbf{x}_t, \mathbf{u}_t) + 
1 + \log \pi(\mathbf{u}_t | \mathbf{x}_t) + \lambda \coloneqq 0
\end{equation}
Thus $\pi(\mathbf{u}_t | \mathbf{x}_t) \propto \exp \left( - \ell(\mathbf{x}_t, \mathbf{u}_t) \right)$. 
Since cost function $\ell$ is quadratic, this indicates the optimal 
distribution is Gaussian. 
\end{proof}
Building upon Lemma~\ref{lemma:gaussian-distribution}, we now prove original Theorem~\ref{theorem:maximum-entropy-controller}.
\begin{proof}
As proven in Lemma~\ref{lem:gaussian-distribution}, solution to Eq.~\eqref{eq:maximum-entropy-control} is a Gaussian distribution. Assume
\begin{equation}
    \pi(\delta\mathbf{u}_t|\delta\mathbf{x}_t)=\mathcal{N}(\mu_t,\Sigma_t)
\end{equation}
Entropy of this distribution is given by: 
\begin{equation}
\mathbb{E}_\pi\left[-\log \pi\left(\delta\mathbf{u}_t |\delta\mathbf{x}_t\right)\right]=
\frac{1}{2}\log |\Sigma_t|+\text{const}
\end{equation}
With the additional entropy term, cost function is augmented as:
\begin{equation}
    \begin{aligned}
    \tilde{\ell}(\mathbf{x}_t,\mathbf{u}_t)&=\mathbb{E}_{\pi}\left[\ell \left(\mathbf{x}_t, \mathbf{u}_t\right)\right]
    -\mathcal{H}\left(\pi\left(\delta\mathbf{u}_t|\delta\mathbf{x}_t\right)\right) \\
    &=\ell (\hat{\mathbf{x}}_t,\hat{\mathbf{u}}_t) + 
    \frac{1}{2}\delta \mathbf{x}_t^{\top} \ell_{\mathbf{x},\mathbf{x}t}\delta \mathbf{x}_t +  \ell_{\mathbf{x}t}^{\top}\delta \mathbf{x}_t \\
    &+\frac{1}{2}\mathbb{E}_{\pi}\left[\delta \mathbf{u}_t^{\top} \ell_{\mathbf{u}, \mathbf{x}t}
    \delta \mathbf{x}_t+\delta \mathbf{x}_t^{\top} \ell_{\mathbf{x}, \mathbf{u}t} \delta \mathbf{u}_t\right]\\
    &+\mathbb{E}_{\pi}\left[\ell_{\mathbf{u}t}^{\top}\delta \mathbf{u}_t\right]+\frac{1}{2}\mathbb{E}_{\pi}\left[
    \delta \mathbf{u}_t^{\top} \ell_{\mathbf{u}, \mathbf{u}t} \delta \mathbf{u}_t \right]\\
    &-\frac{1}{2}\log |\Sigma_t| + \text{const}
    \end{aligned}
\end{equation}
Given $\delta\mathbf{u}_t\sim\mathcal{N}(\mu_t,\Sigma_t)$, expectations 
can be analytically computed as:
\begin{equation}
    \begin{aligned}
        \mathbb{E}_{\pi}&\left[\delta \mathbf{u}_t^{\top} \ell_{\mathbf{u}, \mathbf{x}t}
        \delta \mathbf{x}_t\right]
        =\mu_t^{\top}\ell_{\mathbf{u}, \mathbf{x}t}
        \delta \mathbf{x}_t \\
        \mathbb{E}_{\pi}&\left[\delta \mathbf{u}_t^{\top} \ell_{\mathbf{u}t}\right]=
        \mu_t^{\top}\ell_{\mathbf{u}t} \\
        \mathbb{E}_{\pi}&\left[\delta \mathbf{u}_t^{\top}
        \ell_{\mathbf{u}, \mathbf{u}t} \delta \mathbf{u}_t\right]=\operatorname{tr}
        \left(\Sigma_t\ell_{\mathbf{u},\mathbf{u}t}\right)+
        \mu_t^{\top}\ell_{\mathbf{u},\mathbf{u}t} \mu_t
    \end{aligned}
\end{equation}
For notational convenience, define
\begin{equation}
    \begin{aligned}
    \mathbf{C}_t&\coloneqq
    \begin{bmatrix}
    \ell_{\mathbf{x},\mathbf{x}t} & \ell_{\mathbf{x},\mathbf{u}t} \\
    \ell_{\mathbf{u},\mathbf{x}t} & \ell_{\mathbf{u},\mathbf{u}t}
    \end{bmatrix}&
    \mathbf{c}_t&\coloneqq
    \begin{bmatrix}
    \ell_{\mathbf{x}t} \\ \ell_{\mathbf{u}t}
    \end{bmatrix} \\
    \mathbf{Q}_t&\coloneqq
    \begin{bmatrix}
    Q_{\mathbf{x},\mathbf{x}t} & Q_{\mathbf{x},\mathbf{u}t} \\
    Q_{\mathbf{u},\mathbf{x}t} & Q_{\mathbf{u},\mathbf{u}t}
    \end{bmatrix}&
    \mathbf{q}_t&\coloneqq
    \begin{bmatrix}
    Q_{\mathbf{x}t} \\ Q_{\mathbf{u}t}
    \end{bmatrix} \\
    \mathbf{F}_t&\coloneqq 
    \begin{bmatrix}
    \mathbf{f}_{\mathbf{x}t} \\ \mathbf{f}_{\mathbf{u}t}
    \end{bmatrix} & 
    \mathbf{f}_t&\coloneqq f(\hat{\mathbf{x}}_t, \hat{\mathbf{u}}_t) - \hat{\mathbf{x}}_{t+1} \\
    \mathbf{V}_t&\coloneqq V_{\mathbf{x},\mathbf{x}t} & \mathbf{v}_t&\coloneqq V_{\mathbf{x}t} \\
    \end{aligned}
\end{equation}
And further denote 
$h(\Sigma_t)\coloneqq\frac{1}{2}\operatorname{tr}\left(\Sigma_t\ell_{\mathbf{u},\mathbf{u}t}\right)-\frac{1}{2}\log|\Sigma_t|$ the 
cost term associated with covariance $\Sigma_t$. Under linear dynamics and quadratic cost, it can be shown 
that V and Q function is also quadratic in $\delta \mathbf{u}_t$. Further incorporating the augmented cost and 
replace $\delta \mathbf{u}_t$ with mean $\mu_t$, we prove that Q and V take the form of:
\begin{equation}
    \begin{aligned}
    Q(\delta\mathbf{x}_t, \delta\mathbf{u}_t) &= \text{const}+
    \frac{1}{2}
    \begin{bmatrix}
    \delta \mathbf{x}_t \\
    \mu_t
    \end{bmatrix}^{\top}\mathbf{Q}_t
    \begin{bmatrix}
    \delta \mathbf{x}_t \\
    \mu_t
    \end{bmatrix} \\ &+
    \begin{bmatrix}
    \delta\mathbf{x}_t \\ \mu_t
    \end{bmatrix}^{\top}\mathbf{q}_t
    + h(\Sigma_t) \\
    V(\delta\mathbf{x}_t)&=\text{const}+
    \frac{1}{2}
    \delta \mathbf{x}_t^{\top}\mathbf{V}_t
    \delta \mathbf{x}_t \\ &+
    \delta\mathbf{x}_t^{\top}\mathbf{v}_t
    + h(\Sigma_t)
    \end{aligned}
    \label{eq:Q-V-functions}
\end{equation}
At discrete time step $t$, choosing distribution $\mathcal{N}(\mu_t,\Sigma_t)$ to minimize Q function 
in Eq.~\eqref{eq:Q-V-functions} requires taking derivative w.r.t $\mu_t$ and $\Sigma_t$:
\begin{equation}
    \begin{aligned}
    \nabla_{\mu}Q(\delta\mathbf{x}_t, \delta\mathbf{u}_t)&=Q_{\mathbf{u},\mathbf{x}t}\delta \mathbf{x}_t
    +Q_{\mathbf{u},\mathbf{u}t}\mu_t+Q_{\mathbf{u}t}\coloneqq 0 \\
    \nabla_{\Sigma}Q(\delta\mathbf{x}_t, \delta\mathbf{u}_t)&=\frac{1}{2}Q_{\mathbf{u},\mathbf{u}t}^{\top}
    -\frac{1}{2}\Sigma_t^{-1}\coloneqq 0
    \end{aligned}
\end{equation}
which shows 
\begin{equation}
\mu_t=\mathbf{K}_t\delta \mathbf{x}_t+\mathbf{k}_t,\Sigma_t=Q_{\mathbf{u,u}t}^{-1}
\label{eq:linear-control-law}
\end{equation}
further indicating that the time-varying linear Gaussian policy 
\begin{equation}
\mathbf{u}_t\sim\mathcal{N}(\hat{\mathbf{u}_{t}}
+\mathbf{K}_{t}(\mathbf{x}_t-\hat{\mathbf{x}}_t)+\mathbf{k}_t,
Q_{\mathbf{u},\mathbf{u}t}^{-1})
\end{equation}
is the solution to problem~\eqref{eq:maximum-entropy-control}. 
\par To establish the recurrence relation, at discrete time step $t$, rewrite Q function as:
\begin{equation}
    \begin{aligned}
        Q(\mathbf{\delta x}_t, \mathbf{\delta u}_t) &= \text{const}+
        \frac{1}{2}
            \begin{bmatrix}
            \delta \mathbf{x}_t \\
            \mu_t
            \end{bmatrix}^{\top}\mathbf{C}_t
            \begin{bmatrix}
            \delta \mathbf{x}_t \\
            \mu_t
            \end{bmatrix} \\ &+
            \begin{bmatrix}
            \delta\mathbf{x}_t \\ \mu_t
            \end{bmatrix}^{\top}\mathbf{c}_t
            +V(\delta \mathbf{x}_{t+1}) + h(\Sigma_t)
    \end{aligned}
    \label{eq:Q-fn-original-form}
\end{equation}
value function $V(\delta \mathbf{x}_{t+1})$ can be obtained by utilizing 
transition equation 
\begin{equation}
\delta\mathbf{x}_{t+1}=\mathbf{F}_t
\begin{bmatrix}
    \delta \mathbf{x}_t \\ \mu_t
\end{bmatrix} + \mathbf{f}_t
\end{equation}
which indicates that
\begin{equation}
    \begin{aligned}
    V(\delta \mathbf{x}_{t+1})&=\text{const}+
    \frac{1}{2}
    \begin{bmatrix}
    \delta \mathbf{x}_t \\
    \mu_t
    \end{bmatrix}^{\top}\mathbf{F}_t^{\top}\mathbf{V}_{t+1}\mathbf{F}_t
    \begin{bmatrix}
    \delta \mathbf{x}_t \\
    \mu_t
    \end{bmatrix} \\ &+
    \begin{bmatrix}
    \delta\mathbf{x}_t \\ \mu_t
    \end{bmatrix}^{\top}\mathbf{F}_t^{\top}\mathbf{V}_{t+1}\mathbf{f}_t
    +\begin{bmatrix}
    \delta\mathbf{x}_t \\ \mu_t
    \end{bmatrix}^{\top}\mathbf{F}_t^{\top}\mathbf{v}_{t+1}
    +h(\Sigma_t)
    \end{aligned}
    \label{eq:value-function}
\end{equation}
comparing Eq.~\eqref{eq:value-function} with expression in Eq.~\eqref{eq:Q-V-functions} shows that 
\begin{equation}
    \begin{aligned}
        \mathbf{Q}_t&=\mathbf{C}_t+\mathbf{F}_t^{\top}\mathbf{V}_{t+1}\mathbf{F}_t \\
        \mathbf{q}_t&=\mathbf{c}_t+\mathbf{F}_t^{\top}\mathbf{V}_{t+1}\mathbf{f}_t+\mathbf{F}_t^{\top}\mathbf{v}_{t+1}
    \end{aligned}
\end{equation}
which aligns with previous Eq.~\eqref{eq:standard-ilqr-backward-pass}. 
Furthermore, plugging in the linear control law~\eqref{eq:linear-control-law} into Q function 
and compare with quadratic form in~\eqref{eq:Q-V-functions}, we prove that
\begin{equation}
    \begin{aligned}
    \mathbf{V}_t&=Q_{\mathbf{x,x}t}-Q_{\mathbf{x,u}t}Q_{\mathbf{u,u}t}^{-1}Q_{\mathbf{u,x}t}\\
    \mathbf{v}_t&=Q_{\mathbf{x}t}-Q_{\mathbf{x,u}t}Q_{\mathbf{u,u}t}^{-1}Q_{\mathbf{u}t}
    \end{aligned}
\end{equation}
which is the same as previous recurrence relation~\eqref{eq:standard-ilqr-backward-pass}.
\end{proof}